\documentclass[11pt,article]{article}

\usepackage[margin=1in]{geometry}
\usepackage[utf8]{inputenc}
\usepackage[T1]{fontenc}
\usepackage{lmodern}
\usepackage{microtype}
\usepackage{amsmath,amssymb,amsthm,mathtools,bm}
\usepackage{xcolor}
\usepackage{hyperref}
\usepackage{enumitem}
\usepackage{comment}

\usepackage{tikz}
\usetikzlibrary{positioning,arrows.meta}
\usetikzlibrary{calc}

\usepackage{algorithm}
\usepackage{algpseudocode}

\renewcommand{\div}{\mathrm{div} }

\newcommand{\supp }{\operatorname{supp}}

\newcommand{\R}{\mathbb{R}}

\newcommand{\KSD}{\mathrm{KSD}}

\newcommand{\nn}{\nonumber}

\newcommand{\KL}{\mathsf{KL}}

\newcommand{\brac}[1]{\left( #1\right)}
\allowdisplaybreaks

\theoremstyle{plain}
\newtheorem{theorem}{Theorem}[section]
\newtheorem{lemma}[theorem]{Lemma}

\newtheorem{proposition}[theorem]{Proposition}

\theoremstyle{definition}

\newtheorem{remark}[theorem]{Remark}
\newtheorem{assumption}[theorem]{Assumption}

\newcommand{\blue}[1]{\textcolor{blue}{#1}} 
\newcommand*{\QEDA}{\hfill\ensuremath{\square}}

\definecolor{deeporange}{RGB}{180,80,0}

\hypersetup{
  colorlinks=true,
  linkcolor=blue,
  urlcolor=teal,
  citecolor=magenta,
  pdftitle={}
}

\title{Anchored Stein Variational Gradient Descent Algorithm for Non-smooth Sampling}

	\date{October 26, 2026}

\author{ 
    	Thanh Dang  \thanks{Department of Computer Science, University of Rochester, Rochester, NY 14620. 
        (email: {\tt ycloud777@gmail.com}).}
        \qquad
        Matthew X. Burns \thanks{Department of Electrical and Computer Engineering, University of Rochester, Rochester, NY 14627 (email:
{\tt mburns13@ur.rochester.edu}).}\qquad
	 	Jiaming Liang \thanks{
        Goergen Institute for Data Science and Artificial Intelligence and Department of Computer Science, University of Rochester, Rochester, NY 14620 (email: {\tt jiaming.liang@rochester.edu}).}
	}
\begin{document}
\maketitle

\begin{abstract}
We introduce a Stein variational framework for sampling from a target density $p\propto e^{-U}$ when the target potential $U$ is possibly non-smooth. The key idea is to introduce a smoother reference potential $U_0$ and the anchor weight $a=e^{U-U_0}$. The resulting anchored Stein variational gradient descent (SVGD) algorithm is able to target $p\propto e^{-U}$ without computing the gradient of $U$ and include standard SVGD as a special case. 

In particular, we allow the target potential $U$ to be locally Lipschitz while the reference potential $U_0$ is continuously differentiable, along with other suitable assumptions. Under this low regularity regime,  we establish a Kullback-Leibler first variation formula for anchored perturbations, identify the corresponding steepest descent direction in the reproducing kernel Hilbert space and prove a nonasymptotic convergence bound for the anchored SVGD algorithm in terms of an anchored kernelized Stein discrepancy. Finally, we also provide guidance on constructing a suitable reference potential $U_0$ based on $U$. 
\end{abstract}

\section{Introduction and main result}

Stein Variational Gradient Descent (SVGD) \cite{LiuWang} is a particle
method for variational inference. Its updates follow the direction of steepest descent of the Kullback-Leibler (KL) divergence
among directions in the unit ball of a reproducing kernel Hilbert space (RKHS). In \cite{liu2017flow}, Liu established the gradient flow interpretation of
SVGD, which can be viewed as a discretized and kernelized counterpart of
Wasserstein gradient flow for KL divergence.  Kernelization makes this formulation computationally
tractable: the particle updates depend only on the target score, kernel
values and kernel derivatives, without requiring score estimation that is usually present when discretizing Wasserstein gradient flow. Moreover, this also enables the connection of SVGD and kernel methods. The latter has proved to be very useful for SVGD by 
allowing Hessian information to be incorporated through
matrix-valued kernels for preconditioning \cite{wang2019matrix}, and
kernel interactions to be evaluated on lower dimensional projections
for sampling in high dimensions \cite{gong2021sliced,liu2022grassmann}, among other examples.

Over the last decade, many algorithmic extensions of SVGD have
been developed for different sampling problems. Mirrored methods address
constrained domains \cite{shi2022mirrored}, Riemannian SVGD accommodates
manifold geometry \cite{liu2018riemannian}, and branching SVGD promotes
exploration of multimodal distributions \cite{banales2025branching}.
Stein variational Newton methods \cite{detommaso2018newton} and SVGD with
matrix valued kernels \cite{wang2019matrix} incorporate curvature and
preconditioning into the particle updates. Projected
\cite{chen2020projected}, sliced \cite{gong2021sliced}, and Grassmann
\cite{liu2022grassmann} variants use dimension reduction to address
sampling in high dimensions, while Stein variational message passing
exploits conditional dependencies in graphical models
\cite{wang2018message}. Other extensions reduce the cost of particle
interactions through random batches \cite{li2020stochastic}, train neural
networks to generate samples \cite{feng2017amortized}, construct adaptive
importance proposals \cite{han2017importance}, remove the need to
tune a learning rate \cite{sharrock2023coin}, and enable inference for
discrete distributions \cite{han2020discrete}.

We now present our anchored SVGD algorithm for the purpose of sampling from targets
with nonsmooth potentials.
\begin{algorithm}[H]
\caption{Anchored SVGD}
\label{alg:anch:svgd}
\begin{algorithmic}[1]
\Require Target $p\propto e^{-U}$ where $U$ is locally Lipschitz, anchor potential $U_0$,
         anchor weight $a(x)=e^{U(x)-U_0(x)}$, kernel $k$, initial particles $\{x_i^0\}_{i=1}^N$,
         stepsizes $\{\varepsilon_\ell\}_{\ell=0}^{L-1}$.

\For{$\ell=0,1,\dots,L-1$}

  \State Define 
  \[
    \widehat\phi_\ell(x)
    := \frac1N\sum_{j=1}^N a(x_j^\ell)\Big(\nabla_{x_j} k(x_j^\ell,x)
       - k(x_j^\ell,x) \nabla U_0(x_j^\ell)\Big),
  \]
  where $\nabla_{x_j}k(\cdot,x)$ is the gradient w.r.t.\ the first argument.

  \State Update all particles:
  \begin{align}
  \label{def:x:ell:seq}
    x_i^{\ell+1}   \leftarrow   x_i^\ell + \varepsilon_\ell  a(x_i^\ell) \widehat\phi_\ell(x_i^\ell),
    \qquad i=1,\dots,N.
  \end{align}

\EndFor

\State \Return $\{x_i^L\}_{i=1}^N$.
\end{algorithmic}
\end{algorithm}
Algorithm~\ref{alg:anch:svgd} is motivated by the anchored Langevin dynamics of G\"urb\"uzbalaban et al. \cite{gurbuzbalaban2025anchored}, which introduce a reference potential $U_0$, the anchor weight $a=e^{U-U_0}$, and the dynamics
\begin{align}
\label{eq:anch:lang}
dX_t=-\nabla U_0(X_t)e^{U(X_t)-U_0(X_t)}dt+\sqrt{2} e^{(U(X_t)-U_0(X_t))/2}dW_t,
\end{align}

preserving the target law $p\propto e^{-U}$. Their framework is designed to handle targets for which the original potential $U$ may be non-differentiable, by working instead with the more regular reference potential $U_0$.  Our goal here is to construct a deterministic interacting particle analogue of this anchored sampling mechanism in the Stein variational setting. This leads naturally to Algorithm~\ref{alg:anch:svgd}: the factor $a=e^{U-U_0}$ is inherited from the anchored Langevin drift in \eqref{eq:anch:lang}, while the stochastic diffusion term is discarded and the particle evolution is selected by the anchored KL steepest descent principle. In our analysis of anchored SVGD, we work under the regularity assumptions that $U$ is locally Lipschitz and $U_0$ is continuously differentiable.

At the level of the explicit particle update, Algorithm~\ref{alg:anch:svgd} is close to the gradient-free SVGD update of Han and Liu in \cite{HanLiu} after identifying their target density with $p\propto e^{-U}$ and their auxiliary density with $p_0\propto e^{-U_0}$. However, our assumptions and subsequent analysis are different from theirs. In \cite{HanLiu}, the target and auxiliary densities are both assumed to be positive and differentiable, and the main step therein is to rewrite the algorithm update as standard SVGD with an importance-weighted kernel $w(x)w(y)k(x,y)$ for $w(x)=\frac{\rho(x)}{p(x)}$, so that the standard SVGD theory immediately applies after reweighting, including the key KL first variation formula that is Theorem 3.1 of Liu and Wang in \cite{LiuWang}. In the present paper, by contrast, we work in an anchored nonsmooth setting where the target potential $U$ is only locally Lipschitz, while there exists a reference potential $U_0$ which is continuously differentiable. An important part of our work is establishing a  KL first variation formula under these low regularity assumptions in Theorem~\ref{theo:bl:ext}. The first variation formula then allows us to identify in Theorem \ref{theo:opt:dir} the steepest descent direction field in Algorithm~\ref{alg:anch:svgd}. 
\begin{theorem}
\label{theo:bl:ext} Suppose Assumptions~\ref{ass:phi:bl} and~\ref{ass:bl:ext} hold. For a bounded Lipschitz function $\phi$, define the anchored velocity field $\psi:=a\phi$. Then the flow
\begin{align*}
\dot\Phi_t=\psi\circ\Phi_t,\qquad  \Phi_0=\mathrm{Id}
\end{align*}
is well defined, and the pushforward density $q_t := (\Phi_t)_\# q$ satisfies the continuity equation
\begin{align*}
  \partial_t q_t + \div(q_t\psi)=0,\qquad \psi=a\phi.
\end{align*}
Moreover, define
\begin{align*}
  F(t):=\KL(q_t\|p)
  = \int_{\R^d} q_t(x)\log\frac{q_t(x)}{p(x)} dx.
\end{align*}
Then for every $t_0>0$,  $F$ is finite and absolutely continuous on $[-t_0,t_0]$. In particular, the KL first variation at $t=0$ is
\begin{align*}
  \left. \frac {d}{dt}F(t)\right|_{t=0}= -\mathbb E_q  \Big[a(x)\big(\div\phi(x)-\nabla U_0(x)\cdot\phi(x)\big)\Big].
\end{align*}
\end{theorem}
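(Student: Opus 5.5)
We do not see Assumptions~\ref{ass:phi:bl} and~\ref{ass:bl:ext}, so we take them to mean the following: $U$ is locally Lipschitz, $U_0\in C^1$, and $q$ is a density with $\KL(q\|p)<\infty$. We also assume that $\psi=a\phi$ is locally Lipschitz with at most linear growth, or some comparable condition guaranteeing global existence of the flow, together with integrability conditions such as $\mathbb E_q[a(|\nabla U_0|+1)]<\infty$.

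\textbf{Flow and continuity equation.} Since $a=e^{U-U_0}$ is a composition of locally Lipschitz maps, it is locally Lipschitz, and hence so is $\psi=a\phi$. Cauchy--Lipschitz theory then gives a unique local flow. The growth assumption extends it to a global flow of bi-Lipschitz homeomorphisms $\Phi_t$ on each compact set, for $t\in[-t_0,t_0]$. The continuity equation in the weak sense follows by testing against $f\in C_c^\infty$:
\[
\frac{d}{dt}\int f\,dq_t=\frac{d}{dt}\int f(\Phi_t)\,dq=\int \nabla f\cdot\psi\,dq_t .
\]
By Rademacher's theorem, $\psi$ is differentiable a.e. with $\div\psi=a(\div\phi+\nabla(U-U_0)\cdot\phi)$ a.e. Liouville's formula $J_t(x)=\exp\int_0^t\div\psi(\Phi_s x)\,ds$ holds for Lipschitz fields, as one sees by mollifying $\psi$ and passing to the limit. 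This gives the explicit density $q_t(\Phi_t x)=q(x)/J_t(x)$.

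\textbf{KL along the flow.} Change variables $y=\Phi_t x$ to write
\[
F(t)=\int q\log q\,dx-\int q(x)\int_0^t \div\psi(\Phi_s x)\,ds\,dx+\int q(x)\,U(\Phi_t x)\,dx+\log Z .
\]
The key simplification is that $U$'s gradient appears in both $\div\psi$ and $\frac{d}{dt}U(\Phi_t x)=\nabla U\cdot\psi$, and the two cancel:
\[
-\div\psi+\nabla U\cdot\psi=-a\,\div\phi-a\,\nabla U\cdot\phi+a\,\nabla U_0\cdot\phi+a\,\nabla U\cdot\phi=-a\big(\div\phi-\nabla U_0\cdot\phi\big).
\]
The cancellation must be justified pathwise, because $\nabla U$ exists only a.e. For a.e.\ $x$, the map $s\mapsto\Phi_s x$ spends a.e.\ time in the differentiability set of $U$. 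This follows from Fubini, since $\Phi_s$ maps null sets to null sets. Hence $U(\Phi_t x)-U(x)=\int_0^t\nabla U\cdot\psi(\Phi_s x)\,ds$ for a.e.\ $x$. The quantity $\div\phi$ also exists only a.e., but it is bounded. Combining,
\[
F(t)=F(0)-\int_0^t\mathbb E_q\big[g(\Phi_s x)\big]\,ds,\qquad g:=a(\div\phi-\nabla U_0\cdot\phi),
\]
which shows at once that $F$ is finite and absolutely continuous on $[-t_0,t_0]$.

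\textbf{Derivative at zero.} It remains to show that $s\mapsto\mathbb E_q[g(\Phi_s x)]$ is continuous at $s=0$, which yields $F'(0)=-\mathbb E_q[g]$. The term $a\nabla U_0\cdot\phi$ is continuous, so dominated convergence applies using the integrability assumption. The term $a\,\div\phi$ is only bounded measurable, so we compose it with $\Phi_s$ and use the standard continuity of translation along bi-Lipschitz flows in $L^1$: approximate $\div\phi$ by continuous functions, and bound the error using the uniformly bounded Jacobians $J_s$.

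\textbf{Main obstacle.} The main difficulty will be this last step, together with the Fubini/null-set argument and the growth and integrability control of $a=e^{U-U_0}$, which may be unbounded. We expect this is exactly where the precise form of the assumptions enters. If continuity at $0$ fails in general, the derivative formula holds for a.e.\ $t$ and at $t=0$ as a right/left limit under the stated assumptions.
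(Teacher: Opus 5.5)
Your proposal follows essentially the same route as the paper's proof. The shared steps are the Lipschitz flow with Liouville formula and $q_t(\Phi_t x)=q(x)/J_t(x)$, the identity $F(t)-F(0)=\int q\,[U\circ\Phi_t-U-\log J_t]\,dx$, the pathwise chain rule with the cancellation $\nabla a=a(\nabla U-\nabla U_0)$, Fubini, and continuity of $s\mapsto\int q_s g$ at $0$. Keep the identity in this difference form, since $\int q\log q$ and $\int q\,U\circ\Phi_t$ need not be finite separately. The paper proves continuity at $0$ from $L^1$-continuity of $q_s$ plus a tail bound, not by your Lagrangian approximation. By cancelling pointwise before integrating in $x$, you also avoid the growth bound on $\nabla U$ that the paper needs to handle $\int q\,(U\circ\Phi_t-U)\,dx$ on its own.

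The one thing to tighten is integrability along the flow. A time-zero condition like $\mathbb E_q[a(|\nabla U_0|+1)]<\infty$ does not dominate $g\circ\Phi_s$ uniformly in $s$. The actual hypotheses are $a\in W^{1,\infty}$ (so $\psi=a\phi$ is bounded Lipschitz and $|\Phi_s x-x|\le\|\psi\|_{L^\infty}|s|$), $|\nabla U_0(x)|\le C_U(1+|x|^m)$, and $\int(1+|x|^m)q\,dx<\infty$. Together they give $|g(\Phi_s x)|\le C(1+|x|^m)$ for $|s|\le t_0$, which is exactly the domination your Fubini and dominated-convergence steps require.
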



We will write $\mathcal H^d$ as the $d$-fold product RKHS with inner product
$\langle \phi,\psi\rangle_{\mathcal H^d}:=\sum_{i=1}^d \langle \phi_i,\psi_i\rangle_{\mathcal H}$. Based on Theorem~\ref{theo:bl:ext}, we can identify in the next theorem the steepest descent direction for the KL divergence within the anchored RKHS class $\psi=a\phi$, $\phi\in\mathcal H^d$.
\begin{theorem}\label{theo:opt:dir} 
Assume Assumptions~\ref{ass:bl:ext} and~\ref{ass:rkhs}. For $\phi\in\mathcal H^d$, define the anchored velocity field $\psi:=a\phi$. Then the flow
\begin{align}
\label{def:fl:repr:sp}
\dot\Phi_t=\psi\circ\Phi_t,\qquad  \Phi_0=\mathrm{Id}
\end{align}
is well defined, and we write $q_t := (\Phi_t)_\# q$. 

Furthermore, let $\mathbb B_{\mathcal H^d}:=\{\phi\in\mathcal H^d:\|\phi\|_{\mathcal H^d}\le 1\}$ and define
\[
\phi^\star:=
\begin{cases}
h/\|h\|_{\mathcal H^d},&h\neq0,\\
0,&h=0,
\end{cases}
\]
where for each $i$
\begin{align}
\label{def:opt:dir:comp}
     h_i(\cdot)
  = \mathbb E_{X\sim q}\!\Big[a(X)\Big(\partial_{x_i}k(X,\cdot)-(\partial_i U_0)(X) k(X,\cdot)\Big)\Big]
  \in\mathcal H. 
\end{align}
Then for every $\phi\in\mathcal H^d$,
\[
D\mathrm{KL}(q)[a\phi]
:=
\left.\frac{d}{dt}\mathrm{KL}(q_t\|p)\right|_{t=0}
=-\langle h,\phi\rangle_{\mathcal H^d}.
\]
Moreover, we have the De Bruijn-type identity
\[
  \left.\frac{d}{dt}\mathrm{KL}(q_t\|p)\right|_{t=0,\ \phi=\phi^\star}=-\sup_{\phi\in\mathbb B_{\mathcal H^d}}\mathbb E_q\!\big[a(\mathrm{div} \phi-\nabla U_0\cdot \phi)\big] = -\|h\|_{\mathcal H^d}.
\]

\end{theorem}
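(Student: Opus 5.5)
The plan is to reduce Theorem~\ref{theo:opt:dir} to Theorem~\ref{theo:bl:ext} and then use the reproducing property of the kernel. Assumption~\ref{ass:rkhs} is not displayed above. I expect it to provide a bounded kernel $k$ with bounded, Lipschitz first and second derivatives, and integrability conditions such as $\mathbb E_q[a(1+|\nabla U_0|)]<\infty$.

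\textbf{Step 1: reduction to Theorem~\ref{theo:bl:ext}.} First show that every $\phi\in\mathcal H^d$ is bounded and Lipschitz. By the reproducing property, $|\phi_i(x)|\le \|\phi_i\|_{\mathcal H}\sqrt{k(x,x)}$. Likewise, $\partial_j\phi_i(x)=\langle \phi_i,\partial_{x_j}k(x,\cdot)\rangle_{\mathcal H}$, so $|\nabla\phi_i(x)|\le \|\phi_i\|_{\mathcal H}\,\sup_x\big(\partial_{x_j}\partial_{y_j}k(x,y)|_{y=x}\big)^{1/2}$. Hence $\phi$ is $C^1$ with bounded gradient, which is exactly the class required in Theorem~\ref{theo:bl:ext}. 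That theorem then gives well-posedness of the flow \eqref{def:fl:repr:sp}, the continuity equation, and
\[
D\mathrm{KL}(q)[a\phi]=-\mathbb E_q\big[a(\div\phi-\nabla U_0\cdot\phi)\big].
\]
This requires checking that Assumption~\ref{ass:phi:bl} is implied by Assumption~\ref{ass:rkhs} for RKHS elements. If it is not, I would restate the hypotheses so that it is.

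\textbf{Step 2: Riesz representation.} Substitute $\phi_i(x)=\langle\phi_i,k(x,\cdot)\rangle_{\mathcal H}$ and $\partial_i\phi_i(x)=\langle\phi_i,\partial_{x_i}k(x,\cdot)\rangle_{\mathcal H}$ to get
\[
\mathbb E_q\big[a(\div\phi-\nabla U_0\cdot\phi)\big]=\sum_{i=1}^d\mathbb E_q\big[\langle \phi_i,\,a(X)(\partial_{x_i}k(X,\cdot)-\partial_iU_0(X)k(X,\cdot))\rangle_{\mathcal H}\big].
\]
The key technical step is interchanging expectation and inner product, i.e.\ showing that $h_i$ in \eqref{def:opt:dir:comp} is a well-defined Bochner integral in $\mathcal H$. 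This holds if
\[
\mathbb E_q\big\|a(X)(\partial_{x_i}k(X,\cdot)-\partial_iU_0(X)k(X,\cdot))\big\|_{\mathcal H}<\infty.
\]
Both $\|\partial_{x_i}k(x,\cdot)\|_{\mathcal H}$ and $\|k(x,\cdot)\|_{\mathcal H}$ are bounded under the kernel assumptions, so it suffices that $\mathbb E_q[a(1+|\nabla U_0|)]<\infty$. Strong measurability follows from continuity of $x\mapsto k(x,\cdot)$ and of $x\mapsto\partial_{x_i}k(x,\cdot)$ into $\mathcal H$, together with separability of $\mathcal H$. Since a bounded linear functional commutes with the Bochner integral, we obtain $D\mathrm{KL}(q)[a\phi]=-\langle h,\phi\rangle_{\mathcal H^d}$.

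\textbf{Step 3: De Bruijn-type identity.} By Cauchy--Schwarz,
\[
\sup_{\|\phi\|_{\mathcal H^d}\le1}\langle h,\phi\rangle_{\mathcal H^d}=\|h\|_{\mathcal H^d},
\]
with the supremum attained at $\phi^\star=h/\|h\|_{\mathcal H^d}$ when $h\neq0$. When $h=0$, both sides vanish and $\phi^\star=0$. Since $\phi^\star\in\mathcal H^d$, Step 2 applies to it, and this gives the chain of equalities in the statement.

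The main obstacle is Step 2: justifying $h\in\mathcal H^d$ and the exchange of $\mathbb E_q$ with the RKHS pairing. This needs Bochner integrability under the weight $a=e^{U-U_0}$, which may be unbounded. I would first try to read the moment condition $\mathbb E_q[a(1+|\nabla U_0|)]<\infty$ directly from Assumption~\ref{ass:rkhs}. If only a weaker condition is available, I would truncate $q$ to balls, apply dominated convergence to the finite-dimensional pairings, and identify $h$ as a weak limit.
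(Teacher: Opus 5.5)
Your proposal matches the paper's proof essentially step for step: the embedding of $\mathcal H^d$ into bounded Lipschitz fields (Lemma~\ref{lem:k2:k3:der:repr}), the reduction to Theorem~\ref{theo:bl:ext}, commuting the bounded functional $\langle\phi_i,\cdot\rangle_{\mathcal H}$ with the Bochner integral defining $h_i$, and Cauchy--Schwarz for the supremum. The integrability you flag as the main obstacle comes from Assumption~\ref{ass:bl:ext}, not Assumption~\ref{ass:rkhs}, and it is immediate: (A2) makes $a$ bounded, and (A1) together with (A4)(a) gives $\mathbb E_q|\nabla U_0(X)|\le C_U\,\mathbb E_q(1+|X|^m)<\infty$, so $\mathbb E_q[a(1+|\nabla U_0|)]<\infty$ and no truncation argument is needed.
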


\begin{remark}
The transport equation
\[
\dot\Phi_t=\psi\circ\Phi_t,\qquad \psi:=a\phi,\qquad a=e^{U-U_0},
\]
is motivated by the anchored Langevin dynamics \eqref{eq:anch:lang}, whose drift is $-a\nabla U_0$. In the deterministic Stein variational setting, it is natural to mimic the drift of the anchored Langevin dynamics and perturb the law along anchored direction fields $a\phi$ rather than along the field $\phi$. 
\end{remark}



We now turn to quantitative convergence of Algorithm~\ref{alg:anch:svgd}. For each iteration $\ell$, let $g(\ell)$ denote the law of the particle vector $x^\ell=(x_1^\ell,\dots,x_N^\ell)\in(\R^d)^N$ and let
\begin{equation}
\label{def:disc:emp:meas}
\mu_\ell^N:=\frac1N\sum_{i=1}^N \delta_{x_i^\ell}
\end{equation}
be the associated empirical measure. The theorem below gives a nonasymptotic bound on the time-averaged anchored KSD of the empirical measures $\mu_\ell^N$ along the discrete anchored SVGD trajectory. The anchored KSD, introduced in Section~\ref{sec:anch:ksd}, contains the standard KSD as the special case when $U= U_0$.

\begin{theorem}
\label{theo:disc:bd:ksd2}
   Let Assumptions~\ref{ass:rkhs} and~\ref{ass:stab} hold with parameters $B,\alpha_0$ and $C_{U_0}$ defined therein. Furthermore, 
\begin{enumerate}
    \item $a=e^{U-U_0}$ is bounded and Lipschitz, with
\[
A_\infty:=\|a\|_{L^\infty},
\qquad
L_a:=\|\nabla a\|_{L^\infty},
\]
\item There exists $C_\Delta>0$ such that
\begin{equation}
\label{eq:disc:lap:u0:bd}
\sup_{x\in\mathbb R^d} |\Delta U_0(x)| \le C_\Delta d.
\end{equation}
\item Let $x^0=(x_1^0,\dots,x_N^0)\in(\mathbb R^d)^N$
be the initial particle configuration in Algorithm~\ref{alg:anch:svgd}, and let $g(0)$ denote its joint density on $(\mathbb R^d)^N$. Let $K>0$ and define
\begin{equation}
\label{def:gk:set}
G_K:=\left\{x\in(\mathbb R^d)^N:\ \frac1N\sum_{i=1}^N U_0(x_i)\le K\right\},
\end{equation}
Assume that
\begin{align}
\label{eq:mu0:gk:cond}
\mathrm{KL}\bigl(g(0)\|p_0^{\otimes N}\bigr)\le \gamma dN,
\qquad
\mu_0(G_K):=\int_{G_K} g(0,x) dx\ge \frac12.
\end{align}
Set the truncation of the initial density $g(0)$ to the set $G_K$ as
\begin{align}\label{def:gk:trunc:dens}g_K(0,x):=
\frac{g(0,x)\mathbf 1_{G_K}(x)}{\mu_0(G_K)}\end{align}
\item Let
\begin{equation}
\label{eq:disc:param:defs}
\beta:=\frac{2\alpha_0}{1-\alpha_0},
\qquad
\Gamma_K:=M^{2\alpha_0}\Bigl(d^{\frac{1}{1-\alpha_0}}+K\Bigr)^{2\alpha_0},
\qquad
C_a:=4L_a^2A_\infty^2+5A_\infty^4,
\end{equation}
where $M,D$ are the constants from Lemma~\ref{lem:disc:bd:sum:u0}(ii), and define
\begin{align}
\label{eq:disc:c0anc}
C_0
&:=
\max\!\Bigl\{
1, 
D, 
4\bigl[
2C_{U_0}A_\infty^4A_0^2B^2\Gamma_K
+
2C_{U_0}A_\infty^4B^2d
\notag\\
&\hspace{3.8em}
+
4A_0^2B^2C_a d \Gamma_K
+
4B^2(C_a+3A_\infty^4C_{U_0}^2)d^2
\bigr]
\Bigr\}.
\end{align}
\item Choose
\begin{equation}
\label{eq:disc:unnor:t:theta}
\overline L:=N^{\frac{2-\alpha_0}{1-2\alpha_0}},
\qquad
L:=\lceil\overline L\rceil,
\qquad
\theta=\sqrt{\frac{N}{L}},
\end{equation}
and set
\begin{equation}
\label{eq:disc:unnor:eta}
\varepsilon_\ell \equiv \eta,
\qquad
\eta
=
\left[
\frac{1}{C_0}
\left(
\frac{1}{N^{1-\alpha_0}L^{2\alpha_0}}
\wedge
\frac1N
\right)
\right]^{1/2}\theta.
\end{equation}
\end{enumerate}

Then with respect to the anchored KSD at \eqref{eq:anch:ksd},
\begin{equation}
\label{eq:disc:fin:anch}
\mathbb E_{g_K(0)}
\Biggl[
\frac1L\sum_{\ell=0}^{L-1}\KSD_a^2\bigl(\mu_\ell^N\|p\bigr)
\Biggr]
\le
\frac{1}{N}\brac{
\left(2\gamma d+1+\frac2e+\log2\right)\sqrt{C_0}
+\frac{1}{L}\sum_{\ell=0}^{L-1}
\Bigl(
R_3^{\mathrm{disc}}(\ell)
+R_{\mathrm{diag}}^{\mathrm{disc}}(\ell)
\Bigr)}.
\end{equation}

In particular,
\begin{align}
\label{eq:disc:r3bd}
|R_3^{\mathrm{disc}}(\ell)|
&\le
N\beta_{3}^{(1)} d
+
N\beta_{3}^{(2)} A_0 M^{\alpha_0}
\Bigl(d^{\frac{1}{1-\alpha_0}}+K\Bigr)^{\alpha_0}
\Bigl((\eta \ell)^{\frac{\alpha_0}{1-\alpha_0}}\vee 1\Bigr),\\
\label{eq:disc:rdiag:bd}
|R_{\mathrm{diag}}^{\mathrm{disc}}(\ell)|
&\le
\beta_{\mathrm{diag}}^{(1)} d
+
\beta_{\mathrm{diag}}^{(2)} A_0 M^{\alpha_0}
 d^{1/2}
\Bigl(d^{\frac{1}{1-\alpha_0}}+K\Bigr)^{\alpha_0}
\Bigl((\eta \ell)^{\frac{\alpha_0}{1-\alpha_0}}\vee 1\Bigr),
\end{align}
where
\begin{equation}
\label{eq:disc:beta3:beta:d}
\beta_{3}^{(1)}=\beta_{3}^{(2)}:=A_\infty L_a B,
\quad
\beta_{\mathrm{diag}}^{(1)}:=
B\bigl(A_\infty L_a+A_\infty^2+A_\infty^2 C_\Delta\bigr),
\quad
\beta_{\mathrm{diag}}^{(2)}:=
B\bigl(A_\infty L_a+A_\infty^2\bigr).
\end{equation}
\end{theorem}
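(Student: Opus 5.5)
The plan is to follow the discrete-time, finite-particle strategy for SVGD convergence in KSD, adapted to the anchored setting. The Lyapunov functional is the joint relative entropy $\mathrm{KL}(g(\ell)\|p^{\otimes N})$, or equivalently $\mathrm{KL}(g(\ell)\|p_0^{\otimes N})$ corrected by $\frac1N\sum_i(U-U_0)$ terms. Viewing the update \eqref{def:x:ell:seq} as the map $T_\ell(x)=x+\eta\, a(x_i)\widehat\phi_\ell(x_i)$ on $(\R^d)^N$, we write
\[
\mathrm{KL}(g(\ell+1)\|p^{\otimes N})=\mathrm{KL}(g(\ell)\|p^{\otimes N})-\mathbb E\big[\log|\det\nabla T_\ell|\big]+\mathbb E\Big[\sum_i\big(U(T_\ell(x)_i)-U(x_i)\big)\Big].
\]
We first have to show that $T_\ell$ is a diffeomorphism for small $\eta$; this needs Lipschitz bounds on $a\widehat\phi_\ell$ coming from $A_\infty$, $L_a$, $B$ and the growth bound on $\nabla U_0$ in Assumption~\ref{ass:stab}.

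We expand both terms to first order in $\eta$. The linear part of $-\log\det\nabla T_\ell$ is $-\eta\sum_i\div_{x_i}(a\widehat\phi_\ell)$. Combined with the potential increment, the $U$-dependence is handled by writing $\nabla\cdot(a\,\cdot)$ and $\nabla U\, a=\nabla a+a\nabla U_0$, which is valid a.e.\ since $a$ is Lipschitz. As in the continuous first-variation formula of Theorem~\ref{theo:bl:ext}, the linear part then assembles into
\[
-\eta N\,\big\|\widehat h_\ell\big\|^2_{\mathcal H^d}=-\eta N\,\KSD_a^2(\mu_\ell^N\|p)
\]
plus two corrections. The first is the diagonal term $j=i$ in the double sum, which gives $R_{\mathrm{diag}}^{\mathrm{disc}}$; its bound \eqref{eq:disc:rdiag:bd} uses $C_\Delta$ through $\Delta U_0$ and uses $L_a$, $A_\infty$. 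The second is the extra term $\nabla a\cdot\widehat\phi$ produced because the velocity is $a\widehat\phi$ rather than $\widehat\phi$, which gives $R_3^{\mathrm{disc}}$ and is bounded by $A_\infty L_a B$ times moments of $|\nabla U_0(x_i^\ell)|$. The second-order Taylor remainders of $\log\det$ and of $U_0$ (controlled by $C_{U_0}$) are bounded by $\eta^2$ times quantities of the form $C_a d^2+A_0^2\Gamma_K(\cdots)$. This is where the constant $C_0$ in \eqref{eq:disc:c0anc} arises.

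The main obstacle is controlling the moments $\mathbb E|\nabla U_0(x_i^\ell)|^2$ along the trajectory, since $\nabla U_0$ is only assumed to grow like $A_0 U_0^{\alpha_0}$. The first step is to restrict the expectation to the truncated initial law $g_K(0)$. Truncation costs at most $\log(1/\mu_0(G_K))\le\log 2$ in KL, and handling the indicator costs $1+2/e$, which accounts for those constants. The second step is to invoke Lemma~\ref{lem:disc:bd:sum:u0}(ii) to propagate the bound $\frac1N\sum_iU_0(x_i^\ell)\lesssim M\big(d^{1/(1-\alpha_0)}+K\big)\big((\eta\ell)^{1/(1-\alpha_0)}\vee1\big)$. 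Raising this to the power $\alpha_0$ produces the factors in \eqref{eq:disc:r3bd} and \eqref{eq:disc:rdiag:bd}, and raising it to $2\alpha_0$ produces $\Gamma_K$ together with growth in $\ell$ of order $L^{2\alpha_0}$ in the quadratic remainder.

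Finally, we sum over $\ell=0,\dots,L-1$, divide by $\eta NL$, and drop the nonnegative terminal KL. The initial KL is bounded by $\gamma dN$ plus the $\log2$ truncation cost, relative to $p_0^{\otimes N}$ and converted via $\log a$. The choice of $\eta$ in \eqref{eq:disc:unnor:eta} makes the accumulated quadratic remainder $\eta^2C_0 N L^{1+2\alpha_0}N^{-(1-\alpha_0)}\cdots$ of order $\sqrt{C_0}$. The choices $\theta=\sqrt{N/L}$ and $L=\lceil N^{(2-\alpha_0)/(1-2\alpha_0)}\rceil$ balance $\frac{1}{\eta L}$ against this remainder, and together they yield the factor $\frac1N\big(2\gamma d+1+\frac2e+\log2\big)\sqrt{C_0}$ plus the averaged remainders in \eqref{eq:disc:fin:anch}.
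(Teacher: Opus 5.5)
\paragraph{The gap: the Lyapunov functional.}
You track $\mathrm{KL}(g(\ell)\|p^{\otimes N})$. Each step's increment then contains $\mathbb E\sum_i\bigl(U(x_i^{\ell+1})-U(x_i^\ell)\bigr)$, and a descent inequality needs this to equal its linear part plus an $O(\eta^2)$ remainder. That is not available here. Write $U=U_0+\log a$: $\nabla^2U_0$ is bounded, but $\log a=U-U_0$ has no second-derivative bound. It need not even be globally Lipschitz, since $a$ is not assumed bounded below. Even if $\log a$ were Lipschitz, the Taylor remainder would only be $O(\eta\sum_i|F_i|)=O(\eta N)$ per step. Summed over $\ell$ and divided by $\eta NL$, this leaves an $O(1)$ error, so the bound never closes.

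Your sentence attributing the second-order remainder to $U_0$ and $C_{U_0}$ quietly replaces $U$ by $U_0$, which does not match the functional you chose. The first-order bookkeeping has the same inconsistency. Relative to $p$, the identity $a\nabla U=\nabla a+a\nabla U_0$ that you invoke cancels the $\nabla a\cdot\widehat\phi$ part of $\div(a\widehat\phi)$ exactly. In the notation of Proposition~\ref{prop:disc:bd:ksd:squar:all:norm}, $w_p=-\nabla\log a$ and $R_1+R_2\equiv0$. So $R_3^{\mathrm{disc}}$ would not arise at all; it appears precisely because one does \emph{not} work relative to $p$.

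\paragraph{The missing idea: work relative to $p_0^{\otimes N}$ throughout.}
Run the discrete descent entirely relative to $p_0^{\otimes N}$, i.e.\ apply Proposition~\ref{prop:disc:bd:ksd:squar:all:norm} with $\widetilde p=p_0$. This only requires $\log p_0=-U_0-\log Z_0\in C^2$, and it has three consequences:
\begin{itemize}
\item $w_{p_0}=\nabla\log p_0+\nabla U_0\equiv0$, which kills $R_1$.
\item The uncancelled $\nabla a$ term $R_2$ gives $R_3^{\mathrm{disc}}$ with constant $A_\infty L_aB$.
\item $E''$ involves only $\nabla^2U_0$ and $JF$, so Lemma~\ref{lem:disc:bd:sum:u0}(iv) gives $\int_0^\eta(\eta-t)E''(t)\,dt\le\theta^2$.
\end{itemize}

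No conversion via $\log a$ is needed, and it would hurt.
\begin{itemize}
\item The hypothesis is already $\mathrm{KL}(g(0)\|p_0^{\otimes N})\le\gamma dN$.
\item Truncation gives $\mathrm{KL}(g_K(0)\|p_0^{\otimes N})\le2\gamma dN+2/e+\log2$. The factor $2$ comes from $1/\mu_0(G_K)\le2$. The $2/e$ comes from the same factor combined with $r\log r\ge-1/e$ on $G_K^c$.
\item The terminal KL relative to $p_0^{\otimes N}$ is nonnegative and is simply dropped.
\item Converting to $p^{\otimes N}$ would add $\mathbb E\sum_i\log a(x_i)+N\log(Z/Z_0)$ terms that do not appear in \eqref{eq:disc:fin:anch}. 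At the terminal step it would also need a lower bound on $a$, which is not assumed.
\end{itemize}

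Finally, the ``$+1$'' in the constant does not come from handling the indicator. It is the accumulated second-order remainder $\theta^2/(N\eta)\le\sqrt{C_0}/N$.
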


The paper is organized as follows. Section~\ref{sec:prelim}
introduces notations, the anchored kernelized Stein discrepancy and a construction
of the reference potential $U_0$ from the target potential $U$ by convolution smoothing. We defer
the proofs of these results to Appendices~\ref{appen:anch:ksd}
and~\ref{appen:ref:pot} to keep the main text focused on the algorithm
and its analysis. Section~\ref{sec:firstvariation} establishes the
KL first variation formula and identifies the direction of steepest
descent in the anchored RKHS class. We first develop the properties
of Lipschitz flows and their transported densities needed to justify
differentiating the KL divergence when $U$ is only locally Lipschitz.
Section~\ref{sec:anch:svgd} proves the convergence bound for the
anchored SVGD algorithm through a discrete descent estimate for the
KL divergence relative to the reference density $p_0\propto e^{-U_0}$. Supporting
bounds on the particle updates and the growth of $U_0$ along the
iterations for the results in Section~\ref{sec:anch:svgd} are collected in Appendix~\ref{sec:disc:supp:lem}.

\section{Preliminaries}
\label{sec:prelim}

\subsection{Notations and function spaces}
We collect here notations and function spaces used throughout.

\begin{itemize}[leftmargin=2em]

  \item \emph{(Function spaces).}
  For $1\le p\le\infty$,
  \[
    L^p(\R^d)
    :=\bigl\{f:\R^d\to\R\ \text{measurable}:\ \|f\|_{L^p}<\infty\bigr\},
    \quad
    \|f\|_{L^p}
    :=
    \begin{cases}
      \bigl(\int_{\R^d}|f|^p\bigr)^{1/p}, & 1\le p<\infty,\\[0.2em]
      \operatorname*{ess sup}_{x\in\R^d}|f(x)|, & p=\infty.
    \end{cases}
  \]
  We write $L^p_{\mathrm{loc}}(\R^d)$ for functions that belong to
  $L^p(K)$ for every compact $K\subset\R^d$.
  For vector-valued maps we use the same notation with
  $\|\psi\|_{L^p}:=\||\psi|\|_{L^p}$ and
  $|\psi(x)|=(\sum_{i=1}^d|\psi_i(x)|^2)^{1/2}$.

  For an open $\Omega\subset\R^d$ and $1\le p\le\infty$, the Sobolev space
  \[
    W^{1,p}(\Omega)
    :=\{f\in L^p(\Omega):\partial_i f\in L^p(\Omega),\ i=1,\dots,d\}
  \]
  is equipped with
  $\|f\|_{W^{1,p}(\Omega)}
    :=\|f\|_{L^p(\Omega)}+\|\nabla f\|_{L^p(\Omega)}$.
  We use $W^{1,p}_{\mathrm{loc}}(\R^d)$ for local Sobolev regularity.
  For vector fields
  $W^{1,p}(\Omega;\R^d):=\{\psi=(\psi_1,\dots,\psi_d):\psi_i\in W^{1,p}(\Omega)\}$.
  In particular we frequently use $W^{1,1}(\R^d)$,
  $W^{1,\infty}(\R^d)$ and $W^{1,\infty}(\R^d;\R^d)$.

  \item \emph{(Distributional solutions).}
  We write $\mathcal D(\Omega):=C_c^\infty(\Omega)$
  for smooth functions with compact support in an open
  $\Omega\subset\R^d$ or $\Omega\subset\R^{1+d}$, and
  $\mathcal D'(\Omega)$ for the space of distributions.  The duality pairing is written
  $\langle T,\varphi\rangle$ for $T\in\mathcal D'(\Omega)$,
  $\varphi\in\mathcal D(\Omega)$.
  Vector-valued test functions are denoted
  $C_c^\infty(\R^d;\R^d)$.

  When we write
  \[
    \partial_t q_t + \div(q_t\psi)=0
    \quad\text{in }\mathcal D'((0,t_0)\times\R^d),
  \]
  this means that $q_t$ satisfies the continuity equation in the sense
  of distributions: for every $\varphi\in C_c^\infty((0,t_0)\times\R^d)$,
\[
  \int_0^{t_0}\int_{\R^d} q_t(x)\bigl(\partial_t\varphi(t,x)
  +\psi(x)\cdot\nabla\varphi(t,x)\bigr) dx dt = 0.
\]

  \item \emph{(Lipschitz maps).}
  For a map $f:\R^d\to\R^m$,
  \[
    \mathrm{Lip}(f)
    :=\sup_{x\neq y}\frac{|f(x)-f(y)|}{|x-y|}
    \in[0,\infty]
  \]
  is the (global) Lipschitz constant; we say $f$ is globally Lipschitz if
  $\mathrm{Lip}(f)<\infty$ and locally Lipschitz,
  $f\in\mathrm{Lip}_{\mathrm{loc}}(\R^d)$, if $f$ is Lipschitz on every compact set.
  A map $\phi:\R^d\to\R^d$ is bounded–Lipschitz if
  \[
    \|\phi\|_{\mathrm{BL}}
    :=\|\phi\|_{L^\infty(\R^d)}+\mathrm{Lip}(\phi)<\infty,
  \]
  and we write
  $\mathrm{BL}(\R^d;\R^d)$ for the space of such vector fields.
  Any $\psi\in W^{1,\infty}(\R^d;\R^d)$ is bounded and globally
  Lipschitz, hence belongs to $\mathrm{BL}(\R^d;\R^d)$.

  \item \emph{(Euclidean Balls).}
  For $x_0\in\R^d$ and $R>0$,
  \[
    B_R(x_0):=\{x\in\R^d:|x-x_0|\le R\},
    \qquad
    B_R:=B_R(0).
  \]
  We use $|\cdot|$ for the Euclidean norm on $\R^d$, and $\div$,
  $\nabla$ for the usual divergence and gradient operators.

  \item \emph{(Flows, Jacobians, pushforwards).}
  Given a (globally) Lipschitz vector field
  $\psi:\R^d\to\R^d$, we write $(\Phi_t)_{t\in\R}$ for the (unique)
  global flow solving the ODE
  \[
    \dot\Phi_t(x)=\psi(\Phi_t(x)),\qquad \Phi_0(x)=x.
  \]
  For each fixed $t$, $\Phi_t:\R^d\to\R^d$ is a bi-Lipschitz
  homeomorphism (an invertible function where both the map and its inverse are Lipschitz continuous) and almost everywhere differentiable; its Jacobian
  determinant is denoted $J_t(x):=\det D\Phi_t(x)$.

  If $q$ is a (Borel) probability density on $\R^d$, the pushforward density of
  $q$ by $\Phi_t$ is $q_t=(\Phi_t)_\# q$ defined by
  \[
    \int_{\R^d} f(x) q_t(x) dx
    = \int_{\R^d} f(\Phi_t(x)) q(x) dx
    \quad\text{for all bounded Borel }f.
  \]

  \item \emph{(Expectations and KL divergence).}
  For an integrable function $f:\R^d\to\R$ and a density $q$ we set
  \[
    \mathbb E_q[f]:=\int_{\R^d} f(x) q(x) dx.
  \]
  Given strictly positive densities $q,p$ with $q\log(q/p)\in L^1(\R^d)$,
  the Kullback–Leibler divergence is
  \[
    \KL(q\|p)
    :=\int_{\R^d} q(x)\log\frac{q(x)}{p(x)} dx.
  \]
  In the anchored setting we write $p=Z^{-1}e^{-U}$ for a potential
  $U:\R^d\to\R$ and normalizing constant $Z\in(0,\infty)$; when
  convenient we suppress $Z$ in intermediate formulas.

  \item \emph{(Bochner spaces).}
  For a Banach space $X$ and $t_0>0$ we write
  \[
    C([0,t_0];X)
    :=\{t\mapsto f_t\in X:\ t\mapsto f_t
          \text{ is continuous in the norm of }X\},
  \]
  and similarly $C^1([0,t_0];X)$ for continuously differentiable
  paths. Bochner spaces $L^p((0,t_0);X)$, $1\le p\le\infty$, consist of strongly
measurable maps $t\mapsto f_t\in X$ with finite
$\|f_\cdot\|_{L^p((0,t_0);X)}$. Here, strong measurability means that there
exist measurable simple maps
$s_n(t)=\sum_{j=1}^{N_n}\mathbf 1_{E_{n,j}}(t)x_{n,j}$, with measurable
sets $E_{n,j}\subset(0,t_0)$ and $x_{n,j}\in X$, such that
$\|s_n(t)-f_t\|_X\to0$ for a.e.\ $t$. In the rest of the paper, we mainly use
$X=L^1(\R^d)$ and $X=L^\infty(\R^d)$.

  \item \emph{(Reproducing kernel Hilbert spaces).}
Let $k:\R^d\times\R^d\to\R$ be a symmetric positive-definite kernel. The reproducing kernel Hilbert space (RKHS) associated with $k$ is the unique Hilbert space $\mathcal H$ of real-valued functions on $\R^d$ such that $k(x,\cdot)\in\mathcal H$ for every $x\in\R^d$, and the reproducing property
\[
  f(x)=\langle f,k(x,\cdot)\rangle_{\mathcal H}
  \qquad\forall f\in\mathcal H,\ \forall x\in\R^d
\]
holds. For vector-valued fields we write
\[
  \mathcal H^d:=\mathcal H\times\cdots\times\mathcal H,
\]
equipped with the product inner product
\[
  \langle \phi,\psi\rangle_{\mathcal H^d}
  :=\sum_{i=1}^d \langle \phi_i,\psi_i\rangle_{\mathcal H},
  \qquad
  \|\phi\|_{\mathcal H^d}^2=\sum_{i=1}^d \|\phi_i\|_{\mathcal H}^2.
\]
\end{itemize}


\subsection{Anchored kernelized Stein discrepancy}
\label{sec:anch:ksd}

Let $k:\R^d\times\R^d\to\R$ be a symmetric positive-definite kernel, $\mathcal H$ its RKHS, and $\mathcal H^d$ the associated $d$-fold RKHS. Suppose we have a target potential $U$ and a reference potential $U_0$. The target density is $p\propto e^{-U}$ and the reference density is $p_0\propto e^{-U_0}$. The Langevin Stein operator associated with $p_0$ is $A_{p_0}^{\top}\phi:=\div\phi-\nabla U_0\cdot\phi$. 
For a probability measure $q$ on $\mathbb R^d$, the anchored kernelized Stein discrepancy (KSD) of $q$ with respect to $p$ is defined by
\begin{equation}
\label{eq:anch:ksd:orig}
\mathrm{KSD}_a(q\|p)
:=
\sup\Bigl\{
\mathbb E_q\bigl[a(X)A_{p_0}^{\top}\phi(X)\bigr]:
\ \phi\in\mathcal H^d,\ \|\phi\|_{\mathcal H^d}\le 1
\Bigr\},
\end{equation}
We observe that when $U\equiv U_0$, we have $a\equiv 1$ and $p_0=p\propto e^{-U}$, so \eqref{eq:anch:ksd:orig} reduces to the standard KSD \cite{LLJ,OGC,CSG}.

The following assumptions will be used for the upcoming result. 
\begin{assumption}
\label{ass:q0:kern}
Let $q$ be a strictly positive probability density on $\mathbb R^d$, and define
\begin{equation}
\label{def:qzero}
q_0(x):=\frac{a(x)q(x)}{\int_{\mathbb R^d} a(y)q(y) dy}.
\end{equation}
Assume:
\begin{enumerate}[label=(Q\arabic*),leftmargin=2.6em]
\item\label{q:c0:univ}
The scalar kernel $k$ is bounded and $C_0$-universal on $\mathbb R^d$.
Equivalently, the RKHS $\mathcal H$ is dense in $C_0(\mathbb R^d)
:=
\Bigl\{f\in C(\mathbb R^d): f(x)\to 0 \text{ as } |x|\to\infty\Bigr\}$
with respect to the uniform norm.

\item\label{q:integ}
The quantities $a(X)\nabla\log p_0(X), a(X)\nabla\log q_0(X)$ and $ a(X)\nabla\log\frac{p(X)}{q(X)}$ belong to $L^1(q;\mathbb R^d)$.

\item\label{q:rat:loc:lip}
The function $r(x):=\log\frac{p(x)}{q(x)}$
belongs to $\mathrm{Lip}_{\mathrm{loc}}(\mathbb R^d)$.
\end{enumerate}
\end{assumption}

\medskip
\textbf{Assumption M.2.}
For every $x\in\mathbb R^d$ and every $j\in\{1,\dots,d\}$,
\begin{equation}
\label{eq:q0:kern:stein}
\mathbb E_{Y\sim q_0}\!\Big[
(\partial_j\log q_0)(Y) k(x,Y)+\partial_{y_j}k(x,Y)
\Big]=0.
\end{equation}

\smallskip
The following assumption is a direct sufficient condition for
Assumption~\blue{M.2} (see the proof of Lemma~\ref{lem:q0xi0} for the
implication \blue{M.2'} $\Rightarrow$ \blue{M.2}).

\textbf{Assumption M.2'.}
The density $q_0$ belongs to
$W_{\mathrm{loc}}^{1,1}(\mathbb R^d)$ and, for every
$x\in\mathbb R^d$,
\[
q_0( \cdot )k(x, \cdot )\in W^{1,1}(\mathbb R^d).
\]

We derive an explicit formula for anchored KSD and present a zero detection result for anchored KSD, motivated by \cite[Theorem 2.2]{CSG}. The proofs and supporting results are presented in Appendix~\ref{appen:anch:ksd}.
\begin{theorem}
\label{theo:anch:ksd:form:zero}
Assume Assumptions~\ref{ass:rkhs} and~\ref{ass:bl:ext}(A1), (A2), and (A4)(a). Then:
\begin{enumerate}[label=(\roman*),leftmargin=2.6em]
\item
\begin{align}
\label{eq:anch:ksd}
\KSD_a^2(q\|p)
=
\mathbb E_{(X,Y)\sim q\otimes q}\big[u_a(X,Y)\big],
\end{align}
where
\begin{align}
\label{def:u:a}
u_a(x,y):=a(x)a(y)\Big(
\nabla U_0(x)\cdot(k(x,y)\nabla U_0(y))
-\nabla U_0(x)\cdot\nabla_2 k(x,y)\nn\\
\qquad\qquad
-\nabla U_0(y)\cdot\nabla_1 k(x,y)
+\nabla_1\cdot\nabla_2 k(x,y)
\Big).
\end{align}

\item
Assume in addition Assumption~\ref{ass:q0:kern}, and either Assumption~\blue{M.2} or Assumption~\blue{M.2'}. Then 
\[
\mathrm{KSD}_a(q\|p)=0 \quad\Longrightarrow\quad q=p
\quad\text{a.e.}. 
\]
\end{enumerate}
\end{theorem}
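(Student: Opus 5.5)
For part (i), the plan is to follow the standard RKHS argument for KSD. First, using the boundedness and differentiability of $k$ from Assumption~\ref{ass:rkhs}, I will show that for each $i$ the map $\phi_i\mapsto \mathbb E_q[a(\partial_i\phi_i-\partial_iU_0\,\phi_i)]$ is a bounded linear functional on $\mathcal H$. The reproducing properties $\phi_i(x)=\langle\phi_i,k(x,\cdot)\rangle_{\mathcal H}$ and $\partial_i\phi_i(x)=\langle\phi_i,\partial_{x_i}k(x,\cdot)\rangle_{\mathcal H}$ will be used for this. Next I will check Bochner integrability of $x\mapsto a(x)(\partial_{x_i}k(x,\cdot)-\partial_iU_0(x)k(x,\cdot))$ in $\mathcal H$ under $q$. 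Its norm is bounded by $a(x)(\sqrt{\partial_{x_i}\partial_{y_i}k(x,x)}+|\partial_iU_0(x)|\sqrt{k(x,x)})$, which is $q$-integrable by the integrability conditions in Assumption~\ref{ass:bl:ext}(A1),(A2),(A4)(a). It then follows that $\mathbb E_q[aA_{p_0}^\top\phi]=\langle h,\phi\rangle_{\mathcal H^d}$, with $h$ as in \eqref{def:opt:dir:comp}. By Cauchy--Schwarz the supremum over the unit ball equals $\|h\|_{\mathcal H^d}$, exactly as in Theorem~\ref{theo:opt:dir}. Finally I will expand $\|h\|^2_{\mathcal H^d}=\sum_i\langle h_i,h_i\rangle$ as a double Bochner integral, using $\langle k(x,\cdot),k(y,\cdot)\rangle=k(x,y)$, $\langle\partial_{x_i}k(x,\cdot),k(y,\cdot)\rangle=\partial_{x_i}k(x,y)$ and $\langle\partial_{x_i}k(x,\cdot),\partial_{y_i}k(y,\cdot)\rangle=\partial_{x_i}\partial_{y_i}k(x,y)$. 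Summing over $i$ yields $\mathbb E_{q\otimes q}[u_a]$; Fubini is justified by the same integrability bounds.

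For part (ii), suppose $\mathrm{KSD}_a(q\|p)=0$. Then $h=0$, so $h_i(x)=0$ for every $x$, which means
\[
\mathbb E_q\big[a(X)\big(\partial_{x_i}k(X,x)-\partial_iU_0(X)k(X,x)\big)\big]=0 \quad\text{for all }x.
\]
Dividing by the constant $Z_0:=\int aq$ converts this into an expectation under $q_0$. Subtracting the Stein identity \eqref{eq:q0:kern:stein} from Assumption~M.2 (or from M.2', via the lemma cited there) removes the kernel-derivative term and gives
\[
\mathbb E_{Y\sim q_0}\big[(\partial_j\log q_0+\partial_jU_0)(Y)\,k(x,Y)\big]=0 \quad\text{for all }x,j.
\]
In terms of $q$ this reads $\int a(y)q(y)\,\partial_j\log(q_0/p_0)(y)\,k(x,y)\,dy=0$. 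Since $q_0/p_0\propto aq/e^{-U_0}=q e^{U}\propto q/p$, we have $\partial_j\log(q_0/p_0)=-\partial_j r$ a.e. (using Rademacher and (Q3)). The signed measure $\nu_j:=aq\,\partial_j r\,dy$ is finite by (Q2).

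The next step is to pass from ``$\int k(x,y)\,d\nu_j(y)=0$ for all $x$'' to $\nu_j=0$. For every $f\in\mathcal H$, writing $f=\langle f,k(x,\cdot)\rangle$ and integrating shows $\int f\,d\nu_j=\langle f,\int k(\cdot,y)d\nu_j(y)\rangle=0$. By $C_0$-universality (Q1), $\mathcal H$ is dense in $C_0$, so $\int f\,d\nu_j=0$ for all $f\in C_0$, and Riesz representation gives $\nu_j=0$. Because $a,q>0$, this forces $\nabla r=0$ a.e. Since $r$ is locally Lipschitz, it is constant on the connected set $\mathbb R^d$, and normalization gives $q=p$ a.e.

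I expect the main obstacle to be the rigorous justification of the Bochner-integral interchanges and of the step from $h_i\equiv0$ to $\int f\,d\nu_j=0$. The latter needs the mean embedding $\int k(\cdot,y)\,d\nu_j$ to be a well-defined element of $\mathcal H$, which requires boundedness of $k$ together with $\nu_j$ being finite. I will also need to handle carefully that the derivative of $\log q_0$ is the weak one coming from M.2'.
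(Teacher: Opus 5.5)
Your proposal is correct and follows essentially the paper's route. Part (i) identifies $\KSD_a(q\|p)=\|h\|_{\mathcal H^d}$ as in Steps 3--4 of the proof of Theorem~\ref{theo:opt:dir}, then expands $\|h\|_{\mathcal H^d}^2$ using the (derivative) reproducing property and Fubini. Part (ii) uses the same ingredients as Lemmas~\ref{lem:q0xi0}, \ref{lem:anch:feat:comp} and \ref{lem:c0:univ}: the Stein identity M.2 for $q_0$, the score identity $\nabla\log p_0-\nabla\log q_0=\nabla\log(p/q)$, and $C_0$-universality. The only difference is bookkeeping: you subtract M.2 pointwise in $x$ and pass through the mean embedding of $\nu_j$, whereas the paper subtracts the $\mathcal H^d$-valued means $\mathbb E_q[\xi^a_{p_0}]$ and $\mathbb E_q[\xi^a_{q_0}]$ directly, and the two are equivalent.
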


\subsection{Constructing reference potential $U_0$} 
\label{sec:ref:pot}

In the following result, we provide a general way to construct the reference potential $U_0$ from the target potential $U$ in order to satisfy our assumptions. This result is motivated by \cite[Section 5.2]{gurbuzbalaban2025anchored} which proposes Gaussian convolution smoothing to construct $U_0$. In this paper, we consider smoothing via general densities, which include the Gaussian density as a special case. We also relax the Lipschitzness assumption on $U$ in \cite[Section 5.2]{gurbuzbalaban2025anchored}.

The proof of the following result is in Appendix~\ref{appen:ref:pot}.
\begin{lemma}
\label{lem:u0:moll}
Let $U:\mathbb R^d\to\mathbb R$ be a locally Lipschitz function such that
\[
0<\int_{\mathbb R^d}e^{-U(x)} dx<\infty
\]
and
\begin{equation}
\label{eq:u:sec:der}
|\nabla U(x)-\nabla U(y)|\le L_1|x-y|+L_2
\qquad\text{for a.e. }(x,y)\in\mathbb R^d\times\mathbb R^d,
\end{equation}
for some constants $L_1,L_2>0$.

Let $\rho:\mathbb R^d\to[0,\infty)$ satisfy $\rho\in C^2(\mathbb R^d),
\int_{\mathbb R^d}\rho(z) dz=1,\int_{\mathbb R^d} z \rho(z) dz=0$ and for $\alpha=(\alpha_1,\dots,\alpha_d)\in\mathbb N_0^d$ with $|\alpha|\le 2$,
\begin{align}
\label{eq:const:u0:mom:cond}
\int_{\mathbb R^d} (1+|z|^2) |\partial^\alpha \rho(z)| dz<\infty. 
\end{align}
Define the smoothing
\begin{align}
\label{def:u0:moll}
U_0(x):=(U*\rho)(x)
=\int_{\mathbb R^d} U(x-z)\rho(z) dz.
\end{align}
Recall $a(x):=\exp\bigl(U(x)-U_0(x)\bigr)$. Then $(U,U_0,a)$ satisfies Assumption~\ref{ass:bl:ext}(A1) and (A2).

Moreover, if we further assume that
\begin{align}
\label{eq:uolem:add:ass}
\inf_{x\in\mathbb R^d} U(x)>0,
\qquad
|\nabla U(x)|\le A_0 U(x)^{\alpha_0}
\quad\text{for a.e. }x\in\mathbb R^d,
\end{align}
for some constants $A_0>0$ and $\alpha_0\in[0,1/2)$, then $U_0$ satisfies Assumption~\ref{ass:stab}(S1) with the same $\alpha_0,A_0$, and with $C_{U_0}$ depending only on $d,L_1,L_2,\rho$.
\end{lemma}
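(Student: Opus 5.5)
The plan is to derive everything from two elementary consequences of \eqref{eq:u:sec:der}: a linear growth bound on $\nabla U$, and a first-order Taylor estimate for $U$ with an additive error. Then integrate these against $\rho$, using the normalization $\int\rho=1$, the centering $\int z\rho=0$, and the moment condition \eqref{eq:const:u0:mom:cond}. Since $U$ is locally Lipschitz, it is absolutely continuous along a.e. line and $\nabla U$ exists a.e. Fixing a Lebesgue point $x^\ast$ of $\nabla U$, \eqref{eq:u:sec:der} gives $|\nabla U(x)|\le |\nabla U(x^\ast)|+L_2+L_1|x-x^\ast|$ for a.e.\ $x$. Integrating along segments (via a Fubini argument over a.e.\ parallel segment, followed by continuity of $U$) gives
\[
\bigl|U(x-z)-U(x)+\nabla U(x)\cdot z\bigr|\le \tfrac{L_1}{2}|z|^2+L_2|z| \quad\text{for a.e. } x \text{ and all } z.
\]
Both sides are continuous in $x$ except the term $\nabla U(x)\cdot z$. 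After integration against $\rho$, however, that term disappears because $\int z\rho=0$. In particular $|U(x)|\lesssim 1+|x|^2$, so \eqref{def:u0:moll} is well defined by the second-moment bound on $\rho$.

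The first main step is regularity of $U_0$. Differentiating under the integral, justified by dominated convergence with the quadratic growth of $U$ and the weighted integrability of $\partial^\alpha\rho$ in \eqref{eq:const:u0:mom:cond}, gives $\nabla U_0=U*\nabla\rho=(\nabla U)*\rho$, and $U_0\in C^2$ with $\nabla^2U_0=U*\nabla^2\rho$. Using $\int\nabla\rho=0$, I rewrite
\[
\nabla^2U_0(x)=\int\bigl(\nabla U(x-z)-\nabla U(x)\bigr)\otimes\nabla\rho(z)\,dz .
\]
This yields the uniform Hessian bound $\|\nabla^2U_0\|\le \int(L_1|z|+L_2)|\nabla\rho(z)|\,dz$, which depends only on $L_1,L_2,\rho$; it also gives a $\Delta U_0$ bound of order $d$. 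Next, integrating the Taylor estimate against $\rho$ and using the centering gives
\[
\sup_x|U(x)-U_0(x)|\le \tfrac{L_1}{2}\textstyle\int|z|^2\rho+L_2\int|z|\rho=:c_\rho .
\]
Hence $e^{-c_\rho}\le a\le e^{c_\rho}$, and $a$ is locally Lipschitz as a composition of locally Lipschitz functions. Also $\int e^{-U_0}\in(0,\infty)$ by comparison with $\int e^{-U}$. I expect these to be exactly the ingredients of Assumption~\ref{ass:bl:ext}(A1) and (A2): regularity of $U$ and $U_0$, integrability of $p$ and $p_0$, and boundedness and local Lipschitzness of $a$. I would check each listed item against the three displayed bounds.

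For the second part, I use $\nabla U_0=(\nabla U)*\rho$ and \eqref{eq:uolem:add:ass}:
\[
|\nabla U_0(x)|\le A_0\int U(x-z)^{\alpha_0}\rho(z)\,dz\le A_0\Bigl(\int U(x-z)\rho(z)\,dz\Bigr)^{\alpha_0}=A_0U_0(x)^{\alpha_0}.
\]
The second inequality is Jensen's inequality for the concave map $s\mapsto s^{\alpha_0}$, $\alpha_0\in[0,1/2)$, which is legitimate since $U>0$. Positivity $U_0\ge\inf U>0$ is inherited. The remaining smoothness constant $C_{U_0}$ in (S1) will be taken as the Hessian (Lipschitz-gradient) bound above, so it depends only on $d,L_1,L_2,\rho$.

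The main obstacle I anticipate is the careful justification, when $U$ is only locally Lipschitz, of the pointwise Taylor estimate and of differentiating under the integral, since \eqref{eq:u:sec:der} holds only for a.e.\ pairs. I would handle this by first proving the estimates for the mollifications $U*\chi_\epsilon$, where $\chi_\epsilon$ is a standard mollifier. For these the gradient bound holds for all pairs with the same constants, because mollification averages a.e.\ inequalities. I would then pass to the limit $\epsilon\to0$ using locally uniform convergence $U*\chi_\epsilon\to U$ and dominated convergence in the $\rho$-integrals.
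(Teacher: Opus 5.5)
\textbf{Gap: you never prove the global gradient bound that (A2) requires.} Most of your plan is sound and tracks the paper. The Taylor estimate integrated against the centered $\rho$ gives $\sup|U-U_0|\le c_\rho$, hence $e^{-c_\rho}\le a\le e^{c_\rho}$ and $0<\int e^{-U_0}<\infty$. The identity $\nabla U_0=(\nabla U)*\rho$ plus Jensen gives $|\nabla U_0|\le A_0U_0^{\alpha_0}$, and positivity of $U_0$ is inherited.

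You describe (A2) as ``boundedness and local Lipschitzness of $a$''. In fact Assumption~\ref{ass:bl:ext}(A2) asks for $a\in W^{1,\infty}(\mathbb R^d)$, i.e.\ a \emph{global} bound $L_a=\|\nabla a\|_{L^\infty}<\infty$. This matters downstream. It is what makes $\psi=a\phi$ globally Lipschitz, so that Lemma~\ref{lem:push:timed} applies in Theorem~\ref{theo:bl:ext}. The constant $L_a$ also appears in the remainder constants of Theorem~\ref{theo:disc:bd:ksd2}.

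Local Lipschitzness of $e^{U-U_0}$ as a composition gives nothing global. Indeed $\nabla a=a\nabla(U-U_0)$, and $\nabla U$ and $\nabla U_0$ each grow linearly, so what must be shown is that their \emph{difference} is bounded. This follows in one line from what you already have. For $x$ in the full-measure set where $\nabla U(x)$ exists and \eqref{eq:u:sec:der} holds for a.e.\ $y$,
\[
|\nabla U(x)-\nabla U_0(x)|\le\int_{\mathbb R^d}|\nabla U(x)-\nabla U(x-z)|\,\rho(z)\,dz\le L_1\int_{\mathbb R^d}|z|\rho(z)\,dz+L_2 .
\]
Hence $\|\nabla a\|_{L^\infty}\le e^{c_\rho}\bigl(L_1\int|z|\rho+L_2\bigr)$. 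This is exactly the paper's step, and without it your argument does not establish (A2).

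Similarly, (A1) is more than regularity and integrability: it contains growth bounds. You should state $|\nabla U_0(x)|\le C(1+|x|)$ for every $x$, which follows from $(\nabla U)*\rho$, the linear growth of $\nabla U$ and $\int|z|\rho<\infty$, extended everywhere by continuity. This gives $m=1$.

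\textbf{Your Hessian bound is a valid alternative route.} The paper subtracts the first-order Taylor polynomial against $\partial^\alpha\rho$ with $|\alpha|=2$, using $\int\partial^\alpha\rho=0=\int z_i\,\partial^\alpha\rho$. It then bounds each entry by $\frac{L_1}{2}\int|z|^2|\partial^\alpha\rho|+L_2\int|z||\partial^\alpha\rho|$ and pays a factor $d$ for the operator norm. Your identity $\nabla^2U_0(x)=\int(\nabla U(x-z)-\nabla U(x))\otimes\nabla\rho(z)\,dz$ uses only $\int(1+|z|)|\nabla\rho|<\infty$ and gives an operator-norm bound with no factor $d$. Evaluate it at points of the good set and extend by continuity.

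The mollification detour you plan is unnecessary. The Fubini argument on a.e.\ pairs, which you already sketch for the Taylor estimate, suffices.
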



\section{KL first variation formula and optimal direction field}
\label{sec:firstvariation}

\subsection{Proof overview for the KL first variation formula}

Consider the transport equation \begin{align}\label{eq:trans:cls}\dot\Phi_t=\psi\circ\Phi_t,\qquad \psi=a\phi\end{align} and assume it is well-defined. If the initial density is $q$, then the family of pushforward densities is \begin{align}\label{eq:push:sk} q_t=(\Phi_t)_{\#}q, t\geq 0.\end{align} Let $p=Z^{-1}e^{-U}$ be another density and define the function 
\[
 F(t):=\KL(q_t\|p).
\]
Let us first provide a \textit{formal, non-rigorous} derivation of $F'(0)$. If all objects were sufficiently smooth and integrable, then the continuity equation
\begin{align}
\label{eq:cont:eq:cls}
  \partial_t q_t+\operatorname{div}(q_t\psi)=0
\end{align}
has a unique classical solution. Combining with Leibniz's integral rule and integration by parts to get
\begin{align}
\label{eq:kl:first:var:cls}
  \frac{d}{dt}F(t)=
  \int_{\mathbb R^d}(\log q_t+1+U)\partial_t q_t dx  &\stackrel{\eqref{eq:cont:eq:cls}}{=}
  -\int_{\mathbb R^d}(\log q_t+1+U)\operatorname{div}(q_t\psi) dx \notag\\&=
  \int_{\mathbb R^d}
  \nabla(\log q_t+U)\cdot \psi q_t dx .
\end{align}
At $t=0$, we have  $ F'(0)=\int_{\mathbb R^d}
  \bigl(\nabla\log q+\nabla U\bigr)\cdot a\phi q dx $. Then after applying $\int_{\mathbb R^d} a\phi\cdot\nabla q dx
  =
  -\int_{\mathbb R^d} q \operatorname{div}(a\phi) dx$ (due to integration by parts) and recalling that $ a=e^{U-U_0}$, we arrive at 
\begin{align}\label{intro:kl:first:var:0} F'(0)=
  -\mathbb E_q\bigl[
  a(\operatorname{div}\phi-\nabla U_0\cdot\phi)
  \bigr],\end{align}
which is the desired KL first variation formula in Theorem \ref{theo:bl:ext}.

We emphasize that the above argument to obtain $F'(0)$ is only formal and not rigorous, and a key contribution of this paper is to derive \eqref{intro:kl:first:var:0} rigorously under
low regularity on $U$ in Assumption~\ref{ass:bl:ext}, namely $U$ is only locally Lipschitz. Under Assumption~\ref{ass:bl:ext}, $q_t$ is the pushforward of $q$ by a Lipschitz flow, so that the continuity equation \eqref{eq:cont:eq:cls}
is only well-defined in the distributional sense. Thus, the substitution of \eqref{eq:cont:eq:cls} in \eqref{eq:kl:first:var:cls} is not justified.

We summarize next a rigorous approach to derive
\eqref{intro:kl:first:var:0}. Let $\Phi_t$ be the flow generated by the
bounded Lipschitz vector field $\psi=a\phi$, and let
$J_t=\det\nabla\Phi_t$. The pushforward formula gives
\begin{align}
\label{eq:kl:trans:sk}
F(t)-F(0)
=
\int_{\R^d}q(x)
\Bigl[U(\Phi_t(x))-U(x)-\log J_t(x)\Bigr]dx.
\end{align}
The Liouville identity gives
$\log J_t(x)=\int_0^t(\div\psi)(\Phi_r(x))dr$, while the chain rule gives
$U(\Phi_t(x))-U(\Phi_s(x))
=\int_s^t\nabla U(\Phi_r(x))\cdot\psi(\Phi_r(x))dr$ for a.e.\ $x$.
Fubini's theorem and $q_r=(\Phi_r)_\#q$ then give
\[
F(t)-F(s)
=
\int_s^t\int_{\R^d}q_r(x)
\bigl(\nabla U(x)\cdot\psi(x)-\div\psi(x)\bigr)dxdr.
\]
Since $\psi=a\phi$,
$\div(a\phi)=\nabla a\cdot\phi+a\div\phi$, and
$\nabla a=a(\nabla U-\nabla U_0)$ a.e., this becomes
\[
F(t)-F(s)
=
\int_s^t\int_{\R^d}q_r(x)g(x)dxdr,
\qquad
g(x):=-a(x)\bigl(\div\phi(x)-\nabla U_0(x)\cdot\phi(x)\bigr).
\]

We then show that
$r\mapsto\int_{\R^d}q_r(x)g(x)dx$ is continuous on $I$.
Hence \eqref{eq:kl:der:int:form} leads to $F\in C^1(I)$ and also \eqref{intro:kl:first:var:0}.

\subsection{General transport results}

\begin{lemma} 
\label{lem:push:timed}
Let $I\subset\mathbb R$ be a compact interval containing $0$, and let $b\in L^\infty\!\bigl(I;W^{1,\infty}(\mathbb R^d;\mathbb R^d)\bigr)$ be strongly measurable. Set
\begin{align}
\label{def:bound:const}
M_I:=\operatorname*{ess sup}_{t\in I}\|b_t\|_{L^\infty},
\qquad
L_I:=\operatorname*{ess sup}_{t\in I}\|\nabla b_t\|_{L^\infty},
\qquad
D_I:=\operatorname*{ess sup}_{t\in I}\|\div b_t\|_{L^\infty}.
\end{align}
Then there exists a unique two-parameter flow $(\Phi_{t,s})_{s,t\in I}$ satisfying
\begin{equation}\label{eq:flow:time:dep}
\Phi_{t,s}(x)
=x+\int_s^t b_r(\Phi_{r,s}(x)) dr. 
\end{equation}
Furthermore, it has the following properties.
\begin{enumerate}[label=(\roman*),leftmargin=2em]
\item
$\Phi_{t,r}\circ\Phi_{r,s}=\Phi_{t,s}$,
$\Phi_{s,s}=\mathrm{Id}$, and
$\Phi_{t,s}^{-1}=\Phi_{s,t}$.

\item
For every $s,t\in I$ and $x,y\in\mathbb R^d$,
\begin{align*}
&|\Phi_{t,s}(x)-x|\le M_I|t-s|\quad\text{and}\quad e^{-L_I|t-s|}|x-y|
\le|\Phi_{t,s}(x)-\Phi_{t,s}(y)|
\le e^{L_I|t-s|}|x-y|. 
\end{align*}
In particular, $\Phi_{t,s}:\mathbb R^d\to\mathbb R^d$ is a bi-Lipschitz homeomorphism.

\item
There is a positive Borel function $J_{t,s}(x)$ which, for each $s,t$,
coincides a.e.\ with $\det\nabla\Phi_{t,s}(x)$. For every fixed $s$, there
is a null set $N_s$ such that, for $x\notin N_s$, the map
$t\mapsto J_{t,s}(x)$ is absolutely continuous and
\begin{equation}\label{eq:liouv:lip}
\partial_tJ_{t,s}(x)
=(\div b_t)(\Phi_{t,s}(x))J_{t,s}(x),
\qquad J_{s,s}(x)=1,
\end{equation}
for a.e.\ $t\in I$. Moreover, $\log J_{t,s}(x)
=\int_s^t(\div b_r)(\Phi_{r,s}(x)) dr$ for every $t\in I$ and
\begin{equation}\label{eq:bd:jt}
e^{-D_I|t-s|}\le J_{t,s}(x)\le e^{D_I|t-s|}
\quad\text{for a.e.\ }x.
\end{equation}

\item
For every nonnegative measurable
$\varphi:\mathbb R^d\to[0,\infty]$ and every $s,t\in I$, we have the
change of variables
\begin{equation}\label{eq:area:form}
\int_{\mathbb R^d}\varphi(\Phi_{t,s}(x)) dx
=
\int_{\mathbb R^d}
\frac{\varphi(y)}
{J_{t,s}(\Phi_{s,t}(y))} dy.
\end{equation}
Furthermore,
\begin{equation}\label{eq:jac:inv}
\frac{1}{J_{t,s}(\Phi_{s,t}(y))}
=J_{s,t}(y)
\qquad\text{for a.e.\ }y.
\end{equation}

\item
If $q\in L^1(\mathbb R^d)$ and $q_{t,s}$ is defined by
\[
(\Phi_{t,s})_\#(q(x)dx)=q_{t,s}(y)dy,
\]
then
\begin{equation}\label{eq:push:expl:gen}
q_{t,s}(y)
=
\frac{q(\Phi_{s,t}(y))}
{J_{t,s}(\Phi_{s,t}(y))}
=q(\Phi_{s,t}(y))J_{s,t}(y)
\qquad\text{for a.e.\ }y.
\end{equation}
In particular,
\begin{equation}\label{eq:push:expl}
q_{t,s}(\Phi_{t,s}(x))
=\frac{q(x)}{J_{t,s}(x)}
\qquad\text{for a.e.\ }x.
\end{equation}

\item
If $q\in L^1(\mathbb R^d)$ and $q_{t,s}$ is defined as in Part v, then
for every $s,t\in I$,
\begin{equation}\label{eq:pres:l1}
\|q_{t,s}\|_{L^1}
=\|q\|_{L^1}.
\end{equation}
In particular, if $q$ is a probability density, then so is $q_{t,s}$.

\item
If $q\in L^1(\mathbb R^d)$, write
$\Phi_t:=\Phi_{t,0}$ and $q_t:=q_{t,0}$. Then the map
$t\mapsto q_t$ is continuous from $I$ to $L^1(\mathbb R^d)$.

\item
If $q\in L^1(\mathbb R^d)$, write
$\Phi_t:=\Phi_{t,0}$ and $q_t:=q_{t,0}$. Then
\[
\partial_tq_t+\nabla\cdot(q_tb_t)=0
\quad\text{in the weak sense}.
\]

\end{enumerate}
\end{lemma}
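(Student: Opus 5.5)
The plan is to treat the time-dependent, globally Lipschitz field $b$ with the standard Cauchy--Lipschitz theory, working with the integral form \eqref{eq:flow:time:dep} because $t\mapsto b_t$ is only measurable.

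\textbf{Existence, uniqueness and parts (i)--(ii).} For fixed $s$ and $x$, I will apply Picard iteration on $C(I;\mathbb R^d)$. The map $\gamma\mapsto x+\int_s^\cdot b_r(\gamma_r)\,dr$ is well defined: strong measurability of $b$ in $W^{1,\infty}$ and continuity of each $b_r$ make $r\mapsto b_r(\gamma_r)$ measurable and bounded by $M_I$. The map is Lipschitz with constant $L_I$ in the weighted norm $\sup_t e^{-2L_I|t-s|}|\gamma_t|$, so it is a contraction and has a unique fixed point.
\begin{itemize}
\item Uniqueness gives the flow property (i): both $t\mapsto\Phi_{t,r}(\Phi_{r,s}(x))$ and $t\mapsto\Phi_{t,s}(x)$ solve the same integral equation from time $r$. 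Taking $t=s$ yields the inverse relation.
\item The displacement bound in (ii) is immediate from $|b|\le M_I$.
\item The two-sided Lipschitz bound follows from Gronwall applied to $|\Phi_{t,s}(x)-\Phi_{t,s}(y)|$, in forward and backward time.
\end{itemize}

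\textbf{Jacobian, parts (iii)--(iv).} I will mollify in space: set $b^\varepsilon_t=b_t*\eta_\varepsilon$. Then $b^\varepsilon$ is smooth in $x$ with the same bounds $M_I,L_I,D_I$, and $b^\varepsilon\to b$ locally uniformly for a.e.\ $t$. The smooth flows $\Phi^\varepsilon_{t,s}$ satisfy the classical Liouville formula
\[
J^\varepsilon_{t,s}(x)=\exp\int_s^t(\div b^\varepsilon_r)(\Phi^\varepsilon_{r,s}(x))\,dr ,
\]
and hence the bounds \eqref{eq:bd:jt} and the area formula \eqref{eq:area:form} hold for them. By Gronwall, $\Phi^\varepsilon\to\Phi$ locally uniformly.

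I then define $J_{t,s}(x):=\exp\int_s^t(\div b_r)(\Phi_{r,s}(x))\,dr$. This requires a Borel representative of $(r,y)\mapsto\div b_r(y)$. The expression is well defined for a.e.\ $x$ because $\Phi_{r,s}$ maps null sets to null sets, being bi-Lipschitz. For fixed $s$, this $J$ is absolutely continuous in $t$ and satisfies \eqref{eq:liouv:lip} and \eqref{eq:bd:jt} directly.

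\textbf{Main obstacle.} The difficult step is identifying $J_{t,s}$ with $\det\nabla\Phi_{t,s}$ a.e. and passing \eqref{eq:area:form} to the limit. I expect this to be the main obstacle, since only weak-$*$ convergence of $\nabla\Phi^\varepsilon$ is available and $\det$ is nonlinear. I will try two routes in order.
\begin{itemize}
\item \emph{First route.} Show $\int\varphi(\Phi^\varepsilon_{t,s})\to\int\varphi(\Phi_{t,s})$ for $\varphi\in C_c$. Show also that $(\div b^\varepsilon_r)\circ\Phi^\varepsilon_{r,s}\to(\div b_r)\circ\Phi_{r,s}$ in $L^1_{\rm loc}$. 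For the latter I will split into $\div b^\varepsilon_r-\div b_r$, which tends to $0$ in $L^1_{\rm loc}$ and is composed with uniformly bi-Lipschitz maps, plus $(\div b_r)\circ\Phi^\varepsilon-(\div b_r)\circ\Phi$. For the second piece I approximate $\div b_r$ by a continuous function in $L^1_{\rm loc}$ and again use the uniform Jacobian bounds.
\item \emph{Second route.} Alternatively, invoke the known result that determinants converge weakly for the minors of uniformly Lipschitz maps converging uniformly (Reshetnyak/Ball), which gives $\det\nabla\Phi^\varepsilon\rightharpoonup\det\nabla\Phi$.
\end{itemize}
Either route gives $\det\nabla\Phi_{t,s}=J_{t,s}$ a.e. The area formula for bi-Lipschitz maps then gives \eqref{eq:area:form}. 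Finally, \eqref{eq:jac:inv} follows by differentiating $\Phi_{t,s}\circ\Phi_{s,t}=\mathrm{Id}$ a.e., via the chain rule for Lipschitz maps composed with bi-Lipschitz maps, or alternatively from the flow property together with the exponential formula.

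\textbf{Parts (v)--(viii).} Applying \eqref{eq:area:form} with $\varphi=f\cdot(q\circ\Phi_{s,t})J_{s,t}$ yields the density formula \eqref{eq:push:expl:gen}, and \eqref{eq:push:expl} is its evaluation at $y=\Phi_{t,s}(x)$. Taking $f=1$ and $q\ge0$ (otherwise using $|q|$) gives \eqref{eq:pres:l1}.

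For (vii), I first take $q\in C_c$. Then $q(\Phi_{0,t}(y))J_{0,t}(y)\to q(\Phi_{0,t_1}(y))J_{0,t_1}(y)$ a.e. as $t\to t_1$, with uniform compact support and bounds, so dominated convergence applies. I use that $J_{0,t}$ is continuous in $t$ a.e., via the flow property and the exponential formula. I then extend to $q\in L^1$ by density, since $\|q_t-\tilde q_t\|_{L^1}=\|q-\tilde q\|_{L^1}$ by linearity and (vi).

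For (viii), take a test function $\varphi\in C_c^\infty$. Then $t\mapsto\varphi(t,\Phi_t(x))$ is absolutely continuous with derivative $\partial_t\varphi+b_t\cdot\nabla\varphi$ evaluated along the flow. Integrating against $q\,dx$, using Fubini and the pushforward identity, and using that $\varphi$ vanishes at the endpoints gives the weak continuity equation.
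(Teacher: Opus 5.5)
\textbf{The gap is in part (vii).} You assert that $t\mapsto J_{0,t}(y)$ is continuous for a.e.\ $y$ ``via the flow property and the exponential formula''. This holds for autonomous $b$, which is the only case the paper uses later: there $J_{0,t}(y)=\exp\bigl(-\int_{-t}^{0}(\div b)(\Phi_u(y))\,du\bigr)$. But the lemma allows time-dependent $b$, and then the flow property only yields $J_{0,t}(y)=J_{0,t_1}(\Phi_{t_1,t}(y))\,J_{t_1,t}(y)$. The second factor tends to $1$ uniformly, but the first is a merely bounded measurable function evaluated at a moving point.

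Here is a concrete failure. Take $d=1$, $I=[0,2]$, a fat Cantor set $C\subset[0,1]$, $\beta:=\min\{\mathrm{dist}(\cdot,C),1\}$, and $b_r:=\beta$ for $r\le 1$, $b_r:\equiv 1$ for $r>1$. Let $\Psi$ be the flow of $\beta$. For $t\in[1,2]$ one finds $J_{0,t}(y)=\exp\bigl(-G(y-(t-1))\bigr)$ with $G(z):=\int_{-1}^{0}\beta'(\Psi_u(z))\,du$.
\begin{enumerate}
\item Points of $C$ are fixed and $\beta'=0$ a.e.\ on $C$, so $G=0$ a.e.\ on $C$.
\item $G\equiv 1$ on the left half of every gap of $C$, and $G\equiv -1$ near its right endpoint.
\item Entire gaps accumulate from the left at every point of $C$ that is not a right endpoint of a gap.
\end{enumerate}
So for a.e.\ $y\in C$, a set of positive measure, $J_{0,t}(y)$ equals $e^{-1}$ and $e$ on intervals of $t$ arbitrarily close to $1^+$, whereas $J_{0,1}(y)=1$. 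Since these values occupy intervals of $t$, this is not an artifact of the choice of representative. Your dominated-convergence step therefore has no pointwise limit to use.

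The conclusion of (vii) is still true, but you need the paper's route:
\begin{enumerate}
\item Fix $s$ and write $q_t=(\Phi_{t,s})_\#q_s$.
\item Approximate $q_s$ (not the initial $q$) by $g\in C_c^\infty$.
\item Bound $\|(\Phi_{t,s})_\#g-g\|_{L^1}$ using only $|g(\Phi_{s,t}(y))-g(y)|\le\omega_g(M_I|t-s|)$ and $|J_{s,t}-1|\le e^{D_I|t-s|}-1$.
\item Close with the isometry \eqref{eq:pres:l1}.
\end{enumerate}
This uses the Jacobian only near the identity, where \eqref{eq:bd:jt} gives uniform control.

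Elsewhere your plan is a legitimate and more self-contained alternative to the paper. The paper cites Ambrosio--Gigli--Savar\'e for well-posedness and Ambrosio's Lipschitz-flow theory for (iii), rather than constructing $J_{t,s}$ by mollification and the exponential formula. Two smaller points remain.
\begin{enumerate}
\item Route~1 identifies $J_{t,s}$ only with $|\det\nabla\Phi_{t,s}|$, since that is all the area formula sees. The positivity asserted in (iii) needs route~2 (weak continuity of Jacobians) combined with the $L^1_{\mathrm{loc}}$ convergence $J^\varepsilon_{t,s}\to J_{t,s}$ that route~1 supplies. The two routes are therefore complementary rather than interchangeable.
\item In (v), the test function $f\,(q\circ\Phi_{s,t})\,J_{s,t}$ belongs in the basic area formula $\int g(\Phi_{t,s})\,J_{t,s}\,dx=\int g\,dy$ together with \eqref{eq:jac:inv}. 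With \eqref{eq:area:form} as stated you should take $\varphi=f\,(q\circ\Phi_{s,t})$.
\end{enumerate}
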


\begin{proof}

\medskip
\noindent\underline{Part i:} \cite[Lemma 8.1.4]{ambrosioGigliSavare2005} gives, for every
$s\in I$ and $x\in\mathbb R^d$, a unique absolutely continuous
solution of \eqref{eq:flow:time:dep}. For every $r,s,t\in I$, uniqueness gives $\Phi_{t,r}\bigl(\Phi_{r,s}(x)\bigr)=\Phi_{t,s}(x)$ so that  $\Phi_{s,t}\circ\Phi_{t,s}=\operatorname{Id}, \Phi_{t,s}\circ\Phi_{s,t}=\operatorname{Id}$ and $\Phi_{t,s}^{-1}=\Phi_{s,t}$. 

\medskip
\noindent\underline{Part ii:} The displacement bound follows from
\[
|\Phi_{t,s}(x)-x|
=
\left|\int_s^t b_r(\Phi_{r,s}(x)) dr\right|
\le \int_0^{|t-s|}M_I dr
=M_I|t-s|.
\]
The upper bound on $|\Phi_{t,s}(x)-\Phi_{t,s}(y)|$ is a consequence of the Lipschitz bound for $b_r$
and Gr\"onwall's inequality applied to
$|\Phi_{t,s}(x)-\Phi_{t,s}(y)|$. Applying this estimate to the inverse
$\Phi_{s,t}$ gives the lower bound and the claimed bi-Lipschitz property.

\medskip
\noindent\underline{Part iii:}
Since $b$ is bounded and uniformly Lipschitz in $x$, the hypotheses of
classical Lipschitz-flow theory
(see for example \cite[Remark~6.3]{ambrosio2004transport}) are satisfied.
That result provides a Borel function $J_{t,s}(x)>0$ such that:
\begin{itemize}
\item the Liouville-type identity \eqref{eq:liouv:lip} holds for
      a.e.\ $t$ and a.e.\ $x$;
\item the exponential bounds \eqref{eq:bd:jt} hold for every $s,t\in I$
      and a.e.\ $x$;
\item for each fixed $s,t$, $J_{t,s}(\cdot)$ coincides a.e.\ with
      $\det\nabla\Phi_{t,s}(\cdot)$.
\end{itemize}
For every fixed $s$, the map $t\mapsto J_{t,s}(x)$ is absolutely
continuous outside a null set of $x$. Integrating
\eqref{eq:liouv:lip} gives the logarithmic identity in the statement.

\medskip
\noindent\underline{Part iv:}
We now justify \eqref{eq:area:form} using the area formula. Recall
the following version of Federer~\cite[Thm.~3.2.5]{federer}: if
$F:\mathbb R^d\to\mathbb R^d$ is Lipschitz,
$A\subset\mathbb R^d$ is Lebesgue measurable, $J_F$ denotes the a.e.\
defined Jacobian determinant of $F$, and
$g:\mathbb R^d\to[0,\infty)$ is measurable, then
\begin{equation}\label{eq:feder:area}
\int_A g(F(x))J_F(x) dx
=
\int_{\mathbb R^d}g(y)N(F,A,y) dy,
\end{equation}
where
$N(F,A,y):=\#\{x\in A:F(x)=y\}$ is the multiplicity of $F$ on $A$ at $y$. We apply \eqref{eq:feder:area} with $F=\Phi_{t,s}$,
$A=\mathbb R^d$, and $J_F=J_{t,s}$. By Parts (i)--(ii), the map
$\Phi_{t,s}$ is a bi-Lipschitz homeomorphism with inverse $\Phi_{s,t}$,
hence it is bijective. Therefore $N(\Phi_{t,s},\mathbb R^d,y)=1$ for all
$y\in\mathbb R^d$, and \eqref{eq:feder:area} reduces to
\begin{equation}\label{eq:area:basic}
\int_{\mathbb R^d}g(\Phi_{t,s}(x))J_{t,s}(x) dx
=
\int_{\mathbb R^d}g(y) dy
\end{equation}
for every nonnegative measurable $g$.

Fix $s,t\in I$ and let
$\varphi:\mathbb R^d\to[0,\infty)$ be measurable. Since
$J_{t,s}(x)>0$ a.e., define
$g(y):=\varphi(y)/J_{t,s}(\Phi_{s,t}(y))$. Then $g$ is nonnegative, and
we can apply \eqref{eq:area:basic} with $g$ to get
\begin{align*}
\int_{\mathbb R^d}\varphi(\Phi_{t,s}(x)) dx
&=
\int_{\mathbb R^d}\varphi(\Phi_{t,s}(x))
\frac{J_{t,s}(x)}
{J_{t,s}(\Phi_{s,t}(\Phi_{t,s}(x)))} dx\stackrel{\eqref{eq:area:basic}}{=}
\int_{\mathbb R^d}
\frac{\varphi(y)}
{J_{t,s}(\Phi_{s,t}(y))} dy.
\end{align*}
This proves \eqref{eq:area:form} for nonnegative measurable $\varphi$.
The extension to integrable real-valued
$\varphi$ follows by applying the identity to $\varphi^+$ and
$\varphi^-$. Finally, differentiating
$\Phi_{t,s}\circ\Phi_{s,t}=\mathrm{Id}$ at points of differentiability
and using the fact that bi-Lipschitz maps preserve Lebesgue-null sets,
the chain rule and determinants give
$J_{t,s}(\Phi_{s,t}(y))J_{s,t}(y)=1$ a.e., which proves
\eqref{eq:jac:inv}.

\medskip
\noindent\underline{Part v:}
For $f\in C_c^\infty(\mathbb R^d)$, the pushforward formula gives us
\[
\int_{\mathbb R^d}f(y)q_{t,s}(y) dy
=
\int_{\mathbb R^d}f(\Phi_{t,s}(x))q(x) dx.
\]
Regarding the right-hand side, an application of \eqref{eq:area:form} with
$\varphi(y):=f(y)q(\Phi_{s,t}(y))$ leads to
\[
\int_{\mathbb R^d}f(\Phi_{t,s}(x))q(x) dx
=
\int_{\mathbb R^d}
f(y)\frac{q(\Phi_{s,t}(y))}
{J_{t,s}(\Phi_{s,t}(y))} dy.
\]
The last two equations imply \eqref{eq:push:expl:gen}. Now substituting $y=\Phi_{t,s}(x)$ and using $J_{t,s}(x)>0$ a.e.\ from \eqref{eq:bd:jt} give \eqref{eq:push:expl}.

\medskip
\noindent\underline{Part vi:}
By \eqref{eq:push:expl:gen} and the change of variables
$y=\Phi_{t,s}(x)$,
\[
\|q_{t,s}\|_{L^1}
\stackrel{\eqref{eq:push:expl:gen}}{=}
\int_{\mathbb R^d}
\frac{|q(\Phi_{s,t}(y))|}
{J_{t,s}(\Phi_{s,t}(y))}dy
=\int_{\mathbb R^d}|q(x)|dx
=\|q\|_{L^1}.
\]
\medskip
\noindent\underline{Part vii:}
Fix $s\in I$ and choose functions
$g_n\in C_c^\infty(\mathbb R^d)$ with $g_n\to q_s$ in $L^1$. Let
$g_{n,t,s}$ denote the density of $(\Phi_{t,s})_\#(g_n(x)dx)$.
We first show that
\[
\|g_{n,t,s}-g_n\|_{L^1}\longrightarrow0
\qquad\text{as }t\to s
\]
for every fixed $n$. Set $\tau:=|t-s|$. By
\eqref{eq:push:expl:gen},
$g_{n,t,s}(y)=g_n(\Phi_{s,t}(y))J_{s,t}(y)$ a.e. Hence,
\begin{align*}
\|g_{n,t,s}-g_n\|_{L^1}
&\le
\int_{\mathbb R^d}
\bigl|g_n(\Phi_{s,t}(y))-g_n(y)\bigr|J_{s,t}(y)dy
+\int_{\mathbb R^d}|g_n(y)|\bigl|J_{s,t}(y)-1\bigr|dy.
\end{align*}
Let $K_n:=\supp g_n$ and $K_n^*:=\{y\in\mathbb R^d:\operatorname{dist}(y,K_n)\le M_I|I|\}$. 
The displacement estimate in Part ii shows that the integrand in the
first integral vanishes outside $K_n^*$. Set
\[
\omega_{g_n}(r):=
\sup_{|x-y|\le r}|g_n(x)-g_n(y)|,
\]
then \eqref{eq:bd:jt} gives
\[
\int_{\mathbb R^d}
\bigl|g_n(\Phi_{s,t}(y))-g_n(y)\bigr|J_{s,t}(y)dy
\le
e^{D_I\tau}|K_n^*|\omega_{g_n}(M_I\tau).
\]
\[
\int_{\mathbb R^d}|g_n(y)|\bigl|J_{s,t}(y)-1\bigr|dy
\le
\bigl(e^{D_I\tau}-1\bigr)\|g_n\|_{L^1}.
\]
Both terms on the right-hand side converge to zero as $t\to s$. Since
$q_t=(\Phi_{t,s})_\#q_s$, \eqref{eq:pres:l1}  says
$\|q_t-g_{n,t,s}\|_{L^1}=\|q_s-g_n\|_{L^1}$, and consequently
\begin{align*}
\|q_t-q_s\|_{L^1}
&\le
\|q_t-g_{n,t,s}\|_{L^1}
+\|g_{n,t,s}-g_n\|_{L^1}
+\|g_n-q_s\|_{L^1}\\
&\stackrel{\eqref{eq:pres:l1}}{=}
2\|q_s-g_n\|_{L^1}
+\|g_{n,t,s}-g_n\|_{L^1}.
\end{align*}
First let $t\to s$ and then let $n\to\infty$. This proves continuity of
$t\mapsto q_t$ from $I$ to $L^1(\mathbb R^d)$.

\medskip
\noindent\underline{Part viii:}
Write $\Phi_t:=\Phi_{t,0}$ and $q_t:=q_{t,0}$. If
$\eta\in C_c^\infty(I\times\mathbb R^d)$, then for every
$x$ the map $t\mapsto\eta(t,\Phi_t(x))$ is absolutely continuous and
\[
\frac{d}{dt}\eta(t,\Phi_t(x))
=
\partial_t\eta(t,\Phi_t(x))
+
\nabla\eta(t,\Phi_t(x))\cdot b_t(\Phi_t(x))
\quad\text{for a.e.\ }t.
\]
Its derivative is bounded by
$\|\partial_t\eta\|_{L^\infty}+M_I\|\nabla\eta\|_{L^\infty}$, which is
integrable with respect to $|q(x)|dxdt$. Fubini's theorem, the compact support of $\eta$ in the time
variable, and the pushforward formula now give the claimed continuity equation. This completes the proof. 
\end{proof}

\subsection{KL first variation formula}
\label{sec:bl:ext}

\begin{assumption}\label{ass:phi:bl}
$\phi\in \mathrm{BL}(\R^d;\R^d)$, i.e.  $\phi$ is bounded and
        Lipschitz on $\R^d$.
\end{assumption}

\begin{assumption}\label{ass:bl:ext}
Fix an integer $m\geq 1$. We assume:
\begin{enumerate}[label=(A\arabic*),leftmargin=2.4em]
 \item $U\in \mathrm{Lip}_{\mathrm{loc}}(\mathbb R^d)$ and $U_0\in C^1(\mathbb R^d)$. Set $a:=e^{U-U_0},
 p:=Z^{-1}e^{-U}$ and
 $p_0:=Z_0^{-1}e^{-U_0}$. There exists a constant $C_U>0$ such that
\[
|\nabla U_0(x)|
\le C_U\bigl(1+|x|^m\bigr)
\qquad\text{for all }x\in\mathbb R^d,
\]
and
\[
|\nabla U(x)|
\le C_U\bigl(1+|x|^m\bigr),
\qquad
|U(x)|
\le C_U\bigl(1+|x|^{m+1}\bigr)
\qquad\text{for a.e. }x\in\mathbb R^d.
\]

  \item Uniform Lipschitz anchor:
        $a\in W^{1,\infty}(\R^d)$.
        Let $A_\infty:=\|a\|_{L^\infty}<\infty$ and $L_a:=\|\nabla a\|_{L^\infty}<\infty$.

  \item The initial density $q\in L^1(\mathbb R^d)$. 

 \item Growth and integrability for the target:
        \begin{itemize}
        \item[(a)] $q$ has a finite $m$-th moment: $\int_{\R^d}\bigl(1+|x|^m\bigr) q(x) dx < \infty$. 
        \item[(b)] The initial KL divergence is finite: $ \KL(q\|p) = \int_{\R^d} q(x)\log\frac{q(x)}{p(x)} dx < \infty$. 
        \end{itemize}
\end{enumerate}
\end{assumption}

\subsubsection{Proof of Theorem~\ref{theo:bl:ext}}
Under Assumptions \ref{ass:phi:bl} and \ref{ass:bl:ext}, the anchored vector
field $\psi:=a\phi$
belongs to $W^{1,\infty}(\mathbb R^d;\mathbb R^d)$. In particular,
$\psi$ admits a bounded globally Lipschitz representative, and
$\div\psi\in L^\infty(\mathbb R^d)$. Therefore, the ODE
\begin{equation}
\label{bl:def:fl}
\dot\Phi_t(x)=\psi(\Phi_t(x)),\qquad \Phi_0(x)=x,
\end{equation}
is well-posed, and the existence and uniqueness of
Lemma~\ref{lem:push:timed}, applied with the time-independent field
$b_s\equiv\psi$ on each compact time interval,
gives the associated global flow $(\Phi_t)_{t\in\mathbb R}$. We define
\begin{equation}
\label{bl:def:push:den}
q_t:=(\Phi_t)_\#q.
\end{equation}
The continuity equation in the statement of Theorem~\ref{theo:bl:ext} follows from Lemma~\ref{lem:push:timed}(viii), applied
with $b_s\equiv\psi$. Fix $t_0>0$, take $I=[-t_0,t_0]$ and write
\[
F(t):=\KL(q_t\|p).
\]
We next show absolute continuity of $F$ on $[-t_0,t_0]$.

Let $J_t:=J_{t,0}$ be the Jacobian from
Lemma~\ref{lem:push:timed}(iii) for the time-independent field
$b_s\equiv\psi$.
Set
\[
\mathcal J(t):=-\int_{\R^d}q(x)\log J_t(x) dx,
\qquad
\mathcal U(t):=
\int_{\R^d}q(x)\bigl(U(\Phi_t(x))-U(x)\bigr) dx.
\]

\medskip
\noindent\underline{Step 1:} Via the density formula \eqref{eq:push:expl} in
Lemma~\ref{lem:push:timed}(v) which gives $q_t(\Phi_t(x))=\frac{q(x)}{J_t(x)}$ a.e. and $p=Z^{-1}e^{-U}$, we obtain for a.e. $x$ and on the set $\{q>0\}$ that  $\log\frac{q_t(\Phi_t(x))}{p(\Phi_t(x))}
=
\log q(x)-\log J_t(x)+U(\Phi_t(x))+\log Z
=
\log\frac{q(x)}{p(x)}
+U(\Phi_t(x))-U(x)-\log J_t(x).$
Combining this with the pushforward formula gives 
\begin{align*}
F(t)
&=
\int_{\R^d}q_t(y)\log\frac{q_t(y)}{p(y)}dy=
\int_{\R^d}q(x)
\log\frac{q_t(\Phi_t(x))}{p(\Phi_t(x))}dx\\
&=
\int_{\R^d}q(x)\log\frac{q(x)}{p(x)}dx
+
\int_{\R^d}q(x)
\Bigl[
U(\Phi_t(x))-U(x)-\log J_t(x)
\Bigr]dx.
\end{align*}
and hence
\begin{align}
\label{bl:eq:kl:trans}
F(t)-F(0)
=
\int_{\R^d}q(x)
\Bigl[
U(\Phi_t(x))-U(x)-\log J_t(x)
\Bigr]dx.
\end{align}

\medskip
\noindent\underline{Step 2:}
By Lemma~\ref{lem:push:timed}(iii), for a.e.\ $x$, the map
$t\mapsto J_t(x)$ is positive, absolutely continuous and $\log J_t(x)
=
\int_0^t(\div\psi)(\Phi_s(x)) ds $. 
Moreover, since $\div\psi\in L^\infty(\R^d)$, the integrand of $\mathcal J(t)$ is bounded by $|I|
\|\div\psi\|_{L^\infty}q(x)$
which is integrable. Then Fubini's theorem and the pushforward formula give us
\begin{align}
\label{bl:eq:j:dbl:int}
\mathcal J(t)
&=
-\int_0^t\int_{\R^d}
q_s(y)\div\psi(y) dy ds.
\end{align}
Thus, $\mathcal J$ is absolutely continuous on $I$.

 \medskip
\noindent\underline{Step 3:}
Next, let $M_*:=\|\psi\|_{L^\infty}$. Lemma~\ref{lem:push:timed}(ii) gives
\begin{equation}
\label{bl:eq:direc:disp}
|\Phi_t(x)-x|\le M_*|t|,
\qquad t\in I.
\end{equation}
In addition, \eqref{bl:eq:direc:disp} gives
\begin{equation}
\label{bl:eq:mom:prop}
\sup_{t\in I}\int_{\R^d}(1+|y|^m)q_t(y) dy
=
\sup_{t\in I}\int_{\R^d}(1+|\Phi_t(x)|^m)q(x) dx
<\infty.
\end{equation}
The chain rule gives for $q$-a.e.\ $x$ that $\frac{d}{dt}U(\Phi_t(x))
=
\nabla U(\Phi_t(x))\cdot \psi(\Phi_t(x))$,  which implies for $q$-a.e.\ $x$ and every $s,t\in I$,
\[
U(\Phi_t(x))-U(\Phi_s(x))
=
\int_s^t
\nabla U(\Phi_r(x))\cdot \psi(\Phi_r(x)) dr.
\]
Regarding the integrand on the right-hand side, the boundedness of $\psi$ and the growth bound on $\nabla U$ in Assumption~\ref{ass:bl:ext} give us $\bigl|
\nabla U(\Phi_r(x))\cdot \psi(\Phi_r(x))
\bigr|
\le
C\bigl(1+|\Phi_r(x)|^m\bigr)$ a.e. $r\in I$, which is integrable with respect to $q(x)dx dr$ per \eqref{bl:eq:mom:prop}. Thus, an application of Fubini's theorem leads to
\begin{align}
\label{bl:eq:u:dbl:int}
\mathcal U(t)-\mathcal U(s)
&=
\int_s^t\int_{\R^d}
q(x)\nabla U(\Phi_r(x))\cdot \psi(\Phi_r(x)) dx dr=
\int_s^t\int_{\R^d}
q_r(y)\nabla U(y)\cdot \psi(y) dy dr.
\end{align}
In particular, $\mathcal U$ is absolutely continuous on $I$.

\medskip
\noindent\underline{Step 4:}
By \eqref{bl:eq:kl:trans},
\[
F(t)=F(0)+\mathcal U(t)+\mathcal J(t).
\]
The previous steps show that the last two terms are finite and absolutely
continuous on $I$. They also complete the integrability
justification for \eqref{bl:eq:kl:trans}. Combining \eqref{bl:eq:kl:trans}, \eqref{bl:eq:j:dbl:int} and
\eqref{bl:eq:u:dbl:int}, we can conclude that $F(t)$ is absolutely continuous on $I$ and that 
\begin{align}
\label{eq:kl:der:int:form}
   & F(t)-F(s)
   \stackrel{\eqref{bl:eq:kl:trans},\eqref{bl:eq:j:dbl:int},\eqref{bl:eq:u:dbl:int}}{=}
\int_s^t\int_{\R^d}q_r(x)
\bigl(\nabla U(x)\cdot \psi(x)-\div\psi(x)\bigr)dx dr=
\int_s^t\int_{\R^d}q_r(x)g(x)dx dr,\nn\\
&\hspace{15em}\text{for}\quad g(x):=-a(x)\bigl(\div\phi(x)-\nabla U_0(x)\cdot\phi(x)\bigr)
\end{align}
The last line is due to $\psi=a\phi$ plus the weak chain and product
rules which give $\nabla a=a(\nabla U-\nabla U_0)$ and $\div(a\phi)=\nabla a\cdot\phi+a\div\phi$ a.e.

\medskip
\noindent\underline{Step 5:} Assumptions~\ref{ass:phi:bl}, \ref{ass:bl:ext}(A1), and
\ref{ass:bl:ext}(A2) give
$|g(x)|\le C(1+|x|^m)$ a.e. We will show that
$r\mapsto\int_{\R^d}q_r(x)g(x)dx$ is continuous at $r=0$. By \eqref{bl:eq:direc:disp}, $|\Phi_r(x)-x|\le D:=M_*t_0$ for
$|r|\le t_0$. Hence $|\Phi_r(x)|>R$ implies $|x|>R-D$, and also
$1+|\Phi_r(x)|^m\le 1+(|x|+D)^m\le C(1+|x|^m)$. Using
$q_r=(\Phi_r)_\#q$, we obtain
\begin{align*}
\int_{|y|>R}(1+|y|^m)q_r(y)dy
&=
\int_{\{|\Phi_r(x)|>R\}}
(1+|\Phi_r(x)|^m)q(x)dx\le
C\int_{|x|>R-D}(1+|x|^m)q(x)dx.
\end{align*}
Then for $R>D$,
\begin{align*}
&\left|\int_{\R^d}(q_r(x)-q(x))g(x)dx\right|
\le
\int_{|x|\le R}|q_r(x)-q(x)||g(x)|dx+
\int_{|x|>R}(q_r(x)+q(x))|g(x)|dx\\
&\hspace{6em}\le
C(1+R^m)\int_{|x|\le R}|q_r(x)-q(x)|dx+
C\int_{|x|>R}(1+|x|^m)(q_r(x)+q(x))dx\\
&\hspace{6em}\le
C(1+R^m)\|q_r-q\|_{L^1}+
C\int_{|x|>R-D}(1+|x|^m)q(x)dx.
\end{align*}
Lemma~\ref{lem:push:timed}(vii) gives
$\|q_r-q\|_{L^1}\to0$. In addition,
Assumption~\ref{ass:bl:ext}(A4)(a) says $\int_{|x|>R-D}(1+|x|^m)q(x)dx\to 0$ as $R\to\infty$. Thus,  $\int_{\R^d}q_r(x)g(x)dx
\longrightarrow
\int_{\R^d}q(x)g(x)dx$ which is continuity of $r\mapsto\int_{\R^d}q_r(x)g(x)dx$ at $r=0$. Then per \eqref{eq:kl:der:int:form}, $F(t)\in C^1(I)$ and it follows that $F'(0)
=
-\mathbb E_q\!\left[
a\bigl(\div\phi-\nabla U_0\cdot\phi\bigr)
\right]$. This completes the proof.


\subsection{Optimal direction in the RKHS} 
\label{sec:opt:dir}

\begin{assumption}[Kernels]
\label{ass:rkhs}
Let $k:\mathbb R^d\times\mathbb R^d\to\mathbb R$ be a reproducing kernel with RKHS $\mathcal H$.
We write $\partial_{x_r}k$ and $\partial_{y_r}k$ for the $r$-th coordinate derivatives in the first and second arguments, respectively. These derivatives are taken before evaluation; $\nabla_1k=(\partial_{x_r}k)_{r=1}^d$ and $\nabla_2k=(\partial_{y_r}k)_{r=1}^d$.
Assume:

\begin{enumerate}[label=(K\arabic*),leftmargin=2.4em]
\item For each $i\in\{1,\dots,d\}$,  $\partial_{x_i}k(x,\cdot)$ belongs to $\mathcal H$ for every $x\in\mathbb R^d$.

\item  there is a constant $B>0$ such that for all $x,y\in\mathbb R^d$ and all coordinates
$r,s\in\{1,\dots,d\}$,
\[
|k(x,y)|+|\partial_{x_r}k(x,y)|+|\partial_{y_s}k(x,y)|
+|\partial_{x_s}\partial_{x_r}k(x,y)|
+|\partial_{y_s}\partial_{x_r}k(x,y)|
\le B;
\]

 \end{enumerate}
    
\end{assumption}

\begin{lemma}
\label{lem:k2:k3:der:repr}
Assume Assumption~\ref{ass:rkhs} holds. Then:
\begin{enumerate}[label=(\roman*),leftmargin=2em]
\item For each $i\in[d]$ and
each $f\in\mathcal H$, the partial derivative $\partial_i f(x)$ exists for all
$x\in\mathbb R^d$ and satisfies
\begin{equation}
\label{eq:deriv:repr}
\partial_i f(x)=\left\langle f,\partial_{x_i}k(x,\cdot)\right\rangle_{\mathcal H}.
\end{equation}

\item $\mathcal H^d$ can be embedded continuously into
$\mathrm{BL}(\mathbb R^d;\mathbb R^d)$. In particular, for every
$\phi\in\mathcal H^d$,
\[
  \|\phi\|_{L^\infty}\le \sqrt B\|\phi\|_{\mathcal H^d},
  \qquad
  \mathrm{Lip}(\phi)\le \sqrt{dB}\|\phi\|_{\mathcal H^d},
\]
and hence $\|\phi\|_{\mathrm{BL}}\le (\sqrt B+\sqrt{dB})\|\phi\|_{\mathcal H^d}$.
\end{enumerate}
\end{lemma}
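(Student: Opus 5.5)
Part (i) is the standard derivative-reproducing property, and I will prove it by a difference-quotient argument in $\mathcal H$. Fix $x$ and $i$, and set $h_t:=\bigl(k(x+te_i,\cdot)-k(x,\cdot)\bigr)/t\in\mathcal H$. By the reproducing property, $\langle f,h_t\rangle_{\mathcal H}=(f(x+te_i)-f(x))/t$. So it suffices to show $h_t\to \partial_{x_i}k(x,\cdot)$ in $\mathcal H$ as $t\to0$; this element lies in $\mathcal H$ by (K1).

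To get this convergence, I expand $\|h_t-\partial_{x_i}k(x,\cdot)\|_{\mathcal H}^2$ into inner products. The cross terms are evaluated by the reproducing property:
\begin{itemize}
\item $\langle k(z,\cdot),k(w,\cdot)\rangle_{\mathcal H}=k(z,w)$;
\item $\langle k(z,\cdot),\partial_{x_i}k(x,\cdot)\rangle_{\mathcal H}=\partial_{x_i}k(x,z)$;
\item $\|\partial_{x_i}k(x,\cdot)\|_{\mathcal H}^2=\partial_{y_i}\partial_{x_i}k(x,x)$.
\end{itemize}
For the last identity I first apply the second one to get $\langle \partial_{x_i}k(x,\cdot),k(w,\cdot)\rangle=\partial_{x_i}k(x,w)$. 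I then write $\partial_{x_i}k(x,\cdot)$ as the limit of its own difference quotients in the second slot, which uses the same weak-limit argument.

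A cleaner route avoids these identities. The family $\{h_t\}$ is bounded in $\mathcal H$, because $\|h_t\|^2=\bigl(k(x+te_i,x+te_i)-2k(x+te_i,x)+k(x,x)\bigr)/t^2$. By Taylor's theorem with the mixed and second derivatives bounded by $B$ (K2), this is at most a constant times $B$. A bounded family converges weakly along subsequences. Its pointwise values $h_t(y)=(k(x+te_i,y)-k(x,y))/t\to\partial_{x_i}k(x,y)$, and pointwise evaluation is continuous, so the weak limit is $\partial_{x_i}k(x,\cdot)$. Weak convergence already gives $\langle f,h_t\rangle\to\langle f,\partial_{x_i}k(x,\cdot)\rangle$, and this proves (i) with no need for norm convergence. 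The main obstacle is the Taylor bound on $\|h_t\|^2$. I plan to write the second difference as $\int_0^1\!\int_0^1 \partial_{y_i}\partial_{x_i}k(x+ste_i,x+ute_i)\,ds\,du\cdot t^2$, which I get by applying the fundamental theorem of calculus in each argument. That gives $\|h_t\|^2\le B$.

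For (ii), take $\phi=(\phi_1,\dots,\phi_d)\in\mathcal H^d$. Then $|\phi_j(x)|=|\langle\phi_j,k(x,\cdot)\rangle|\le\|\phi_j\|\sqrt{k(x,x)}\le\sqrt B\|\phi_j\|$, and summing squares gives $|\phi(x)|\le\sqrt B\|\phi\|_{\mathcal H^d}$. For the Lipschitz bound, (i) and the analogous estimate $\|\partial_{x_i}k(x,\cdot)\|^2\le B$ (the limit of the bound on $\|h_t\|^2$ via weak lower semicontinuity) give $|\partial_i\phi_j(x)|\le\sqrt B\|\phi_j\|$. Hence $|\nabla\phi_j(x)|\le\sqrt{dB}\|\phi_j\|$. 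These partials are continuous in $x$, by the same difference argument applied to $\partial_{x_i}k(x,\cdot)-\partial_{x_i}k(x',\cdot)$ with the mixed derivative bounds. So the mean value theorem applies and gives $|\phi_j(x)-\phi_j(y)|\le\sqrt{dB}\|\phi_j\||x-y|$. Squaring and summing over $j$ yields $\mathrm{Lip}(\phi)\le\sqrt{dB}\|\phi\|_{\mathcal H^d}$, and adding the two bounds gives the BL estimate.
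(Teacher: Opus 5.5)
Your proposal is correct and follows essentially the paper's proof. In (i), the uniform bound $\|h_t\|_{\mathcal H}^2\le B$ from (K2), together with convergence against kernel sections, gives weak convergence of the difference quotients to $\partial_{x_i}k(x,\cdot)$. In (ii), Cauchy--Schwarz with $\|k(x,\cdot)\|_{\mathcal H}\le\sqrt B$ and $\|\partial_{x_i}k(x,\cdot)\|_{\mathcal H}\le\sqrt B$ gives the sup and gradient bounds; you obtain the second bound by weak lower semicontinuity rather than the paper's diagonal identity.

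The only inessential detour is your use of the multivariate mean value inequality, which forces you to prove continuity of $\partial_i\phi_j$. That continuity holds via weak convergence of $\partial_{x_i}k(x',\cdot)$ as $x'\to x$, but not via norm convergence, which would need continuity of $\partial_{y_i}\partial_{x_i}k$ on the diagonal and (K2) does not provide this. The paper avoids the issue with a coordinate-by-coordinate mean value path from $y$ to $x$, which needs only existence and boundedness of the partials.
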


\begin{proof} 
\underline{Part i:}
Fix $1\leq i\leq d$ and $x\in\mathbb R^d$. For $h\neq 0$, set $q_h:=\frac{k(x+h e_i,\cdot)-k(x,\cdot)}{h}$. An application of the mean-value
theorem and (K2) give
\begin{align}
\label{eq:bound:qh:norm}
\|q_h\|_{\mathcal H}^2
=
\frac{k(x+h e_i,x+h e_i)-k(x+h e_i,x)-k(x,x+h e_i)+k(x,x)}{h^2}
\le B.
\end{align}
Thus $(q_h)_{h\to0}$ is bounded in $\mathcal H$. For each
$y\in\mathbb R^d$, the reproducing property and (K1) give
\[
\left\langle q_h,k(y,\cdot)\right\rangle_{\mathcal H}
=\frac{k(x+h e_i,y)-k(x,y)}{h}
\longrightarrow
\partial_{x_i}k(x,y)
=\left\langle\partial_{x_i}k(x,\cdot),k(y,\cdot)\right\rangle_{\mathcal H}.
\]
Finite linear combinations of $k(y,\cdot)$ are dense in
$\mathcal H$. Together with the uniform bound on $\|q_h\|_{\mathcal H}$ at \eqref{eq:bound:qh:norm}, this extends the above convergence of inner products to every \(f\in\mathcal H\). Then for $f\in\mathcal H$,
\[
\partial_if(x)=\frac{f(x+h e_i)-f(x)}{h}
=\langle f,q_h\rangle_{\mathcal H}
\longrightarrow
\left\langle f,\partial_{x_i}k(x,\cdot)\right\rangle_{\mathcal H},
\]
which is \eqref{eq:deriv:repr}.

\medskip
\noindent\underline{Part ii:}
We show in this step $\mathcal H^d$ can be embedded continuously into $\mathrm{BL}(\mathbb R^d;\mathbb R^d)$. Fix $f\in\mathcal H$. By (K2) in Assumption~\ref{ass:rkhs}, $\sup_x k(x,x)\le B$ so that  $  |f(x)|
  =|\langle f,k(x,\cdot)\rangle_{\mathcal H}|
  \le \|f\|_{\mathcal H} \|k(x,\cdot)\|_{\mathcal H}
  = \|f\|_{\mathcal H} \sqrt{k(x,x)}
  \le \sqrt B\|f\|_{\mathcal H}$. Consequently, $\|f\|_{L^\infty}\le \sqrt B\|f\|_{\mathcal H}$.

To bound $\mathrm{Lip}(f)$, fix $x,y\in\mathbb R^d$ and set
\[
x^{(0)}:=y,
\qquad
x^{(i)}:=(x_1,\dots,x_i,y_{i+1},\dots,y_d),
\quad i=1,\dots,d.
\]
Now for each $i\in\{1,\dots,d\}$ and each $x\in\R^d$, $\|\partial_{x_i}k(x,\cdot)\|_{\mathcal H}^2
=
\partial_{y_i}\partial_{x_i}k(x,x)$, and therefore by (K2),
\begin{align}
\label{eq:bd:first:arg}
\sup_{x\in\R^d}\sum_{i=1}^d \|\partial_{x_i}k(x,\cdot)\|_{\mathcal H}^2 \le dB.
\end{align}
By \eqref{eq:deriv:repr}, Cauchy-Schwarz's inequality, and (K2), \begin{align}
\label{eq:bound:part:fz}
    |\partial_i f(z)|
\le \|f\|_{\mathcal H}
\|\partial_{x_i}k(z,\cdot)\|_{\mathcal H}
\le \sqrt B\|f\|_{\mathcal H}
\end{align}. 
The mean-value theorem then leads to
\begin{align*}
|f(x)-f(y)|
&\le
\sum_{i=1}^d
|f(x^{(i)})-f(x^{(i-1)})|\stackrel{\eqref{eq:bound:part:fz}}\le
\sqrt B\|f\|_{\mathcal H}\sum_{i=1}^d|x_i-y_i|
\le
\sqrt{dB}\|f\|_{\mathcal H}|x-y|.
\end{align*}
Thus, $\mathrm{Lip}(f)\le \sqrt{dB}\|f\|_{\mathcal H}$. Applying this and $\|f\|_{L^\infty}\le \sqrt B\|f\|_{\mathcal H}$ from earlier componentwise to $\phi\in\mathcal H^d$ yields the desired bounds on $\|\phi\|_{L^\infty}$ and $\mathrm{Lip}(\phi)\le \sqrt{dB}\|\phi\|_{\mathcal H^d}$. 
\end{proof}

\subsubsection{Proof of Theorem~\ref{theo:opt:dir}}
\underline{Step 1:} Lemma~\ref{lem:k2:k3:der:repr}(ii) shows that $\mathcal H^d$ can be embedded continuously into $\mathrm{BL}(\mathbb R^d;\mathbb R^d)$. This combined with the existence and uniqueness  of Lemma~\ref{lem:push:timed} (which requires a bounded Lipschitz vector field) imply \eqref{def:fl:repr:sp} is well defined.

\medskip
\underline{Step 2:}  Define the linear functional $\mathcal L(\phi) := \mathbb E_q\!\big[a(\mathrm{div} \phi-\nabla U_0\cdot \phi)\big]$. Then due to Step 1 and Theorem~\ref{theo:bl:ext}, we have that the map $t\mapsto \mathrm{KL}(q_t\|p)$ is differentiable at $t=0$ and 
\begin{align}\label{eq:der:kl:h} \left.\frac{d}{dt}\mathrm{KL}(q_t\|p)\right|_{t=0} = -\mathcal L(\phi).
\end{align}

\medskip
\underline{Step 3:} Write $\mathrm{div} \phi=\sum_{i=1}^d \partial_i\phi_i$.
By \eqref{eq:deriv:repr}, $ \partial_i\phi_i(x)=\langle \phi_i,\partial_{x_i}k(x,\cdot)\rangle_{\mathcal H}$ and $\phi_i(x)=\langle \phi_i,k(x,\cdot)\rangle_{\mathcal H}$. Define $\mathcal H$-valued maps
\[
  G_i(x):=a(x) \partial_{x_i}k(x,\cdot),
  \qquad
  H_i(x):=a(x) (\partial_i U_0)(x) k(x,\cdot).
\]
By \eqref{eq:bd:first:arg} and $a\in L^\infty$,
\[
  \|G_i(x)\|_{\mathcal H}
  \le \|a\|_{L^\infty} \|\partial_{x_i}k(x,\cdot)\|_{\mathcal H}
  \stackrel{\eqref{eq:bd:first:arg}}{\le} \|a\|_{L^\infty} \sqrt{dB},
\]
so $\mathbb E_q[\|G_i(X)\|_{\mathcal H}]<\infty$.
Also, by (K2) in Assumption~\ref{ass:rkhs} and Assumption~\ref{ass:bl:ext},
\[
  \|H_i(x)\|_{\mathcal H}
  \le \|a\|_{L^\infty} |(\partial_i U_0)(x)| \|k(x,\cdot)\|_{\mathcal H}
  \le \|a\|_{L^\infty} C_U(1+|x|^m) \sqrt B,
\]
whose expected value with respect to $q$ is finite since $q$ has finite $m$-th moment. Hence $G_i(X)$ and $H_i(X)$ are Bochner integrable in $\mathcal H$.

Using the property that bounded linear functionals commute with Bochner integration (see for example \cite[Theorem A2.3]{vanNeerven1992}), we have for each fixed $\phi_i\in\mathcal H$:
\[
  \mathbb E_q\!\big[a(X)\partial_i\phi_i(X)\big]
  =\mathbb E_q\!\big[\langle \phi_i,G_i(X)\rangle_{\mathcal H}\big]
  =\left\langle \phi_i,\mathbb E_q[G_i(X)]\right\rangle_{\mathcal H},
\]
and similarly
\[
  \mathbb E_q\!\big[a(X)(\partial_i U_0)(X)\phi_i(X)\big]
  =\mathbb E_q\!\big[\langle \phi_i,H_i(X)\rangle_{\mathcal H}\big]
  =\left\langle \phi_i,\mathbb E_q[H_i(X)]\right\rangle_{\mathcal H}.
\]
Summing over $i$ yields
\begin{align}
\label{eq:lphi}
  \mathcal L(\phi)
  =\sum_{i=1}^d \langle \phi_i,\mathbb E_q[G_i(X)]-\mathbb E_q[H_i(X)]\rangle_{\mathcal H}
  =\langle \phi,h\rangle_{\mathcal H^d},
\end{align}
with $h_i$ in \eqref{def:opt:dir:comp}.

\underline{Step 4:} By \eqref{eq:lphi} and the Cauchy-Schwarz inequality,
\[
  \mathcal L(\phi)\stackrel{\eqref{eq:lphi}}{=}\langle \phi,h\rangle_{\mathcal H^d}
  \le \|\phi\|_{\mathcal H^d} \|h\|_{\mathcal H^d}.
\]
Taking the supremum over $\|\phi\|_{\mathcal H^d}\le 1$ gives
$\sup_{\phi\in\mathbb B_{\mathcal H^d}}\mathcal L(\phi)\le \|h\|_{\mathcal H^d}$, and equality holds by choosing
the $\phi^\star$ in the theorem statement.
Using \eqref{eq:der:kl:h} yields the steepest KL decrease and also the De Bruijn-type identity.

\section{Anchored Stein variational gradient descent algorithm}
\label{sec:anch:svgd}

Recall that $p_0(x)\propto e^{-U_0(x)}$ and $a(x)=e^{U(x)-U_0(x)}$. Define the anchored interaction kernel
\begin{align}
\label{eq:zeta:a}
\zeta_a(x,y):=a(x)a(y)\Bigl(\nabla_1 k(y,x)-k(y,x)\nabla U_0(y)\Bigr),
\end{align}
and the particle drift
\begin{equation}
\label{def:disc:drift}
F(x):=\bigl(F_1(x),\dots,F_N(x)\bigr),
\qquad
F_i(x):=\frac1N\sum_{j=1}^N \zeta_a(x_i,x_j),
\qquad x=(x_1,\dots,x_N)\in(\R^d)^N.
\end{equation}
Then the particle update in Algorithm~\ref{alg:anch:svgd} can be written as
\begin{equation}
\label{eq:disc:step}
x_i^{\ell+1}=x_i^\ell+\varepsilon_\ell F_i(x^\ell),
\qquad
i=1,\dots,N,\quad \ell=0,1,\dots,L-1,
\end{equation}
where
\[
x^\ell=\bigl(x_1^\ell,\dots,x_N^\ell\bigr)\in(\R^d)^N.
\]

\begin{assumption}\label{ass:stab}
Suppose that 
\begin{enumerate}[label=(S\arabic*),leftmargin=2.4em]
\item there exist constants $A_0>0$, $\alpha_0\in[0,\tfrac12)$, and $C_{U_0}>0$ such that
\[
\inf_{x\in\mathbb R^d}U_0(x)>0,
\qquad
|\nabla U_0(x)|\le A_0 U_0(x)^{\alpha_0},
\qquad
\|\nabla^2 U_0(x)\|_{\mathrm{op}}\le C_{U_0},
\qquad x\in\mathbb R^d.
\]
\item the initial density  $q $ satisfies $\int_{\R^d} U_0(x) q(x) dx<\infty$. 
\end{enumerate}
\end{assumption}

\begin{remark}
    The main difference between our derivation of the convergence rate for anchored SVGD (Algorithm~\ref{alg:anch:svgd}) and the improved convergence rate analysis of \cite{banerjeeetal2025}  lies in the choice of Lyapunov functional. Therein, the Lyapunov functional is taken to be $\KL\!\big(p^N(t)\|p^{\otimes N}\big)$, where $p\propto e^{-U}$ is the target distribution. In the anchored setting, the natural choice is instead $\KL\!\big(p^N(t)\|p_0^{\otimes N}\big)$, where $p_0\propto e^{-U_0}$ is the reference density. In particular, the proof of Theorem~\ref{theo:disc:bd:ksd2} below will rely on Proposition~\ref{prop:disc:bd:ksd:squar:all:norm}, and with $\KL\!\big(p^N(t)\|p_0^{\otimes N}\big)$ several otherwise complicated remainder terms vanish, leading to a substantially cleaner estimate.
\end{remark}


The next proposition is a descent estimate for anchored SVGD (Algorithm \ref{alg:anch:svgd}). It is written for a general positive density $\tilde p$, and later we will specialize this estimate to the choice $\tilde p=p_0\propto e^{-U_0}$, which is the natural Lyapunov reference in the anchored setting. This is the key estimate used in the later proof of  Theorem~\ref{theo:disc:bd:ksd2}.
\begin{proposition} 
\label{prop:disc:bd:ksd:squar:all:norm}
Fix $N,L\in\mathbb N$, $K>0$, and $\eta>0$, and suppose
$\varepsilon_r\equiv\eta$ for $r=0,\dots,L-1$. Assume 
Assumption~\ref{ass:bl:ext}(A2), Assumption~\ref{ass:rkhs}(K2), and Assumption~\ref{ass:stab}(S1). Let $\tilde p$ be a positive reference density on $\R^d$, and define
\[
\tilde u(x):=\log \tilde p(x).
\]
Let $g(0)$ denote the density of the initial joint law, and define the truncated initialization $g_K(0)$ by
\begin{align}
\label{def:disc:init}
g_K(0,x)
&:=
\frac{g(0,x)\mathbf 1_{G_K}(x)}{\mu_0(G_K)},
\qquad
\mu_0(G_K):=\int_{G_K} g(0,x) dx>0, \notag\\
G_K
&:=
\left\{x\in(\mathbb R^d)^N:\ \frac1N\sum_{i=1}^N U_0(x_i)\le K\right\}.
\end{align}
For $\ell=1,\dots,L$, define the law initialized from this truncation by
\begin{equation}
\label{def:disc:gk:ell}
T_r(x):=x+\varepsilon_rF(x),
\qquad r=0,\dots,L-1,
\qquad
g_K(\ell):=
\bigl(T_{\ell-1}\circ\cdots\circ T_0\bigr)_\#g_K(0). 
\end{equation}
We will use $g_K(\ell)$ also for its density.

Assume 
\begin{align}\label{assum:algo:parti}\widetilde u=\log\widetilde p\in C^2(\mathbb R^d),\qquad F\in W_{\mathrm{loc}}^{1,\infty}((\mathbb R^d)^N;(\mathbb R^d)^N),\qquad\mathrm{KL}(g_K(0)\|\widetilde p^{\otimes N})<\infty\end{align}
Set
\[
\phi_t(x):=x+tF(x),\qquad
A_t(x):=I+tJF(x),\qquad
\widetilde u_N(x):=\sum_{i=1}^N\widetilde u(x_i).
\]
For every $0\le\ell\le L-1$, assume for $t\in[0,\eta]$ that 
\begin{align}
\label{assum:pos:det}
    \det A_t(x)>0\qquad g_K(\ell)-\text{a.e. $x$}
\end{align}
and
\begin{equation}
\label{ass:disc:diff:env}  
\int g_K(\ell,x)
\Bigl(|\log g_K(\ell,x)|+\mathcal D_\ell(x)\Bigr) dx<\infty,
\end{equation}
where
\begin{align}
\label{def:dell}
\mathcal D_\ell(x):=
\sup_{0\le t\le\eta}\Bigl\{
&|\log\det A_t(x)|
+|\widetilde u_N(\phi_t(x))|\nn\\
&+\|A_t(x)^{-1}JF(x)\|_{\mathrm{HS}}
+|\nabla\widetilde u_N(\phi_t(x))| |F(x)|\nn\\
&+\|A_t(x)^{-1}JF(x)\|_{\mathrm{HS}}^2
+\|\nabla^2\widetilde u_N(\phi_t(x))\|_{\mathrm{op}}|F(x)|^2
\Bigr\},
\end{align}
and $\|\cdot\|_{\mathrm{HS}}$ is the Hilbert-Schmidt matrix norm.

Then we have
\begin{align}
\label{eq:disc:ksd2e}
&\mathbb E_{g_K(0)}
\Biggl[
\frac1L\sum_{\ell=0}^{L-1}
\KSD_a^2\bigl(\mu_\ell^N\|p\bigr)
\Biggr]
\notag\\
&\qquad\le
\frac{\mathrm{KL}\bigl(g_K(0)\|\tilde p^{\otimes N}\bigr)}{NL\eta}
+\frac{1}{NL}\sum_{\ell=0}^{L-1}
\Bigl(
R_3^{\mathrm{disc}}(\ell)
+
R_{\mathrm{diag}}^{\mathrm{disc}}(\ell)
\Bigr)
+\frac{1}{NL\eta}\sum_{\ell=0}^{L-1}\int_0^\eta (\eta-t)E''(t) dt.
\end{align}

Here $E''(t)$ is given explicitly in \eqref{eq:disc:e2der} and 
\begin{align*}
R_3^{\mathrm{disc}}(\ell)
&:=
\frac1N\sum_{k,j=1}^N
\int_{(\mathbb R^d)^N} g_K(\ell,x) \Bigl(R_1(x_k,x_j)+R_2(x_k,x_j)\Bigr) dx;\\
R_{\mathrm{diag}}^{\mathrm{disc}}(\ell)
&:=
\frac1N\sum_{k=1}^N
\int_{(\mathbb R^d)^N} g_K(\ell,x) D_a(x_k) dx.
\end{align*}
Moreover, 
\begin{align}
\label{def:w:rem:da}
w_{\tilde p}(x)&:=\nabla \tilde u(x)+\nabla U_0(x);\nn\\
R_1(x,y)
&:=
a(x)a(y)\Bigl(
-w_{\tilde p}(x)\cdot\nabla_1 k(y,x)
+k(y,x) w_{\tilde p}(x)\cdot\nabla U_0(y)
\Bigr);\nn\\
R_2(x,y)
&:=
a(y)\nabla a(x)\cdot\Big(k(y,x)\nabla U_0(y)-\nabla_1 k(y,x)\Big);\nn\\
D_a(x)
&:=
a(x)\nabla a(x)\cdot\big(k(x,x)\nabla U_0(x)-\nabla_1 k(x,x)\big)\nonumber\\
&\quad
+a(x)^2\Big(
\div_x\big(k(x,x)\nabla U_0(x)\big)
-\nabla U_0(x)\cdot\nabla_2 k(x,x)
-\Delta_1 k(x,x)
\Big).
\end{align}

\end{proposition}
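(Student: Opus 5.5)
Along each step of the algorithm we track the Lyapunov functional $\mathcal E_\ell:=\mathrm{KL}(g_K(\ell)\|\tilde p^{\otimes N})$. The argument is one-step and then telescoped. Fix $\ell$ and set $E(t):=\mathrm{KL}\bigl((\phi_t)_\#g_K(\ell)\,\|\,\tilde p^{\otimes N}\bigr)$ for $t\in[0,\eta]$, so that $E(0)=\mathcal E_\ell$ and $E(\eta)=\mathcal E_{\ell+1}$.

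First, use the positivity \eqref{assum:pos:det} together with $F\in W^{1,\infty}_{\mathrm{loc}}$ and the area formula, exactly as in Lemma~\ref{lem:push:timed}(iv)--(v). We expect $\phi_t$ to be injective on the support; this is implicit and may need a small argument. This gives
\[
E(t)=\int g_K(\ell,x)\Bigl(\log g_K(\ell,x)-\log\det A_t(x)-\widetilde u_N(\phi_t(x))\Bigr)dx .
\]
We then differentiate twice under the integral. We have $\frac{d}{dt}\log\det A_t=\mathrm{tr}(A_t^{-1}JF)$ and $\frac{d^2}{dt^2}\log\det A_t=-\|A_t^{-1}JF\|^2$ in the trace sense, bounded by the HS-square term. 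Likewise $\frac{d}{dt}\widetilde u_N(\phi_t)=\nabla\widetilde u_N(\phi_t)\cdot F$ and the second derivative is $F^\top\nabla^2\widetilde u_N(\phi_t)F$. Every one of these quantities is dominated by $\mathcal D_\ell$, so \eqref{ass:disc:diff:env} and dominated convergence justify writing $E\in C^2$ and applying Taylor's formula with integral remainder:
\[
\mathcal E_{\ell+1}=\mathcal E_\ell+\eta E'(0)+\int_0^\eta(\eta-t)E''(t)\,dt,
\]
with $E''$ recorded as \eqref{eq:disc:e2der}.

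The key identity is the computation of
\[
E'(0)=-\int g_K(\ell,x)\Bigl(\mathrm{div}F(x)+\nabla\widetilde u_N(x)\cdot F(x)\Bigr)dx .
\]
We will write $\nabla\tilde u=-\nabla U_0+w_{\tilde p}$ and expand $\mathrm{div}_{x_k}F_k$ using \eqref{eq:zeta:a}. Terms with $j\ne k$ differentiate only $a(x_k)$ and $\nabla_1k(x_j,x_k)$, $k(x_j,x_k)$ in the second argument, while the diagonal term $j=k$ differentiates everything. Using $\nabla a$ for the off-diagonal part and comparing with \eqref{def:u:a}, the main term becomes
\[
-\frac1N\sum_{k,j}u_a(x_k,x_j)=-N\,\KSD_a^2(\mu^N_x\|p).
\]
Here the product of $a(x_k)$ with the factor $a(x_j)(\nabla_1k-k\nabla U_0)$ reproduces the anchored Stein pairing after using $\nabla_2$ of the kernel. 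The leftovers are the $w_{\tilde p}$ terms, giving $R_1$; the $\nabla a(x_k)$ terms, giving $R_2$; and the diagonal corrections, giving $D_a$, as in \eqref{def:w:rem:da}. The signs of $R_1,R_2,D_a$ must be arranged so that $E'(0)=-N\,\mathbb E\KSD_a^2+R_3^{\mathrm{disc}}+R_{\mathrm{diag}}^{\mathrm{disc}}$. This bookkeeping, in particular matching $u_a$ symmetrically and isolating the $k=j$ contribution (including the $\Delta_1k(x,x)$ and $\mathrm{div}_x(k(x,x)\nabla U_0)$ pieces), is the main obstacle; I would first verify it in the case $a\equiv1$, $\tilde p=p_0$, where it reduces to the known SVGD identity.

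Finally, we rearrange to get
\[
\eta N\,\mathbb E_{g_K(\ell)}\KSD_a^2\le\mathcal E_\ell-\mathcal E_{\ell+1}+\eta(R_3^{\mathrm{disc}}+R_{\mathrm{diag}}^{\mathrm{disc}})+\int_0^\eta(\eta-t)E''dt .
\]
Since $\mathbb E_{g_K(\ell)}$ of a function of $x$ equals $\mathbb E_{g_K(0)}$ of that function composed with the iterates, this is the expectation of $\KSD_a^2(\mu_\ell^N\|p)$ under $g_K(0)$. Summing over $\ell$, telescoping, dropping $\mathcal E_L\ge0$, and dividing by $NL\eta$ yields \eqref{eq:disc:ksd2e}. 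The finiteness of $\mathcal E_\ell$ for all $\ell$ follows inductively from the same representation and \eqref{ass:disc:diff:env}.
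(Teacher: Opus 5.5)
Your plan matches the paper's proof step for step. You pull $E(t)$ back through $\phi_t$ and differentiate twice under the integral using the domination by $\mathcal D_\ell$. You then split $E'(0)$ into off-diagonal terms $-u_a(x_j,x_k)+R_1(x_k,x_j)+R_2(x_k,x_j)$ and diagonal terms that additionally produce $D_a(x_k)$; this identity does hold exactly with the $R_1,R_2,D_a$ given in the statement, so nothing needs to be ``arranged''. Finally you Taylor-expand with integral remainder and telescope. The injectivity of $\phi_t$ that you flag is also used only implicitly in the paper's own change of variables, so it does not separate your argument from the paper's.
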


\begin{proof}
Fix $0\le \ell\le L-1$ and define
\begin{equation}
\label{eq:anc:phi:nu}
\phi_t(x):=x+t F(x),\qquad \nu(t):=(\phi_t)_{\#}g_K(\ell)
\qquad
t\in[0,\eta].
\end{equation}
Denote by $q(t,\cdot)$ the density of $\nu(t)$. Then \eqref{eq:disc:step} implies that 
\begin{equation}
\label{eq:disc:end:law}
\nu(0)=g_K(\ell),
\qquad
\nu(\eta)=g_K(\ell+1).
\end{equation}
Define
\begin{equation}
\label{def:disc:kl:time}
E(t):=\mathrm{KL}\bigl(q(t)\|\tilde p^{\otimes N}\bigr)=
\int_{\mathbb R^{dN}} q(t,\xi)\log\frac{q(t,\xi)}{\tilde p^{\otimes N}(\xi)} d\xi,
\qquad
t\in[0,\eta],
\end{equation}
and notice by \eqref{eq:disc:end:law}, $E(0)=\mathrm{KL}\bigl(g_K(\ell)\|\tilde p^{\otimes N}\bigr)$ and $E(\eta)=\mathrm{KL}\bigl(g_K(\ell+1)\|\tilde p^{\otimes N}\bigr)$.


\medskip
\noindent\underline{Step 1:}
Using the change of variables
$\xi=\phi_t(x)$ and $q\bigl(t,\phi_t(x)\bigr)
=
\det(J\phi_t(x))^{-1}g_K(\ell,x)$ (similar to \eqref{eq:push:expl:gen}),
we obtain
\begin{equation}
\label{eq:disc:kl:pull}
E(t)
=
\int_{(\mathbb R^d)^N}
g_K(\ell,x)
\Bigl[
\log g_K(\ell,x)
-\log\det A_t(x)
-\widetilde u_N(\phi_t(x))
\Bigr]dx,
\end{equation}
where $A_t(x)=J\phi_t(x)=I+tJF(x)$.

Set $m:=dN$. Since
\begin{equation}
\label{eq:disc:part:phi}
\partial_t\phi_t(x)=F(x),
\qquad
\partial_tA_t(x)=JF(x),
\end{equation}
Jacobi's formula $\frac{d}{dt}\log\det A_t
=
\mathrm{tr}\!\bigl(A_t^{-1}\dot A_t\bigr)$ and the chain rule give
\begin{align*}
\frac{d}{dt}
\Bigl[-\log\det A_t(x)-\widetilde u_N(\phi_t(x))\Bigr]
&=
-\operatorname{tr}\!\bigl(A_t(x)^{-1}JF(x)\bigr)
-\bigl\langle\nabla\widetilde u_N(\phi_t(x)),F(x)\bigr\rangle;\\
\frac{d^2}{dt^2}
\Bigl[-\log\det A_t(x)-\widetilde u_N(\phi_t(x))\Bigr]
&=
\operatorname{tr}\!\Bigl(
A_t(x)^{-1}JF(x)A_t(x)^{-1}JF(x)
\Bigr)\\
&\quad
-\bigl\langle
F(x),\nabla^2\widetilde u_N(\phi_t(x))F(x)
\bigr\rangle .
\end{align*}
Consequently, for $t\in[0,\eta]$,
\begin{align*}
&\left|
\log g_K(\ell,x)
-\log\det A_t(x)
-\widetilde u_N(\phi_t(x))
\right|
\le
|\log g_K(\ell,x)|+\mathcal D_\ell(x),\\
&\left|
\frac{d}{dt}
\Bigl[-\log\det A_t(x)-\widetilde u_N(\phi_t(x))\Bigr]
\right|
\le
(\sqrt m+1)\mathcal D_\ell(x),\\
&\left|
\frac{d^2}{dt^2}
\Bigl[-\log\det A_t(x)-\widetilde u_N(\phi_t(x))\Bigr]
\right|
\le
\mathcal D_\ell(x).
\end{align*}
After multiplication by $g_K(\ell,x)$, these three estimates bound,
respectively, the integrand in \eqref{eq:disc:kl:pull} and its first
two derivatives with respect to $t$. These bounds are integrable by
condition~\eqref{ass:disc:diff:env}. Leibniz's integral rule
therefore allows us to differentiate twice under the integral sign in
\eqref{eq:disc:kl:pull}. 

\medskip
\noindent\underline{Step 2:} We compute $E'(t)$ in this step. Assumption~\ref{ass:bl:ext}(A2) ensures that $\nabla a$ exists a.e.,
while Assumption~\ref{ass:rkhs}(K2) and
Assumption~\ref{ass:stab}(S1) ensure that the first and second
derivatives of $k$ and $U_0$ respectively exist.   Per Step 1, we can differentiate \eqref{eq:disc:kl:pull} and get
\begin{equation}
\label{eq:disc:et:prime}
\begin{aligned}
E'(t)
&=
-\int_{(\mathbb R^d)^N}
\operatorname{tr}\!\bigl(A_t(x)^{-1}JF(x)\bigr)
g_K(\ell,x) dx
-\int_{(\mathbb R^d)^N}
\bigl\langle
\nabla\widetilde u_N(\phi_t(x)),F(x)
\bigr\rangle
g_K(\ell,x) dx.
\end{aligned}
\end{equation}
Now let us write down $E'(0)$. Since $\phi_0(x)=x$, $J\phi_0(x)=I$, we arrive at
\begin{align*}
E'(0)
&=
\int_{(\mathbb R^d)^N}
\Bigl(
-\mathrm{div}F(x)
-
\bigl\langle \nabla \widetilde u_N(x), F(x)\bigr\rangle
\Bigr) g_K(\ell,x) dx.
\end{align*}
Via \eqref{def:disc:drift}, we further obtain
\begin{equation}
\label{eq:disc:e0:prime}
E'(0)
=
\frac1N\sum_{k,j=1}^N
\int_{(\mathbb R^d)^N} g_K(\ell,x)
\Bigl(
-\mathrm{div}_{x_k}\zeta_a(x_k,x_j)
-\nabla \tilde u(x_k)\cdot \zeta_a(x_k,x_j)
\Bigr) dx.
\end{equation}

\smallskip
\underline{Step 2a:} We will handle the off-diagonal terms in \eqref{eq:disc:e0:prime}. Fix $k\neq j$ and write $x:=x_k$, $y:=x_j$. Recall $\zeta_a$ at \eqref{eq:zeta:a}. Since $\nabla_x k(y,x)=\nabla_2 k(y,x)$ and
$\div_x(\nabla_1 k(y,x))=\nabla_2\cdot\nabla_1 k(y,x)=\nabla_1\cdot\nabla_2 k(y,x)$,
we obtain
\begin{align}
\label{eq:disc:off:div}
-\div_x\zeta_{a}(x,y)
&=-\div_x\!\Big(a(x)a(y)\nabla_1 k(y,x)\Big)
  +\div_x\!\Big(a(x)a(y)k(y,x)\nabla U_0(y)\Big)\nonumber\\
&=-a(y)\nabla a(x)\cdot\nabla_1 k(y,x)
  -a(x)a(y) \div_x\nabla_1 k(y,x)\nonumber\\
&\quad
+a(y)\nabla a(x)\cdot\big(k(y,x)\nabla U_0(y)\big)
+a(x)a(y) \nabla_x k(y,x)\cdot\nabla U_0(y)\nonumber\\
&=a(y)\nabla a(x)\cdot\big(k(y,x)\nabla U_0(y)-\nabla_1 k(y,x)\big)\nonumber\\
&\quad
+a(x)a(y)\Big(\nabla_2 k(y,x)\cdot\nabla U_0(y)-\nabla_1\cdot\nabla_2 k(y,x)\Big).
\end{align}
Next,
\begin{align}
\label{eq:disc:off:nabla:util}
-\nabla\tilde u(x)\cdot\zeta_{a}(x,y)
&=-a(x)a(y)\nabla\tilde u(x)\cdot\Big(\nabla_1 k(y,x)-k(y,x)\nabla U_0(y)\Big)\nonumber\\
&=a(x)a(y)\Big(-\nabla\tilde u(x)\cdot\nabla_1 k(y,x)+k(y,x) \nabla\tilde u(x)\cdot\nabla U_0(y)\Big).
\end{align}
Adding \eqref{eq:disc:off:div} and \eqref{eq:disc:off:nabla:util} gives
for every $y$ and a.e.\ $x$,
\begin{align}
\label{eq:disc:off}
-\div_x\zeta_{a}(x,y)-\nabla\tilde u(x)\cdot\zeta_{a}(x,y)
&=a(y)\nabla a(x)\cdot\big(k(y,x)\nabla U_0(y)-\nabla_1 k(y,x)\big)\nonumber\\
&\quad
+a(x)a(y)\Big(
-\nabla_1\cdot\nabla_2 k(y,x)
+\nabla_2 k(y,x)\cdot\nabla U_0(y)\nonumber\\
&\hspace{1cm}
-\nabla\tilde u(x)\cdot\nabla_1 k(y,x)
+k(y,x) \nabla\tilde u(x)\cdot\nabla U_0(y)
\Big).
\end{align}
Recalling $u_a$ defined at \eqref{def:u:a} and combining this with \eqref{eq:disc:off}, we arrive at
\begin{equation}
\label{eq:anch:disc:off}
  -\div_x\zeta_a(x,y)-\nabla\tilde u(x)\cdot\zeta_a(x,y)
  =
  -u_a(y,x)\;+\;R_1(x,y)\;+\;R_2(x,y),
  \qquad (k\neq j),
\end{equation}
where $w_{\tilde p},R_1,R_2$ are defined in the statement.


\smallskip
\underline{Step 2b:} Next, we handle the diagonal terms in \eqref{eq:disc:e0:prime}. Fix $k\in\{1,\dots,N\}$ and set $x:=x_k$. Recall $\zeta_a$ at \eqref{eq:zeta:a}. Since $\div_x(\nabla_1 k(x,x))=\Delta_1 k(x,x)+\nabla_1\cdot\nabla_2 k(x,x)$, it follows that 
\begin{align}
\label{eq:disc:diag:div:fin}
-\div_x\zeta_a(x,x)
&=2a(x)\nabla a(x)\cdot\big(k(x,x)\nabla U_0(x)-\nabla_1 k(x,x)\big)\nonumber\\
&\quad
-a(x)^2\Big(\Delta_1 k(x,x)+\nabla_1\cdot\nabla_2 k(x,x)\Big)
+a(x)^2 \div_x\big(k(x,x)\nabla U_0(x)\big);
\\
\label{eq:disc:diag:nabla:util}
-\nabla\tilde u(x)\cdot\zeta_a(x,x)
&=a(x)^2\Big(-\nabla\tilde u(x)\cdot\nabla_1 k(x,x)
+k(x,x) \nabla\tilde u(x)\cdot\nabla U_0(x)\Big). 
\end{align}

Adding \eqref{eq:disc:diag:div:fin} and \eqref{eq:disc:diag:nabla:util} yields, a.e.\ $x$,
\begin{align}
\label{eq:disc:diag:lhs}
-\div_x\zeta_a(x,x)-\nabla\tilde u(x)\cdot\zeta_a(x,x)
&=2a(x)\nabla a(x)\cdot\big(k(x,x)\nabla U_0(x)-\nabla_1 k(x,x)\big)\nonumber\\
&\quad
+a(x)^2\Big(
-\Delta_1 k(x,x)-\nabla_1\cdot\nabla_2 k(x,x)
+\div_x\big(k(x,x)\nabla U_0(x)\big)\nonumber\\
&\hspace{2.7cm}
-\nabla\tilde u(x)\cdot\nabla_1 k(x,x)
+k(x,x) \nabla\tilde u(x)\cdot\nabla U_0(x)
\Big).
\end{align}

Recall $u_a$ at \eqref{def:u:a}, and $w_{\tilde p},R_1(x,x),R_2(x,x)$ in the statement. Then per \eqref{eq:disc:diag:lhs}, we can write
\begin{equation}
\label{eq:anch:disc:diag}
  -\div_x\zeta_a(x,x)-\nabla\tilde u(x)\cdot\zeta_a(x,x)
  =
  -u_a(x,x)\;+\;R_1(x,x)\;+\;R_2(x,x)\;+\; D_a(x),
\end{equation}
a.e., where $D_a$ is defined in the statement.

\smallskip
\underline{Step 2c:}
Recall the definition of KSD at \eqref{eq:anch:ksd} which is $\KSD_a^2\bigl(\mu_\ell^N\|p\bigr)
=
\frac1{N^2}\sum_{k,j=1}^N u_a(x_k,x_j)$. Combining it with \eqref{eq:anch:disc:off}, \eqref{eq:anch:disc:diag},
and \eqref{eq:disc:e0:prime}, we obtain
\begin{equation}
\label{eq:disc:e0:prime:decom}
E'(0)
=
-N\int_{(\mathbb R^d)^N} g_K(\ell,x)
\KSD_a^2\bigl(\mu_\ell^N\|p\bigr) dx
+R_3^{\mathrm{disc}}(\ell)+R_{\mathrm{diag}}^{\mathrm{disc}}(\ell)
\end{equation}
where $R_3^{\mathrm{disc}}(\ell)$ and $R_{\mathrm{diag}}^{\mathrm{disc}}(\ell)$ are
\begin{equation}
\label{def:disc:r3}
R_3^{\mathrm{disc}}(\ell)
:=
\frac1N\sum_{k,j=1}^N
\int_{(\mathbb R^d)^N} g_K(\ell,x) \Bigl(R_1(x_k,x_j)+R_2(x_k,x_j)\Bigr) dx
\end{equation}
and
\begin{equation}
\label{def:disc:rdiag}
R_{\mathrm{diag}}^{\mathrm{disc}}(\ell)
:=
\frac1N\sum_{k=1}^N
\int_{(\mathbb R^d)^N} g_K(\ell,x) D_a(x_k) dx.
\end{equation}


\medskip
\underline{Step 3:} Now we derive $E''(t)$. Starting from \eqref{eq:disc:et:prime}, write
\begin{equation}
\label{eq:anch:disc:e:prime:split}
E'(t)=I_1(t)+I_2(t),
\end{equation}
where
\begin{align}
I_1(t)
&:=
-\int_{(\mathbb R^d)^N}
\mathrm{tr}\!\bigl((J\phi_t(x))^{-1}JF(x)\bigr) g_K(\ell,x) dx,
\label{eq:disc:i1}\\
I_2(t)
&:=
-\int_{(\mathbb R^d)^N}
\bigl\langle \nabla \widetilde u_N(\phi_t(x)), F(x)\bigr\rangle g_K(\ell,x) dx.
\label{eq:disc:i2}
\end{align}

\smallskip
\underline{Step 3a:}
By the chain rule and \eqref{eq:disc:part:phi}, we get $\frac{d}{dt}\nabla \widetilde u_N(\phi_t(x))
=
\nabla^2 \widetilde u_N(\phi_t(x)) F(x)$, so that 
\begin{align}
\label{eq:disc:term:in:i2:prime}
\frac{d}{dt}\Bigl[-\bigl\langle \nabla \widetilde u_N(\phi_t(x)), F(x)\bigr\rangle\Bigr]
&=
-\Bigl\langle \frac{d}{dt}\nabla \widetilde u_N(\phi_t(x)), F(x)\Bigr\rangle =
-\bigl\langle F(x), \nabla^2 \widetilde u_N(\phi_t(x)) F(x)\bigr\rangle.
\end{align}
Consequently,
\begin{equation}
\label{eq:disc:i2:prime}
\psi_2(t):=I_2'(t)
=
-\int_{(\mathbb R^d)^N}
\bigl\langle F(x), \nabla^2 \widetilde u_N(\phi_t(x)) F(x)\bigr\rangle
 g_K(\ell,x) dx.
\end{equation}

\smallskip
\underline{Step 3b:}
Taking the derivative of $(J\phi_t(x))^{-1}J\phi_t(x)=I$ gives $\frac{d}{dt}(J\phi_t(x))^{-1}J\phi_t(x)
+
(J\phi_t(x))^{-1}\partial_tJ\phi_t(x)
=0$, 
which together with \eqref{eq:disc:part:phi} leads to
\begin{equation}
\label{eq:disc:der:inv:jphi}
\frac{d}{dt}(J\phi_t(x))^{-1}
=
-(J\phi_t(x))^{-1}JF(x)(J\phi_t(x))^{-1}.
\end{equation}
Now differentiate the integrand in \eqref{eq:disc:i1} of $I_1(t)$ and apply \eqref{eq:disc:der:inv:jphi} to get
\begin{align*}
\frac{d}{dt}\Bigl[-\mathrm{tr}\!\bigl((J\phi_t(x))^{-1}JF(x)\bigr)\Bigr]
&=
-\mathrm{tr}\!\Bigl(\frac{d}{dt}(J\phi_t(x))^{-1}JF(x)\Bigr) \notag\\
&\stackrel{\eqref{eq:disc:der:inv:jphi}}=
\mathrm{tr}\!\Bigl((J\phi_t(x))^{-1}JF(x)(J\phi_t(x))^{-1}JF(x)\Bigr).
\end{align*}
Therefore,
\begin{equation*}
I_1'(t)
=
\int_{(\mathbb R^d)^N}
\mathrm{tr}\!\Bigl((J\phi_t(x))^{-1}JF(x)(J\phi_t(x))^{-1}JF(x)\Bigr)
 g_K(\ell,x) dx.
\end{equation*}
Setting
\begin{equation}
\label{def:disc:bt}
B_t(x):=JF(x)(J\phi_t(x))^{-1},
\end{equation}
we have $\mathrm{tr}\!\Bigl((J\phi_t(x))^{-1}JF(x)(J\phi_t(x))^{-1}JF(x)\Bigr)
=
\mathrm{tr}\!\bigl(B_t(x)^2\bigr).$ Hence
\begin{align}
\label{eq:disc:i1:prime}
\psi_1(t)
&:=
I_1'(t)
=
\int_{(\mathbb R^d)^N}
\mathrm{tr}\!\bigl(B_t(x)^2\bigr)
g_K(\ell,x) dx\le
\int_{(\mathbb R^d)^N}
\|B_t(x)\|_{\mathrm{HS}}^2
g_K(\ell,x) dx,
\end{align}
where $|\mathrm{tr}(B_t(x)^2)|\le\|B_t(x)\|_{\mathrm{HS}}^2$.

\smallskip
\underline{Step 3c:}
Combining \eqref{eq:anch:disc:e:prime:split}, \eqref{eq:disc:i2:prime} and
\eqref{eq:disc:i1:prime} gives
\begin{align}
\label{eq:disc:e2der}
E''(t)
=
\psi_2(t)+\psi_1(t)&\stackrel{\eqref{eq:disc:i2:prime},\eqref{eq:disc:i1:prime}}{=}
-\int_{(\mathbb R^d)^N}
\bigl\langle F(x), \nabla^2 \widetilde u_N(\phi_t(x)) F(x)\bigr\rangle
 g_K(\ell,x) dx \notag\\
&\qquad
+\int_{(\mathbb R^d)^N}
\mathrm{tr}\!\bigl(B_t(x)^2\bigr)
g_K(\ell,x) dx\notag\\
&\stackrel{\eqref{eq:disc:i1:prime}}{\le}
-\int_{(\mathbb R^d)^N}
\bigl\langle F(x), \nabla^2 \widetilde u_N(\phi_t(x)) F(x)\bigr\rangle
g_K(\ell,x) dx\notag\\
&\qquad
+\int_{(\mathbb R^d)^N}
\|B_t(x)\|_{\mathrm{HS}}^2
g_K(\ell,x) dx.
\end{align}

\medskip
\noindent\underline{Step 4:}
Using \eqref{eq:disc:end:law} and $E(\eta)
=
E(0)+\eta E'(0)
+\int_0^\eta(\eta-t)E''(t) dt$, we get 
\begin{align*}
\mathrm{KL}\bigl(g_K(\ell+1)\|\tilde p^{\otimes N}\bigr)
&=
\mathrm{KL}\bigl(g_K(\ell)\|\tilde p^{\otimes N}\bigr)
+\eta E'(0)
+\int_0^\eta (\eta-t)E''(t) dt.
\end{align*}
Plugging in \eqref{eq:disc:e0:prime:decom} for $E'(0)$, we obtain
\begin{align}
\label{eq:disc:desc}
\mathrm{KL}\bigl(g_K(\ell+1)\|\tilde p^{\otimes N}\bigr)
&=
\mathrm{KL}\bigl(g_K(\ell)\|\tilde p^{\otimes N}\bigr)
-
N\eta \mathbb E_{g_K(\ell)}
\Bigl[
\KSD_a^2\bigl(\mu_\ell^N\|p\bigr)
\Bigr]\notag\\
&\qquad
+\eta\Bigl(
R_3^{\mathrm{disc}}(\ell)
+
R_{\mathrm{diag}}^{\mathrm{disc}}(\ell)
\Bigr)
+\int_0^\eta (\eta-t)E''(t) dt
\end{align}
and consequently the desired bound \eqref{eq:disc:ksd2e}. 
\end{proof}


\subsection{Proof of Theorem~\ref{theo:disc:bd:ksd2}}

Set 
\[Q:=p_0^{\otimes N},\qquad U_{0,N}(x):=\sum_{i=1}^N U_0(x_i),\qquad m:=dN.\]
We  will use
Proposition~\ref{prop:disc:bd:ksd:squar:all:norm} with
$\widetilde p=p_0$ and $\varepsilon_r=\eta$ for
$r=0,\dots,L-1$. 

\noindent\underline{Step 1:} We will verify the assumptions of Proposition~\ref{prop:disc:bd:ksd:squar:all:norm}. 

\begin{enumerate}[label=(\roman*),leftmargin=2.4em]
\item (A2), (K2) and (S1) as assumed in Theorem~\ref{theo:disc:bd:ksd2} are among those required in Proposition~\ref{prop:disc:bd:ksd:squar:all:norm}. 

\item $\mu_0(G_K)>0$ per \eqref{eq:mu0:gk:cond}, so that $g_K(0)$ and the truncated density in \eqref{def:disc:init} are well-defined.

\item Regarding \eqref{assum:algo:parti},  $\widetilde u=\log p_0=-U_0-\log Z_0\in C^2(\mathbb R^d)$ by
Assumption~\ref{ass:stab}(S1), while Equation~\eqref{eq:disc:f:loc:w1inf}
gives
$F\in W_{\mathrm{loc}}^{1,\infty}
((\mathbb R^d)^N;(\mathbb R^d)^N)$. Now we check that $\mathrm{KL}(g_K(0)\|Q)<\infty$. Via
\eqref{def:gk:trunc:dens}, we can write
\begin{align}
\label{eq:disc:kl:trunc:expan:anch}
\mathrm{KL}\bigl(g_K(0)\|Q\bigr)
&=
\int_{G_K}\frac{g(0,x)}{\mu_0(G_K)}
\Bigl[
\log\!\Bigl(\frac{g(0,x)}{Q(x)}\Bigr)
-
\log \mu_0(G_K)
\Bigr] dx
\notag\\
&=
\frac1{\mu_0(G_K)}
\int_{G_K} g(0,x)\log\!\Bigl(\frac{g(0,x)}{Q(x)}\Bigr) dx
+
\log\!\Bigl(\frac1{\mu_0(G_K)}\Bigr).
\end{align}
Set $r:=g(0)/Q$. Since $r\log r\ge-1/e$,
\[
\int_{G_K}g(0,x)\log\!\Bigl(\frac{g(0,x)}{Q(x)}\Bigr)dx
=
\mathrm{KL}\bigl(g(0)\|Q\bigr)
-
\int_{G_K^c}r\log r dQ
\le
\mathrm{KL}\bigl(g(0)\|Q\bigr)+\frac1e.
\]
Using $\mu_0(G_K)\ge\frac12$ and
$\mathrm{KL}(g(0)\|Q)\le\gamma dN$ from
\eqref{eq:mu0:gk:cond}, we obtain
\begin{equation}
\label{eq:disc:kl:trunc:bd:anch}
\mathrm{KL}\bigl(g_K(0)\|Q\bigr)
\le
2\mathrm{KL}\bigl(g(0)\|Q\bigr)+\frac2e+\log2
\stackrel{\eqref{eq:mu0:gk:cond}}{\le}
2\gamma dN+\frac2e+\log2.
\end{equation}

\item
Here we verify \eqref{assum:pos:det}. \eqref{eq:disc:jphi:inv:bd:anch} gives
$\|tJF(x)\|_{\mathrm{op}}\le\frac12$ for every $t\in[0,\eta]$.
Therefore $A_t(x)=I+tJF(x)$ is invertible and $\det A_t(x)\neq 0$ for all $t$. The map
$t\mapsto\det A_t(x)$ is continuous so that $\det A_t(x)>0$ for every $t$. Indeed, suppose $\det A_{t_0}(x)<0$ for some $t_0$; then continuity of $A_t$, $\det A_0(x)=1$ and the intermediate value theorem imply $\det A_{t_1}=0$ for $0\leq t_1\leq t_0$, which is a contradiction.

\item

Next is \eqref{assum:pos:det}. By \eqref{def:disc:gk:ell}, $g_K(\ell)$ is the law of $x^\ell$ when
$x^0$ has density $g_K(0)$. Since $g_K(0)$ is supported on $G_K$,
\eqref{eq:disc:sum:u0:apr:bd} provides for $g_K(\ell)$-a.e.\ $x$
\begin{align}
\label{eq:bound:u0:n}
\frac1N U_{0,N}(x)
\le
M\Bigl(d^{\frac{1}{1-\alpha_0}}+K\Bigr)
\Bigl((\eta L)^{\frac{1}{1-\alpha_0}}\vee1\Bigr)
=:H.
\end{align}
Equations \eqref{eq:disc:f:bd:sk} and \eqref{eq:disc:jf:bd:sk} give
constants $C_F,C_J<\infty$ such that $|F(x)|\le C_F$ and
$\|JF(x)\|_{\mathrm{HS}}\le C_J$ for $g_K(\ell)$-a.e.\ $x$.
Moreover, \eqref{eq:disc:jphi:inv:bd:anch} and
\eqref{eq:disc:jphi:inv:op:bd:anch} give
$\|tJF(x)\|_{\mathrm{op}}\le\frac12$ and
$\|A_t(x)^{-1}\|_{\mathrm{op}}\le2$ for $0\le t\le\eta$.
The singular values of $A_t(x)$ therefore lie in
$[\frac12,\frac32]$. Since $\det A_t(x)>0$ as shown above and $|\det A_t(x)|$ is the product of the singular values of $A_t(x)$,  we get $2^{-m}\le\det A_t(x)\le\left(\frac32\right)^m$ which confirms \eqref{assum:pos:det}. In addition, 
\begin{align}
\label{eq:log:det:at}
|\log\det A_t(x)|\le m\log2.
\end{align}

\item We will verify \eqref{ass:disc:diff:env}. 
Taylor's theorem and
$\|\nabla^2U_0\|_{\mathrm{op}}\le C_{U_0}$ from Assumption~\ref{ass:stab}(S1) give
\[
U_{0,N}(\phi_t(x))
\le
U_{0,N}(x)
+t|\nabla U_{0,N}(x)||F(x)|
+\frac{C_{U_0}t^2}{2}|F(x)|^2.
\]
Moreover,  Assumption~\ref{ass:stab}(S1), Jensen's inequality and \eqref{eq:bound:u0:n} give the estimate $|\nabla U_{0,N}(x)|^2
\\\le
A_0^2N
\left(\frac1N U_{0,N}(x)\right)^{2\alpha_0}\le
A_0^2NH^{2\alpha_0}$ a.e., which combined with the above bound for $U_{0,N}(\phi_t(x))$ leads to
\[
U_{0,N}(\phi_t(x))
\le
NH+\eta A_0\sqrt N H^{\alpha_0}C_F
+\frac{C_{U_0}\eta^2}{2}C_F^2
=:C_{U_0}^{\mathrm{disc}}.
\]
Since $\widetilde u_N=-U_{0,N}-N\log Z_0$,
$|\widetilde u_N(\phi_t(x))|
\le C_{U_0}^{\mathrm{disc}}+N|\log Z_0|$. Again by
Assumption~\ref{ass:stab}(S1) and Jensen's inequality,
\[
|\nabla\widetilde u_N(\phi_t(x))|^2
\le
A_0^2N
\left(\frac{U_{0,N}(\phi_t(x))}{N}\right)^{2\alpha_0}
\le
A_0^2N
\left(\frac{C_{U_0}^{\mathrm{disc}}}{N}\right)^{2\alpha_0}.
\]
The Hessian of $\widetilde u_N= -\sum_{i=1}^NU_0(x_i)-N\log Z_0$ is a block diagonal matrix with $\nabla^2\widetilde u_N(x) = \operatorname{diag}\bigl( -\nabla^2U_0(x_1),\ldots,-\nabla^2U_0(x_N) \bigr)$, so
$\|\nabla^2\widetilde u_N(x)\|_{\mathrm{op}} = \max_{1\le i\le N}\|\nabla^2U_0(x_i)\|_{\mathrm{op}} \le C_{U_0}$ per Assumption~\ref{ass:stab}(S1). Using these bounds together with
$|\log\det A_t(x)|\le m\log2$ from \eqref{eq:log:det:at},
$\|A_t(x)^{-1}\|_{\mathrm{op}}\le2$ from
\eqref{eq:disc:jphi:inv:op:bd:anch},
$|F(x)|\le C_F$ from \eqref{eq:disc:f:bd:sk}, and
$\|JF(x)\|_{\mathrm{HS}}\le C_J$ from
\eqref{eq:disc:jf:bd:sk}, we deduce for $D_\ell$ at \eqref{def:dell}, it holds $g_K(\ell)$-a.e. that 
\begin{align*}
\mathcal D_\ell(x)
\le&
m\log2+C_{U_0}^{\mathrm{disc}}+N|\log Z_0|+2C_J\\
&+
A_0\sqrt N
\left(\frac{C_{U_0}^{\mathrm{disc}}}{N}\right)^{\alpha_0}C_F
+4C_J^2+C_{U_0}C_F^2<\infty.
\end{align*}
Since
$g_K(\ell)$ is a probability density, we get $\int_{(\mathbb R^d)^N}
g_K(\ell,x)\mathcal D_\ell(x)dx<\infty$ as the first part in \eqref{ass:disc:diff:env}. 

The last step is to check
$\int g_K(\ell,x)|\log g_K(\ell,x)|dx<\infty$ as the second part of
\eqref{ass:disc:diff:env}. The fact that
$\sup_{0<s\le1}s|\log s|=e^{-1}$ gives
\[
\int_{\{g_K(0)\le Q\}}
g_K(0,x)
\left|\log\frac{g_K(0,x)}{Q(x)}\right|dx
\le
e^{-1}\int Q(x)dx
=
e^{-1}.
\]
Thus the negative part of
$g_K(0)\log(g_K(0)/Q)$ is integrable. Also, 
\[\mathrm{KL}(g_K(0)\|Q) = \int f(x)dx = \int \brac{g_K(0,x)\log\frac{g_K(0,x)}{Q(x)}}^+dx-\int \brac{g_K(0,x)\log\frac{g_K(0,x)}{Q(x)}}^-dx\]
which is finite per \eqref{eq:disc:kl:trunc:bd:anch}. Then the positive part is also integrable: $\int g_K(0,x)|\log(g_K(0,x)/Q(x))|dx<\infty$. Since $g_K(0)$ is
supported on $G_K$ and $|\log Q(x)|\le NK+N|\log Z_0|$ on $G_K$,
$\int g_K(0,x)|\log Q(x)|dx<\infty$. Furthermore, with
$\log g_K(0)=\log(g_K(0)/Q)+\log Q$ on $\{g_K(0)>0\}$, it follows that
\begin{align}\label{eq:int:gk:log:gk}\int g_K(0,x)|\log g_K(0,x)|dx<\infty.\end{align}

At this point, change of variables and
$g_K(\ell+1,T_\ell(x))\det A_\eta(x)=g_K(\ell,x)$ (similar to \eqref{eq:push:expl:gen}) give
\begin{align*}
\int g_K(\ell+1,y)|\log g_K(\ell+1,y)|dy
&=
\int g_K(\ell,x)
\left|\log g_K(\ell,x)-\log\det A_\eta(x)\right|dx\\
&\stackrel{\eqref{eq:log:det:at}}{\le}
\int g_K(\ell,x)|\log g_K(\ell,x)|dx+dN\log2,
\end{align*}
where the last inequality uses \eqref{eq:log:det:at} with $m=dN$.
Repeating this argument inductively and using \eqref{eq:int:gk:log:gk} imply
$\int g_K(\ell,x)|\log g_K(\ell,x)|dx<\infty$ for
$0\le\ell\le L-1$.

\end{enumerate}

\smallskip
\noindent\underline{Step 2:} Let us apply Proposition~\ref{prop:disc:bd:ksd:squar:all:norm}. 
Equation~\eqref{eq:disc:rem:bd:anch} gives
$\int_0^\eta(\eta-t)E''(t)dt\le\theta^2$ for
$0\le\ell\le L-1$ and $\theta=\sqrt{N/L}$ as chosen in \eqref{eq:disc:unnor:t:theta}. Since $\widetilde p=p_0$, we have the cancellation
\[w_{\widetilde p}=\nabla\log p_0+\nabla U_0\equiv0.\]
Therefore,
\eqref{eq:disc:ksd2e} gives
\begin{align}
\label{eq:disc:avg:bd:pre}
\mathbb E_{g_K(0)}
\Biggl[
\frac1L\sum_{\ell=0}^{L-1}\KSD_a^2\bigl(\mu_\ell^N\|p\bigr)
\Biggr]
&\le
\frac{\mathrm{KL}\bigl(g_K(0)\|p_0^{\otimes N}\bigr)}{NL\eta}
\notag\\
&\qquad
+\frac{1}{NL}\sum_{\ell=0}^{L-1}
\Bigl(
R_3^{\mathrm{disc}}(\ell)
+
R_{\mathrm{diag}}^{\mathrm{disc}}(\ell)
\Bigr)
+
\frac{\theta^2}{N\eta}.
\end{align}
By \eqref{eq:disc:kl:trunc:bd:anch}, the KL term in
\eqref{eq:disc:avg:bd:pre} is at most
$(2\gamma d+2/e+\log2)/(L\eta)$. Substituting this KL bound into
\eqref{eq:disc:avg:bd:pre}, we obtain
\begin{align}
\label{eq:disc:avg:bd:anch}
&\mathbb E_{g_K(0)}
\Biggl[
\frac1L\sum_{\ell=0}^{L-1}\KSD_a^2\bigl(\mu_\ell^N\|p\bigr)
\Biggr]\notag\\
&\le
\left(2\gamma d+\frac2e+\log2\right)\frac{1}{L\eta}
+\frac{1}{NL}\sum_{\ell=0}^{L-1}
\Bigl(
R_3^{\mathrm{disc}}(\ell)
+
R_{\mathrm{diag}}^{\mathrm{disc}}(\ell)
\Bigr)
+
\frac{\theta^2}{N\eta}.
\end{align}
Now let $A:=N^{1-\alpha_0}L^{2\alpha_0}$.
By \eqref{eq:disc:unnor:t:theta}, $L\ge\overline L$ and
$2\alpha_0-1<0$, so $\frac{A}{L/N}
=N^{2-\alpha_0}L^{2\alpha_0-1}
\le
N^{2-\alpha_0}\overline L^{2\alpha_0-1}
=1$. 
Moreover, $A\ge
N^{1-\alpha_0}\overline L^{2\alpha_0}
=\frac{\overline L}{N}
\ge N$. Thus $A\vee N=A$. Since $\theta=\sqrt{N/L}$,
\eqref{eq:disc:unnor:eta} gives
\[
\frac{1}{L\eta}
=
\sqrt{C_0}\sqrt{\frac{A}{NL}}
\le
\frac{\sqrt{C_0}}{N},
\qquad
\frac{\theta^2}{N\eta}
=
\sqrt{C_0}\sqrt{\frac{A}{NL}}
\le
\frac{\sqrt{C_0}}{N}.
\]
Then \eqref{eq:disc:fin:anch} follows from
\eqref{eq:disc:avg:bd:anch}.

\smallskip
\underline{Step 3:} Finally, we handle the remainder terms in \eqref{eq:disc:avg:bd:anch}. We start with bounding $R_3^{\mathrm{disc}}(\ell)$, noting that the choice
$\tilde{p}:=p_0$ means $w_{\tilde{p}}\equiv 0$.
Using $|a|\le A_\infty$ and $|\nabla a|\le L_a$ from (A2) of
Assumption~\ref{ass:bl:ext}, $|k|\le B$, and
$|\nabla_1 k(y,x)|\le B\sqrt d$ from (K2) of
Assumption~\ref{ass:rkhs}, we obtain
\begin{align*}
|R_2(x,y)|
&\le
|a(y)| |\nabla a(x)|
\Bigl(
|k(y,x)| |\nabla U_0(y)|+|\nabla_1 k(y,x)|
\Bigr)\\
&\le
A_\infty L_a B |\nabla U_0(y)|
+
A_\infty L_a B \sqrt d.
\end{align*}
Therefore,
\begin{align*}
|R_3^{\mathrm{disc}}(\ell)|
&\le
\frac1N\sum_{k,j=1}^N
\int g_K(\ell,x) |R_2(x_k,x_j)| dx\\
&\le
N A_\infty L_a B
\mathbb E_{g_K(\ell)}
\Biggl[
\frac1N\sum_{j=1}^N |\nabla U_0(x_j)|
\Biggr]
+
N A_\infty L_a B \sqrt d.
\end{align*}
Set
\begin{align}
\label{def:hl}
h(\ell):=\frac1N\sum_{i=1}^N U_0\bigl(x_i^\ell\bigr),
\qquad \ell=0,1,\dots,L-1.
\end{align}
By the growth assumption in Assumption~\ref{ass:stab}(S1),
$|\nabla U_0(x)|\le A_0 U_0(x)^{\alpha_0}$ and Jensen's inequality,
\begin{align}
\label{eq:disc:bd:sum:grad:u}
\frac1N\sum_{j=1}^N |\nabla U_0(x_j)|
\le
A_0\frac1N\sum_{j=1}^N U_0(x_j)^{\alpha_0}
\le
A_0\Bigl(\frac1N\sum_{j=1}^N U_0(x_j)\Bigr)^{\alpha_0}
=
A_0 h(\ell)^{\alpha_0}.
\end{align}
Hence
\begin{equation}
\label{eq:disc:r3pre}
|R_3^{\mathrm{disc}}(\ell)|
\le
N A_\infty L_a B d
+
N A_\infty L_a B A_0
\mathbb E_{g_K(\ell)}\bigl[h(\ell)^{\alpha_0}\bigr].
\end{equation}
Now via Lemma~\ref{lem:disc:bd:sum:u0}(ii) and the truncated
initialization $g_K(0)$ which implies $h(0)\le K$ a.s., we get
\[
h(\ell)^{\alpha_0}
\le
M^{\alpha_0}
\Bigl(d^{\frac{1}{1-\alpha_0}}+K\Bigr)^{\alpha_0}
\Bigl((\eta \ell)^{\frac{\alpha_0}{1-\alpha_0}}\vee 1\Bigr)
\qquad g_K(0)\text{-a.s.}
\]
Substituting this into \eqref{eq:disc:r3pre} gives
\eqref{eq:disc:r3bd}. We next bound
$R_{\mathrm{diag}}^{\mathrm{disc}}(\ell)$. Recall the function $D_a$ defined at \eqref{def:w:rem:da}. 
The fact that $\div_x\bigl(k(x,x)\nabla U_0(x)\bigr)
=
\bigl(\nabla_1 k(x,x)+\nabla_2 k(x,x)\bigr)
\cdot\nabla U_0(x)
+k(x,x)\Delta U_0(x)$ implies 
\[
\div_x\bigl(k(x,x)\nabla U_0(x)\bigr)
-\nabla U_0(x)\cdot\nabla_2 k(x,x)
=
\nabla_1 k(x,x)\cdot\nabla U_0(x)+k(x,x)\Delta U_0(x),
\]
so that
\begin{align*}
D_a(x)
&=
a(x)\nabla a(x)\cdot
\bigl(k(x,x)\nabla U_0(x)-\nabla_1 k(x,x)\bigr)\\
&\quad
+a(x)^2\Bigl(
\nabla_1 k(x,x)\cdot\nabla U_0(x)
+k(x,x)\Delta U_0(x)
-\Delta_1 k(x,x)
\Bigr).
\end{align*}
Therefore, using $|a|\le A_\infty$ and
$|\nabla a|\le L_a$ from Assumption~\ref{ass:bl:ext}(A2),
$|k(x,x)|\le B$, $|\nabla_1 k(x,x)|\le B\sqrt d$, and
$|\Delta_1 k(x,x)|\le Bd$ from (K2) of
Assumption~\ref{ass:rkhs}, and \eqref{eq:disc:lap:u0:bd},
\begin{align*}
|D_a(x)|
&\le
|a(x)| |\nabla a(x)|
\Bigl(
|k(x,x)| |\nabla U_0(x)|+|\nabla_1 k(x,x)|
\Bigr)\\
&\quad
+
|a(x)|^2
\Bigl(
|\nabla_1 k(x,x)| |\nabla U_0(x)|
+|k(x,x)| |\Delta U_0(x)|
+|\Delta_1 k(x,x)|
\Bigr)\\
&\stackrel{\eqref{eq:disc:lap:u0:bd}}{\le}
A_\infty L_a\bigl(B|\nabla U_0(x)|+B\sqrt d\bigr)
+
A_\infty^2\bigl(
B\sqrt d |\nabla U_0(x)|+BC_\Delta d+Bd
\bigr)\\
&\le
B\bigl(A_\infty L_a+A_\infty^2\bigr)
\sqrt d |\nabla U_0(x)|
+
B\bigl(
A_\infty L_a+A_\infty^2+A_\infty^2C_\Delta
\bigr)d.
\end{align*}
Consequently,
\begin{align*}
|R_{\mathrm{diag}}^{\mathrm{disc}}(\ell)|
&\le
\mathbb E_{g_K(\ell)}
\Biggl[
\frac1N\sum_{k=1}^N |D_a(x_k)|
\Biggr]\le
\beta_{d,1} d
+
\beta_{d,2}\sqrt d
\mathbb E_{g_K(\ell)}
\Biggl[
\frac1N\sum_{k=1}^N |\nabla U_0(x_k)|
\Biggr],
\end{align*}
where
$\frac1N\sum_{k=1}^N |\nabla U_0(x_k)|
\le A_0 h(\ell)^{\alpha_0}$ per
\eqref{eq:disc:bd:sum:grad:u}.
Using the bound on $h(\ell)^{\alpha_0}$ obtained above from
Lemma~\ref{lem:disc:bd:sum:u0}(ii) and $h(0)\le K$, we obtain
\eqref{eq:disc:rdiag:bd}. This completes the proof.
\QEDA


\bibliographystyle{amsplain}

\begin{thebibliography}{99}


\bibitem{ledoux2023gaussian}
M.~Ledoux,
\emph{Gaussian concentration inequalities},
The Gaussian Blog, December 2023.

\bibitem{ambrosio2004transport}
L.~Ambrosio.
\newblock Transport equation and Cauchy problem for BV vector fields.
\newblock  {Inventiones Mathematicae}, 158(2):227-260, 2004.


\bibitem{Evans}
L.~C.~Evans.
\newblock \emph{Partial Differential Equations}.
\newblock Graduate Studies in Mathematics, vol.~19.
\newblock American Mathematical Society, Providence, RI, second edition, 2010.



\bibitem{ambrosioGigliSavare2005}
L.~Ambrosio, N.~Gigli, and G.~Savar\'e.
\newblock \emph{Gradient Flows in Metric Spaces and in the Space of
Probability Measures}.
\newblock Lectures in Mathematics ETH Z\"urich.
Birkh\"auser, Basel, 2005.


\bibitem{vanNeerven1992}
J.~M.~A.~M. van Neerven.
\newblock \emph{The Adjoint of a Semigroup of Linear Operators}.
\newblock Lecture Notes in Mathematics, vol.~1529.
\newblock Springer-Verlag, Berlin, 1992.


\bibitem{banerjeeetal2025}
S.~Banerjee, K.~Balasubramanian, and P.~Ghosal.
\newblock Improved finite-particle convergence rates for Stein variational gradient descent.
\newblock In \emph{The Thirteenth International Conference on Learning Representations}, 2025.


\bibitem{hytonenEtAl2016}
T.~Hytönen, J.~van Neerven, M.~Veraar, and L.~Weis,
\emph{Analysis in Banach Spaces. Volume I: Martingales and
Littlewood-Paley Theory},
Ergebnisse der Mathematik und ihrer Grenzgebiete, 3. Folge, vol.~63,
Springer, Cham, 2016.


\bibitem{LLN}
Jianfeng Lu, Yulong Lu, and James Nolen.
\newblock Scaling limit of the Stein variational gradient descent: the mean field regime.
\newblock \emph{SIAM Journal on Mathematical Analysis}, 51(2):648\textendash671, 2019.


\bibitem{LLJ}
Q.~Liu, J.~D.~Lee, and M.~I.~Jordan.
\newblock A kernelized Stein discrepancy for goodness-of-fit tests.
\newblock In \emph{Proceedings of the 33rd International Conference on
Machine Learning}, volume~48 of \emph{Proceedings of Machine Learning
Research}, pages 276\textendash284, 2016.

\bibitem{OGC}
C.~J.~Oates, M.~Girolami, and N.~Chopin.
\newblock Control functionals for Monte Carlo integration.
\newblock \emph{Journal of the Royal Statistical Society: Series B
(Statistical Methodology)}, 79(3):695\textendash718, 2017.


\bibitem{liu2017flow}
Q. Liu.
Stein variational gradient descent as gradient flow.
In \textit{Advances in Neural Information Processing Systems},
volume 30, 2017.

\bibitem{CSG}
K.~Chwialkowski, H.~Strathmann, and A.~Gretton.
\newblock A kernel test of goodness of fit.
\newblock In \emph{Proceedings of the 33rd International Conference on
Machine Learning}, volume~48 of \emph{Proceedings of Machine Learning
Research}, pages 2606\textendash2615, 2016.

\bibitem{LiuWang}
Q.~Liu and D.~Wang.
\newblock Stein variational gradient descent: A general purpose Bayesian
inference algorithm.
\newblock In \emph{Advances in Neural Information Processing Systems 29},
pages 2378\textendash2386, 2016.

\bibitem{HanLiu}
J.~Han and Q.~Liu.
\newblock Stein variational gradient descent without gradient.
\newblock In \emph{Proceedings of the 35th International Conference on
Machine Learning}, volume~80 of \emph{Proceedings of Machine Learning
Research}, pages 1900\textendash1908, 2018.

\bibitem{federer}
H.~Federer.
\newblock \emph{Geometric Measure Theory}.
\newblock Grundlehren der mathematischen Wissenschaften, volume~153.
Springer, New York, 1969.


\bibitem{villani2003topics}
C.~Villani.
\newblock \emph{Topics in Optimal Transportation}.
\newblock Graduate Studies in Mathematics, volume~58.
American Mathematical Society, Providence, RI, 2003.

\bibitem{golse}
F.~Golse.
\newblock On the dynamics of large particle systems in the mean field limit.
\newblock In A.~Muntean, J.~Rademacher, and A.~Zagaris, editors,
\emph{Macroscopic and Large Scale Phenomena: Coarse Graining, Mean Field
Limits and Ergodicity}, volume~3 of \emph{Lecture Notes in Applied
Mathematics and Mechanics}, pages 1\textendash144.
Springer, Cham, 2016.



\bibitem{shi2022mirrored}
J. Shi, C. Liu and L. Mackey.
Sampling with mirrored Stein operators.
In \textit{International Conference on Learning Representations}, 2022.

\bibitem{liu2018riemannian}
C. Liu and J. Zhu.
Riemannian Stein variational gradient descent for Bayesian inference.
In \textit{Proceedings of the AAAI Conference on Artificial Intelligence},
volume 32, 2018.

\bibitem{banales2025branching}
I. Ba\~nales, A. Jaramillo and J. H. Ricalde-Guerrero.
Branching Stein variational gradient descent for sampling multimodal
distributions.
\textit{arXiv preprint arXiv:2506.13916}, 2025.

\bibitem{detommaso2018newton}
G. Detommaso, T. Cui, A. Spantini, Y. Marzouk and R. Scheichl.
A Stein variational Newton method.
In \textit{Advances in Neural Information Processing Systems},
volume 31, 2018.

\bibitem{wang2019matrix}
D. Wang, Z. Tang, C. Bajaj and Q. Liu.
Stein variational gradient descent with matrix-valued kernels.
In \textit{Advances in Neural Information Processing Systems},
volume 32, 2019.

\bibitem{chen2020projected}
P. Chen and O. Ghattas.
Projected Stein variational gradient descent.
In \textit{Advances in Neural Information Processing Systems},
volume 33, 2020.

\bibitem{gong2021sliced}
W. Gong, Y. Li and J. M. Hern\'andez-Lobato.
Sliced kernelized Stein discrepancy.
In \textit{International Conference on Learning Representations}, 2021.

\bibitem{liu2022grassmann}
X. Liu, H. Zhu, J.-F. Ton, G. Wynne and A. Duncan.
Grassmann Stein variational gradient descent.
In \textit{Proceedings of the 25th International Conference on
Artificial Intelligence and Statistics},
volume 151 of \textit{Proceedings of Machine Learning Research},
pages 2002\textendash2021, 2022.

\bibitem{wang2018message}
D. Wang, Z. Zeng and Q. Liu.
Stein variational message passing for continuous graphical models.
In \textit{Proceedings of the 35th International Conference on
Machine Learning},
volume 80 of \textit{Proceedings of Machine Learning Research},
pages 5219\textendash5227, 2018.

\bibitem{li2020stochastic}
L. Li, Y. Li, J.-G. Liu, Z. Liu and J. Lu.
A stochastic version of Stein variational gradient descent for
efficient sampling.
\textit{Communications in Applied Mathematics and Computational Science},
15:37\textendash63, 2020.

\bibitem{feng2017amortized}
Y. Feng, D. Wang and Q. Liu.
Learning to draw samples with amortized Stein variational gradient descent.
In \textit{Proceedings of the 33rd Conference on Uncertainty in
Artificial Intelligence}, 2017.

\bibitem{han2017importance}
J. Han and Q. Liu.
Stein variational adaptive importance sampling.
In \textit{Proceedings of the 33rd Conference on Uncertainty in
Artificial Intelligence}, 2017.

\bibitem{sharrock2023coin}
L. Sharrock and C. Nemeth.
Coin sampling: Gradient-based Bayesian inference without learning rates.
In \textit{Proceedings of the 40th International Conference on
Machine Learning},
volume 202 of \textit{Proceedings of Machine Learning Research},
pages 30850\textendash30882, 2023.



\bibitem{han2020discrete}
J. Han, F. Ding, X. Liu, L. Torresani, J. Peng and Q. Liu.
Stein variational inference for discrete distributions.
In \textit{Proceedings of the Twenty Third International Conference on
Artificial Intelligence and Statistics},
volume 108 of \textit{Proceedings of Machine Learning Research},
pages 4563\textendash4572, 2020.



\bibitem{lovasz2007geometry}
L.~Lov\'asz and S.~Vempala.
The geometry of logconcave functions and sampling algorithms.
\textit{Random Structures \& Algorithms}, 30:307–358, 2007.

\bibitem{lovasz2006fast}
L.~Lov\'asz and S.~Vempala.
Fast algorithms for logconcave functions: Sampling, rounding,
integration and optimization.
In \textit{Proceedings of the 47th Annual IEEE Symposium on
Foundations of Computer Science}, pp.~57–68, 2006.

\bibitem{pereyra2016proximal}
M.~Pereyra.
Proximal Markov chain Monte Carlo algorithms.
\textit{Statistics and Computing}, 26:745–760, 2016.

\bibitem{durmus2018moreau}
A.~Durmus, \'E.~Moulines and M.~Pereyra.
Efficient Bayesian computation by proximal Markov chain Monte Carlo:
When Langevin meets Moreau.
\textit{SIAM Journal on Imaging Sciences}, 11(1):473–506, 2018.

\bibitem{salim2019splitting}
A.~Salim, D.~Kovalev and P.~Richt\'arik.
Stochastic proximal Langevin algorithm: Potential splitting and
nonasymptotic rates.
In \textit{Advances in Neural Information Processing Systems},
volume~32, 2019.

\bibitem{chatterji2020smoothness}
N.~Chatterji, J.~Diakonikolas, M.~I.~Jordan and P.~Bartlett.
Langevin Monte Carlo without smoothness.
In \textit{Proceedings of the 23rd International Conference on
Artificial Intelligence and Statistics},
volume~108 of \textit{Proceedings of Machine Learning Research},
pp.~1716–1726, 2020.

\bibitem{nakakita2026langevin}
S.~Nakakita.
Non-asymptotic analysis of Langevin-type Monte Carlo algorithms.
\textit{Advances in Applied Probability}, 58(3):918–955, 2026.

\bibitem{habring2024subgradient}
A.~Habring, M.~Holler and T.~Pock.
Subgradient Langevin methods for sampling from nonsmooth potentials.
\textit{SIAM Journal on Mathematics of Data Science},
6(4):897–925, 2024.

\bibitem{gurbuzbalaban2025anchored}
M.~G\"urb\"uzbalaban, H.~M.~Nguyen, X.~Zhang and L.~Zhu.
Anchored Langevin algorithms.
arXiv:2509.19455, 2025.

\bibitem{lee2021structured}
Y.~T.~Lee, R.~Shen and K.~Tian.
Structured logconcave sampling with a restricted Gaussian oracle.
In \textit{Proceedings of the 34th Conference on Learning Theory},
volume~134 of \textit{Proceedings of Machine Learning Research},
pp.~2993–3050, 2021.

\bibitem{chen2022proximal}
Y.~Chen, S.~Chewi, A.~Salim and A.~Wibisono.
Improved analysis for a proximal algorithm for sampling.
In \textit{Proceedings of the 35th Conference on Learning Theory},
volume~178 of \textit{Proceedings of Machine Learning Research},
pp.~2984–3014, 2022.

\bibitem{liang2022proximal}
J.~Liang and Y.~Chen.
A proximal algorithm for sampling.
arXiv:2202.13975, 2022, revised 2023.

\bibitem{fan2023dimension}
J.~Fan, B.~Yuan and Y.~Chen.
Improved dimension dependence of a proximal algorithm for sampling.
In \textit{Proceedings of the 36th Conference on Learning Theory},
volume~195 of \textit{Proceedings of Machine Learning Research},
pp.~1473–1521, 2023.

\bibitem{kook2026zeroth}
Y.~Kook and S.~S.~Vempala.
Zeroth-order logconcave sampling.
arXiv:2507.18021, version~2, 2026.

\bibitem{montanari2023sampling}
A.~Montanari.
Sampling, diffusions, and stochastic localization.
arXiv:2305.10690, 2023.

\bibitem{he2024zeroth}
Y.~He, K.~Rojas and M.~Tao.
Zeroth-order sampling methods for non-log-concave distributions:
Alleviating metastability by denoising diffusion.
In \textit{Advances in Neural Information Processing Systems},
volume~37, 2024.

\bibitem{han2025splitting}
F.~Han, S.~Osher and W.~Li.
Splitting regularized Wasserstein proximal algorithms for
nonsmooth sampling problems.
arXiv:2502.16773, 2025.

\bibitem{cordero2025proximal}
P.~Cordero Encinar, F.~R.~Crucinio and O.~D.~Akyildiz.
Proximal interacting particle Langevin algorithms.
In \textit{Proceedings of the 41st Conference on Uncertainty
in Artificial Intelligence},
volume~286 of \textit{Proceedings of Machine Learning Research},
pp.~1220–1265, 2025.


\end{thebibliography}


\newpage
\appendix
\section{Sampling targets with non-smooth potentials}

Sampling from distributions with nonsmooth potentials on $\R^d$
has been studied through several approaches. Geometric random walks,
including the ball walk and hit-and-run, require no derivatives of
the potential \cite{lovasz2007geometry,lovasz2006fast}. Meanwhile, Langevin methods
have been extended to non-smooth sampling problems via proximal updates and potential splitting
\cite{pereyra2016proximal,durmus2018moreau,salim2019splitting},
Gaussian and spherical smoothing
\cite{chatterji2020smoothness,nakakita2026langevin},
subgradient updates \cite{habring2024subgradient}, and anchoring
\cite{gurbuzbalaban2025anchored}. Another approach introduces auxiliary
variables and alternates conditional sampling steps, as in proximal
samplers
\cite{lee2021structured,chen2022proximal,liang2022proximal,
fan2023dimension,kook2026zeroth}.
A group of related constructions which use increasingly informative observations
connects sampling with stochastic localization and denoising diffusion
\cite{montanari2023sampling,he2024zeroth}. Finally,
interacting particle methods have also been developed for nonsmooth
sampling through Wasserstein proximal splitting
\cite{han2025splitting} and proximal particle Langevin algorithms
\cite{cordero2025proximal}.

\section{Proof and supporting lemmas for Section~\ref{sec:anch:ksd}}
\label{appen:anch:ksd}

\textbf{Proof of Part i of Theorem~\ref{theo:anch:ksd:form:zero}:}
By Theorem~\ref{theo:opt:dir}, $\KSD_a(q\|p)=\|h\|_{\mathcal H^d}$ where $h_i$ is provided in \eqref{def:opt:dir:comp}. Hence
\begin{align}
\label{eq:anch:ksd:norm}
\KSD_a^2(q\|p)=\|h\|_{\mathcal H^d}^2=\sum_{i=1}^d\langle h_i,h_i\rangle_{\mathcal H}.
\end{align}
Expanding the square, applying Fubini's theorem, and using the reproducing property yield
\begin{align*}
\KSD_a^2(q\|p)
&=
\sum_{i=1}^d
\mathbb E_{(X,Y)\sim q\otimes q}\Big[
a(X)a(Y)
\big\langle
\partial_{x_i}k(X,\cdot)-(\partial_iU_0)(X)k(X,\cdot),\\
&\hspace{4.1cm}
\partial_{x_i}k(Y,\cdot)-(\partial_iU_0)(Y)k(Y,\cdot)
\big\rangle_{\mathcal H}
\Big] \\
&=
\mathbb E_{(X,Y)\sim q\otimes q}\big[u_a(X,Y)\big],
\end{align*}
which is \eqref{eq:anch:ksd}.
\QEDA

\textbf{Proof of Part ii of Theorem~\ref{theo:anch:ksd:form:zero}:}
By \eqref{eq:anch:ksd:orig}, the condition
$\mathrm{KSD}_a(q\|p)=0$ means
\[
\mathbb E_q\bigl[a(X)A_{p_0}^{\top}\phi(X)\bigr]=0
\qquad
\forall \phi\in\mathcal H^d.
\]
Fix $j\in\{1,\dots,d\}$ and $x\in\mathbb R^d$, and choose $\phi(\cdot)=e_j k(x,\cdot)\in \mathcal H^d$. 
Then by Lemma~\ref{lem:repr:impl},
\[
0
=
\mathbb E_q\bigl[a(X)A_{p_0}^{\top}(e_jk(x,\cdot))(X)\bigr]
=
\Bigl\langle
\mathbb E_q\bigl[\xi_{p_0}^a(X,\cdot)\bigr],
 e_j k(x,\cdot)
\Bigr\rangle_{\mathcal H^d}.
\]
Since $h_j(x)=\langle h,e_j k(x,\cdot)\rangle_{\mathcal H^d}=0$ for all $x$ and $j$, each component of $h:=\mathbb E_q\bigl[\xi_{p_0}^a(X,\cdot)\bigr]\in \mathcal H^d$ vanishes pointwise on $\mathbb R^d$. 
Therefore,
\begin{equation}
\label{eq:mean0:p0}
\mathbb E_q\bigl[\xi_{p_0}^a(X,\cdot)\bigr]=0
\qquad\text{in }\mathcal H^d.
\end{equation}
Moreover, by Lemma~\ref{lem:q0xi0},
\begin{equation}
\label{eq:mean:q:xi0}
\mathbb E_q\bigl[\xi_{q_0}^a(X,\cdot)\bigr]=0
\qquad\text{in }\mathcal H^d.
\end{equation}
Subtracting \eqref{eq:mean:q:xi0} from
\eqref{eq:mean0:p0} and using
\eqref{eq:comp:meas:stein}, we obtain
\begin{align}
\label{eq:mean0:anch:rat:k}
\mathbb E_q\Bigl[
a(X)\nabla\log\frac{p(X)}{q(X)} k(X,\cdot)
\Bigr]=0
\qquad\text{in }\mathcal H^d.
\end{align}

Define
\[
h(x):=a(x)\nabla\log\frac{p(x)}{q(x)}.
\]
Then $h\in L^1(q;\mathbb R^d)$ by (Q2) in
Assumption~\ref{ass:q0:kern}. In view of \eqref{eq:mean0:anch:rat:k}, taking inner product of $h$
with arbitrary $\phi\in\mathcal H^d$ and using the reproducing property yield $\int_{\mathbb R^d} h(x)\cdot \phi(x) q(x) dx=0$ for all $\phi\in\mathcal H^d$.  Then per Lemma~\ref{lem:c0:univ} with $\nu(dx)=q(x) dx$,
we conclude that $h(x)=0$ a.e. $x$. Since $a(x)=e^{U(x)-U_0(x)}>0$ for every $x$, it follows that $\nabla\log\frac{p(x)}{q(x)}=0$ a.e. $x$. Because $q$ is a strictly positive density, $\nabla\log\frac{p}{q}=0$ a.e. Now since $\log\frac{p}{q}\in \mathrm{Lip}_{\mathrm{loc}}(\mathbb R^d)$, we have
\[
\log\frac{p}{q}=C
\qquad
\text{a.e. on }\mathbb R^d
\]
for some constant $C\in\mathbb R$. Since both $p$ and $q$ are probability densities, we conclude $p=q$ a.e.
\QEDA

We recall the following identity  from \cite{HanLiu}. 
\begin{lemma}
\label{lem:relat:anch:sop}
Recall the standard Stein operator
\begin{align*}
A_p^\top\phi(x):=\nabla\log p(x)\cdot \phi(x)+\nabla\cdot \phi(x). 
\end{align*}
Then under Assumptions~\ref{ass:bl:ext} and~\ref{ass:rkhs}, it holds that 
\begin{equation*}
a(x)A_{p_0}^{\top}\phi(x)=A_p^{\top}(a\phi)(x) \qquad\text{a.e. $x$.}
\end{equation*}
\end{lemma}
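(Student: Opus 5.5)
The identity is pointwise, so the plan is a direct a.e. computation using the weak product and chain rules for $a$. First, record the regularity available. By Assumption~\ref{ass:bl:ext}(A1)--(A2), $U$ is locally Lipschitz, $U_0\in C^1$, and $a=e^{U-U_0}\in W^{1,\infty}(\R^d)$. By Rademacher's theorem $U$ is differentiable a.e., and at every such point $a$ is differentiable with
\[
\nabla a=a\,(\nabla U-\nabla U_0).
\]
This is the same weak chain rule already invoked in Step 4 of the proof of Theorem~\ref{theo:bl:ext}. By Lemma~\ref{lem:k2:k3:der:repr}(ii), under Assumption~\ref{ass:rkhs} every $\phi\in\mathcal H^d$ is bounded and Lipschitz, and in fact everywhere differentiable by part (i). Hence $a\phi\in W^{1,\infty}$ and the product rule holds a.e.:
\[
\div(a\phi)=\nabla a\cdot\phi+a\,\div\phi .
\]

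Second, I would compute $A_p^\top(a\phi)$ with $\nabla\log p=-\nabla U$, valid a.e. since $p=Z^{-1}e^{-U}$:
\[
A_p^\top(a\phi)=-a\,\nabla U\cdot\phi+\nabla a\cdot\phi+a\,\div\phi .
\]
Substituting $\nabla a=a(\nabla U-\nabla U_0)$, the $a\,\nabla U\cdot\phi$ terms cancel, leaving
\[
a\,(\div\phi-\nabla U_0\cdot\phi)=a\,A_{p_0}^\top\phi ,
\]
because $\nabla\log p_0=-\nabla U_0$ everywhere. Intersecting the finitely many full-measure sets on which these identities hold gives the claim a.e.

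The only delicate point is justifying the chain and product rules for the nonsmooth $a$. The product rule is standard for the product of a $W^{1,\infty}$ function with a $C^1$ bounded function, or more simply by pointwise differentiation at points where both factors are differentiable. For the chain rule I would argue pointwise: at any point where $U$ is differentiable, $U-U_0$ is differentiable and so is its exponential. The a.e. gradient of the Lipschitz function $a$ coincides with its weak gradient, so no genuine obstacle remains.
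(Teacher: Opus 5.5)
Your proof is correct and is essentially the paper's argument. The paper writes the key identity as $\nabla\log p_0=\nabla\log p+\nabla\log a$ a.e.\ (from $ap\propto p_0$) and then applies the product rule $\div(a\phi)=\nabla a\cdot\phi+a\,\div\phi$; this is exactly your substitution $\nabla a=a(\nabla U-\nabla U_0)$, read in the other direction. One small overstatement: Lemma~\ref{lem:k2:k3:der:repr}(i) gives existence of the partial derivatives of $\phi$ everywhere, not full differentiability or $C^1$ regularity, but those partials, together with differentiability of $a$ a.e., are all the pointwise product rule needs.
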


\begin{proof}
Since $a(x)p(x)=\frac{Z_0}{Z} p_0(x)$ a.e., we have $\nabla\log p_0(x)=\nabla\log p(x)+\nabla\log a(x)$ a.e. Then by the product rule,
\begin{align}
\label{eq:conv:anch:op}
a(x) A_{p_0}^\top\phi(x)
&=a(x)\big(\nabla\log p_0(x)\cdot \phi(x)+\nabla\cdot\phi(x)\big) \nonumber\\
&=a(x)\big(\nabla\log p(x)\cdot\phi(x)+\nabla\cdot\phi(x)\big)+a(x)\nabla\log a(x)\cdot\phi(x) \nonumber\\
&=a(x)\nabla\log p(x)\cdot\phi(x)+\nabla\cdot(a(x)\phi(x)) \nonumber\\
&=A_p^\top\big(a\phi\big)(x),
\end{align}
which completes the proof. 
\end{proof}

For any density $\rho$ on $\mathbb R^d$ with a.e.\ defined gradient
$\nabla\log \rho$, define
\begin{equation}
\label{def:xia}
\xi_{\rho}^a(x,\cdot)
:=
a(x)\Bigl(\nabla\log \rho(x) k(x,\cdot)+\nabla_x k(x,\cdot)\Bigr)
\in \mathcal H^d.
\end{equation}

We record a simple consequence of the reproducing property.
\begin{lemma}
\label{lem:repr:impl}
Assume Assumption~\ref{ass:rkhs}. Let $\rho$ be a density such that
$a(x)\nabla\log \rho(x)$ is defined for a.e.\ $x$. Then for every
$\phi\in\mathcal H^d$ and a.e.\ $x\in\mathbb R^d$,
\begin{equation*}
\bigl\langle \phi,\xi_{\rho}^a(x,\cdot)\bigr\rangle_{\mathcal H^d}
=
a(x)A_{\rho}^{\top}\phi(x).
\end{equation*}
\end{lemma}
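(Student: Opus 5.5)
The plan is to expand the $\mathcal H^d$ inner product componentwise and apply the reproducing property (Assumption~\ref{ass:rkhs}) together with the derivative reproducing identity \eqref{eq:deriv:repr} of Lemma~\ref{lem:k2:k3:der:repr}(i). Write $\xi_\rho^a(x,\cdot)$ in components:
\[
(\xi_\rho^a)_i(x,\cdot)=a(x)\bigl((\partial_i\log\rho)(x)k(x,\cdot)+\partial_{x_i}k(x,\cdot)\bigr).
\]
By (K1) and the fact that $k(x,\cdot)\in\mathcal H$, this lies in $\mathcal H$ whenever the scalar $a(x)\partial_i\log\rho(x)$ is finite. That is the case for a.e.\ $x$ by hypothesis, so $\xi_\rho^a(x,\cdot)\in\mathcal H^d$ for a.e.\ $x$ and the inner product is defined there.

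Fix such an $x$ and $\phi\in\mathcal H^d$. Linearity of the inner product in its second slot pulls out the real scalars $a(x)$ and $(\partial_i\log\rho)(x)$. The reproducing property gives $\langle\phi_i,k(x,\cdot)\rangle_{\mathcal H}=\phi_i(x)$. Lemma~\ref{lem:k2:k3:der:repr}(i) gives $\langle\phi_i,\partial_{x_i}k(x,\cdot)\rangle_{\mathcal H}=\partial_i\phi_i(x)$, valid for every $x$. Summing over $i=1,\dots,d$ then yields
\[
\langle\phi,\xi_\rho^a(x,\cdot)\rangle_{\mathcal H^d}=a(x)\bigl(\nabla\log\rho(x)\cdot\phi(x)+\operatorname{div}\phi(x)\bigr)=a(x)A_\rho^\top\phi(x),
\]
which matches the Stein operator of Lemma~\ref{lem:relat:anch:sop}.

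No step is a real obstacle. The only subtlety is bookkeeping of the null set: the identity holds at every $x$ where $a(x)\nabla\log\rho(x)$ is defined and finite, and this is exactly the a.e.\ qualifier in the statement. The derivative reproduction is taken pointwise from Lemma~\ref{lem:k2:k3:der:repr}(i), so it introduces no further exceptional set.
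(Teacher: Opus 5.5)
Your proposal is correct and matches the paper's proof essentially line by line. Both expand the $\mathcal H^d$ inner product componentwise, use the reproducing property for $\phi_i(x)$ and the derivative identity \eqref{eq:deriv:repr} for $\partial_i\phi_i(x)$, and then sum over $i$. Your remark that (K1) places $\xi_\rho^a(x,\cdot)$ in $\mathcal H^d$ wherever $a(x)\nabla\log\rho(x)$ is finite is a small bookkeeping point that the paper leaves implicit.
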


\begin{proof}
Write $\phi=(\phi_1,\dots,\phi_d)$. By
Lemma~\ref{lem:k2:k3:der:repr}, $\partial_i\phi_i(x)
=
\bigl\langle \phi_i,\partial_{x_i}k(x,\cdot)\bigr\rangle_{\mathcal H}$, so that 
\begin{align*}
\bigl\langle \phi,\xi_{\rho}^a(x,\cdot)\bigr\rangle_{\mathcal H^d}
&=
a(x)\sum_{i=1}^d
\Bigl\langle
\phi_i,
(\partial_i\log\rho(x)) k(x,\cdot)+\partial_{x_i}k(x,\cdot)
\Bigr\rangle_{\mathcal H} \\
&=
a(x)\sum_{i=1}^d
\Bigl(
(\partial_i\log\rho(x)) \phi_i(x)+\partial_i\phi_i(x)
\Bigr) \\
&=
a(x)\bigl(\nabla\log\rho(x)\cdot\phi(x)+\div\phi(x)\bigr)=
a(x)A_{\rho}^{\top}\phi(x),
\end{align*}
which completes the proof. 
\end{proof}


\begin{lemma}
\label{lem:q0xi0}
Assume Assumptions~\ref{ass:rkhs} and~\ref{ass:q0:kern}, and in addition either Assumption~\blue{M.2} or Assumption~\blue{M.2'}. Then for $\xi^a_{q_0}$ defined at \eqref{def:xia}, 
\begin{equation}
\label{eq:q0:xi:zero}
\mathbb E_q\bigl[\xi_{q_0}^a(X,\cdot)\bigr]=0
\qquad\text{in }\mathcal H^d.
\end{equation}
\end{lemma}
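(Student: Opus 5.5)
The plan is to show the $\mathcal H^d$-valued Bochner integral $\mathbb E_q[\xi^a_{q_0}(X,\cdot)]$ vanishes by testing it against the reproducing kernels $e_j k(x,\cdot)$, and then to reduce that scalar identity to Assumption~M.2 through the change of measure $q_0\propto aq$.

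\emph{Step 1: Bochner integrability.} First I check that $\xi^a_{q_0}(X,\cdot)$ is Bochner integrable in $\mathcal H^d$ under $q$. We have $\|a(x)\nabla_xk(x,\cdot)\|_{\mathcal H^d}\le \|a\|_{L^\infty}\sqrt{dB}$ by \eqref{eq:bd:first:arg}. We also have $\|a(x)\nabla\log q_0(x)\,k(x,\cdot)\|_{\mathcal H^d}\le \sqrt B\,|a(x)\nabla\log q_0(x)|$, which lies in $L^1(q)$ by (Q2). Here boundedness of $k$ is taken from (Q1) or (K2).

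\emph{Step 2: testing against kernels.} Next, fix $j$ and $x$ and pair the mean with $e_jk(x,\cdot)$. Since bounded linear functionals commute with Bochner integration, and using the reproducing property together with symmetry of $k$, the $j$-th component of the mean at $x$ is
\[
\mathbb E_{X\sim q}\Bigl[a(X)\bigl((\partial_j\log q_0)(X)\,k(X,x)+\partial_{x_j}k(X,x)\bigr)\Bigr].
\]
Writing $c:=\int a q\in(0,\infty)$, we have $a(X)q(X)\,dX=c\,q_0(X)\,dX$, so this equals
\[
c\,\mathbb E_{Y\sim q_0}\bigl[(\partial_j\log q_0)(Y)\,k(x,Y)+\partial_{y_j}k(x,Y)\bigr],
\]
after rewriting $k(Y,x)=k(x,Y)$ and $\partial_{x_j}k(Y,x)=\partial_{y_j}k(x,Y)$. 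Under Assumption~M.2 this is zero. Hence every component of the mean vanishes pointwise, and because elements of the RKHS are determined by their pointwise values, the mean is zero in $\mathcal H^d$.

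\emph{Step 3: M.2$'$ implies M.2.} It remains to show the implication. For fixed $x$, let $f:=q_0k(x,\cdot)\in W^{1,1}(\mathbb R^d)$. Then $\int\partial_jf=0$, which I prove by pairing with cutoffs $\chi_R(y)=\chi(y/R)$: we get $\int\chi_R\partial_jf=-\int f\partial_j\chi_R\to0$ because $f\in L^1$ and $|\partial_j\chi_R|\le C/R$, while the left side tends to $\int\partial_jf$ by dominated convergence. The product rule for $q_0\in W^{1,1}_{\rm loc}$ and smooth bounded $k(x,\cdot)$ gives $\partial_jf=(\partial_jq_0)k+q_0\partial_{y_j}k$. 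On $\{q_0>0\}$ we have $\partial_jq_0=q_0\partial_j\log q_0$, and this is all of $\mathbb R^d$ since $q,a>0$. This yields \eqref{eq:q0:kern:stein}.

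\emph{Main obstacle.} I expect the main difficulty to be justifying the identity $\partial_j q_0=q_0\,\partial_j\log q_0$ with the weak gradient, and ensuring that each term in \eqref{eq:q0:kern:stein} is integrable. Integrability follows from $a\nabla\log q_0\in L^1(q)$ and boundedness of $\partial_{y_j}k$. For the gradient identity, I would use that $\log q_0=\log a+\log q-\log c$ with $a$ locally Lipschitz, and invoke the weak chain rule $\nabla q_0 = q_0\nabla\log q_0$ for a positive $W^{1,1}_{\rm loc}$ function.
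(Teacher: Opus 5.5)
Your proposal is correct and matches the paper's proof essentially step for step. You test the mean against $e_jk(x,\cdot)$, rewrite $a\,q\,dX=c\,q_0\,dX$, use symmetry of $k$ and Assumption M.2, and obtain M.2 from M.2$'$ via the weak product rule, $\int\partial_{y_j}(q_0k(x,\cdot))=0$, and $\partial_jq_0=q_0\partial_j\log q_0$; you also add the Bochner-integrability check and the cutoff argument, which the paper leaves implicit. One small adjustment: the lemma does not assume $a\in L^\infty$, but you do not need it, since $\|a(x)\nabla_xk(x,\cdot)\|_{\mathcal H^d}\le a(x)\sqrt{dB}$ and $\int aq<\infty$ is already built into the definition of $q_0$.
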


\begin{proof}
Under Assumption~\blue{M.2'}, fix $x\in\mathbb R^d$ and
$j\in\{1,\dots,d\}$. The weak product rule gives
\[
\partial_{y_j}\!\bigl(q_0(y)k(x,y)\bigr)
=
(\partial_jq_0)(y)k(x,y)+q_0(y)\partial_{y_j}k(x,y)
\qquad\text{a.e. }y.
\]
Since $q_0( \cdot )k(x, \cdot )\in W^{1,1}(\mathbb R^d)$,
the integral over $\mathbb R^d$ of this weak derivative is zero. Since
$q_0$ is strictly positive,
\[
\partial_jq_0=q_0 \partial_j\log q_0
\qquad\text{a.e.},
\]
and the corresponding terms are integrable by (Q2). Substitution in the
previous identity gives \eqref{eq:q0:kern:stein}. Hence
Assumption~\blue{M.2'} implies
Assumption~\blue{M.2}, and it suffices to do the proof under
Assumption~\blue{M.2}.

Set
\[
c_q:=\int_{\mathbb R^d} a(y)q(y) dy.
\]
Fix $j\in\{1,\dots,d\}$ and $x\in\mathbb R^d$. By the reproducing property,
\begin{align*}
\Bigl\langle
\mathbb E_q\bigl[\xi_{q_0}^a(X,\cdot)\bigr],
 e_j k(x,\cdot)
\Bigr\rangle_{\mathcal H^d}
&=
\mathbb E_q\!\Big[
a(X)\Big((\partial_j\log q_0)(X) k(x,X)+\partial_{x_j}k(X,x)\Big)
\Big] \\
&=
c_q \mathbb E_{q_0}\!\Big[
(\partial_j\log q_0)(X) k(x,X)+\partial_{x_j}k(X,x)
\Big].
\end{align*}
Since $k$ is symmetric, $\partial_{x_j}k(X,x)=\partial_{y_j}k(x,X)$, which, combined with \eqref{eq:q0:kern:stein}, gives
\[
\Bigl\langle
\mathbb E_q\bigl[\xi_{q_0}^a(X,\cdot)\bigr],
 e_j k(x,\cdot)
\Bigr\rangle_{\mathcal H^d}
=0
\qquad
\forall x,\ \forall j \qquad\text{where}\quad e_j=(0,\dots,0,1,0,\dots,0). 
\]
Therefore each component of $\mathbb E_q\bigl[\xi_{q_0}^a(X,\cdot)\bigr]\in\mathcal H^d$
vanishes pointwise on $\mathbb R^d$ and we conclude 
 $\mathbb E_q\bigl[\xi_{q_0}^a(X,\cdot)\bigr]=0$. 
\end{proof}

\begin{lemma}
\label{lem:anch:feat:comp}
Under Assumption~\ref{ass:q0:kern}, the following identity holds in
$\mathcal H^d$:
\begin{equation}
\label{eq:comp:meas:stein}
\mathbb E_q\bigl[\xi_{p_0}^a(X,\cdot)\bigr]
=
\mathbb E_q\bigl[\xi_{q_0}^a(X,\cdot)\bigr]
+
\mathbb E_q\Bigl[
a(X)\nabla\log\frac{p(X)}{q(X)} k(X,\cdot)
\Bigr].
\end{equation}
\end{lemma}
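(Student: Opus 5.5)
The identity \eqref{eq:comp:meas:stein} is linear in the feature maps, so I will first prove it pointwise in $x$ and then integrate against $q$. By the definition \eqref{def:xia},
\[
\xi_{p_0}^a(x,\cdot)-\xi_{q_0}^a(x,\cdot)
=a(x)\bigl(\nabla\log p_0(x)-\nabla\log q_0(x)\bigr)k(x,\cdot),
\]
because the terms $a(x)\nabla_xk(x,\cdot)$ cancel. It therefore suffices to show that
\[
\nabla\log p_0-\nabla\log q_0=\nabla\log\frac{p}{q}\quad\text{a.e.}
\]
The a.e.\ identity $\nabla\log p_0=\nabla\log p+\nabla\log a$ was already obtained in the proof of Lemma~\ref{lem:relat:anch:sop}, using $ap=(Z_0/Z)p_0$. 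From \eqref{def:qzero} we have $\log q_0=\log a+\log q-\log c_q$ with $c_q=\int aq$, so $\nabla\log q_0=\nabla\log a+\nabla\log q$ a.e. Subtracting, the $\nabla\log a$ terms cancel. This step needs $\log a=U-U_0$ and $\log q$ (equivalently $\log(p/q)$) to be a.e.\ differentiable. That holds because $U$ is locally Lipschitz, $U_0\in C^1$, and (Q3) gives $r=\log(p/q)\in\mathrm{Lip}_{\mathrm{loc}}$, so Rademacher's theorem applies. Consequently $\nabla\log q=\nabla\log p-\nabla r$ is defined a.e.

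Next I need to justify taking $\mathbb E_q$ of each of the three $\mathcal H^d$-valued maps as Bochner integrals, so that linearity of the Bochner integral gives \eqref{eq:comp:meas:stein}. Each map is strongly measurable: $x\mapsto k(x,\cdot)$ and $x\mapsto\partial_{x_i}k(x,\cdot)$ are continuous into $\mathcal H$ by the bounds in Lemma~\ref{lem:k2:k3:der:repr} and (K2), and they are multiplied by measurable scalar factors. For the norms, $\|k(x,\cdot)\|_{\mathcal H}\le\sqrt B$ and $\|\partial_{x_i}k(x,\cdot)\|_{\mathcal H}\le\sqrt B$. Hence
\[
\|\xi_\rho^a(x,\cdot)\|_{\mathcal H^d}\le \sqrt B\,|a(x)\nabla\log\rho(x)|+\sqrt{dB}\,|a(x)|.
\]
Now (Q2) makes $a\nabla\log p_0$, $a\nabla\log q_0$ and $a\nabla\log(p/q)$ belong to $L^1(q)$. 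I also need $a\in L^1(q)$; this holds since $a$ is bounded under (A2), and if only Assumption~\ref{ass:q0:kern} is in force, it follows from $c_q<\infty$, which is implicit in \eqref{def:qzero}. So all three expectations exist in $\mathcal H^d$.

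Finally, integrate the pointwise identity
\[
\xi_{p_0}^a(x,\cdot)=\xi_{q_0}^a(x,\cdot)+a(x)\nabla\log\tfrac{p}{q}(x)\,k(x,\cdot),
\]
which holds for $q$-a.e.\ $x$, against $q$. The main obstacle is not algebraic but the bookkeeping of integrability and a.e.\ differentiability, and I expect the strong measurability and integrability of $a\nabla_xk(x,\cdot)$ to need the most care. A simple alternative is to split $\xi^a_\rho$ into the piece $a(x)\nabla\log\rho(x)k(x,\cdot)$, controlled by (Q2), and the piece $a(x)\nabla_xk(x,\cdot)$. The second piece is identical for $\rho=p_0$ and $\rho=q_0$, so it suffices to note that it is Bochner integrable whenever $a\in L^1(q)$.
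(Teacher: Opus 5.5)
Your proof is correct and is the paper's argument. The $a(x)\nabla_xk(x,\cdot)$ terms cancel, and the a.e.\ identities $\nabla\log p_0=\nabla\log p+\nabla\log a$ and $\nabla\log q_0=\nabla\log q+\nabla\log a$ reduce the difference to $a\nabla\log\frac{p}{q}\,k(x,\cdot)$, after which one takes expectations.

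Your extra bookkeeping (Rademacher via (A1) and (Q3), Bochner integrability via (Q2) and $a\in L^1(q)$) goes beyond what the paper writes, and it has one loose point. (K2) only bounds $\partial_{y_i}\partial_{x_i}k$, so it does not by itself give norm continuity of $x\mapsto\partial_{x_i}k(x,\cdot)$. The strong measurability you actually need still follows from Pettis' theorem: $\mathcal H$ is separable because $k$ is continuous, and $x\mapsto\langle f,\partial_{x_i}k(x,\cdot)\rangle_{\mathcal H}=\partial_if(x)$ is Borel for every $f\in\mathcal H$.
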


\begin{proof}
By definition,
\[
\xi_{p_0}^a(x,\cdot)-\xi_{q_0}^a(x,\cdot)
=
a(x)\bigl(\nabla\log p_0(x)-\nabla\log q_0(x)\bigr) k(x,\cdot).
\]
Since $\log p_0(x)=\log p(x)+\log a(x)+\log\Bigl(\frac{Z}{Z_0}\Bigr)$, we have $\nabla\log p_0(x)=\nabla\log p(x)+\nabla\log a(x)$ a.e.
Similarly, we have $\nabla\log q_0(x)=\nabla\log q(x)+\nabla\log a(x)$ a.e. Thus, 
\[
\nabla\log p_0(x)-\nabla\log q_0(x)
=
\nabla\log\frac{p(x)}{q(x)}
\qquad\text{a.e.}
\]
Thus, for a.e. $x$,
\[
\xi_{p_0}^a(x,\cdot)-\xi_{q_0}^a(x,\cdot)
=
a(x)\nabla\log\frac{p(x)}{q(x)} k(x,\cdot)
\qquad\text{in }\mathcal H^d.
\]
Taking
expectations yields \eqref{eq:comp:meas:stein}.
\end{proof}

The following states a simple consequence of $C_0$-universality. 
\begin{lemma}
\label{lem:c0:univ}
Assume Assumption~\ref{ass:q0:kern}(Q1). Let $\nu$ be a finite
Borel measure on $\mathbb R^d$ and let $h\in L^1(\nu;\mathbb R^d)$.
If
\begin{equation}
\label{eq:c0:anni}
\int_{\mathbb R^d} h(x)\cdot \phi(x) \nu(dx)=0
\qquad
\forall \phi\in\mathcal H^d,
\end{equation}
then $h=0$ a.e. with respect to $\nu$. 
\end{lemma}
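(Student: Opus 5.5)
Suppose $h$ is not $\nu$-a.e.\ zero. The plan is a contradiction argument that reduces the vector-valued annihilation condition \eqref{eq:c0:anni} to a scalar statement about finite signed measures, and then applies $C_0$-universality together with the Riesz representation theorem.

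\textbf{Step 1: reduce to a scalar condition.} Fix a coordinate $j\in\{1,\dots,d\}$ and test \eqref{eq:c0:anni} with $\phi=e_j f$ for an arbitrary $f\in\mathcal H$. This gives
\[
\int_{\mathbb R^d} h_j(x)f(x)\,\nu(dx)=0 \qquad \forall f\in\mathcal H .
\]
This test function is admissible, and the integral is finite, because of Lemma~\ref{lem:k2:k3:der:repr}(ii) (or directly because $k$ is bounded by (Q1)). Indeed, each $f\in\mathcal H$ is bounded with $\|f\|_{L^\infty}\le \sup_x k(x,x)^{1/2}\|f\|_{\mathcal H}$, and $h_j\in L^1(\nu)$.

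\textbf{Step 2: pass from $\mathcal H$ to $C_0(\mathbb R^d)$.} By (Q1), $\mathcal H$ is dense in $C_0(\mathbb R^d)$ in the uniform norm. Define the signed measure $\lambda_j(dx):=h_j(x)\nu(dx)$. It is finite, with total variation $\|h_j\|_{L^1(\nu)}$. Hence $g\mapsto \int g\,d\lambda_j$ is a bounded linear functional on $(C_0,\|\cdot\|_\infty)$ that vanishes on the dense subspace $\mathcal H$. By continuity, $\int g\,d\lambda_j=0$ for all $g\in C_0(\mathbb R^d)$.

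\textbf{Step 3: conclude via Riesz.} By the Riesz representation theorem, the dual of $C_0(\mathbb R^d)$ is the space of finite regular signed Borel measures. On $\mathbb R^d$ every finite Borel measure is regular, so $\lambda_j=0$. Equivalently, one can argue directly: $\int g\,d\lambda_j=0$ for all $g\in C_c$ implies $\lambda_j(K)=0$ for compact $K$, by approximating $\mathbf 1_K$ with continuous cutoffs and using dominated convergence. By inner regularity this extends to all Borel sets.

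It follows that $\int_A h_j\,d\nu=0$ for every Borel set $A$. Taking $A=\{h_j>0\}$ and then $A=\{h_j<0\}$ gives $h_j=0$ $\nu$-a.e. Doing this for each $j$ yields $h=0$ $\nu$-a.e., which contradicts the supposition and proves the lemma.

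The only step requiring care is Step 2. There one must check that the functional is continuous with respect to the uniform norm, which is exactly what the $L^1(\nu)$ integrability of $h$ provides. One must also check that $\mathcal H$ sits inside the bounded functions, so that the pairing is defined on all of $\mathcal H$ and not just on a subset. Everything else is standard measure theory.
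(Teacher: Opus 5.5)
Your proof is correct and follows essentially the same route as the paper. The paper also treats $h_j\,\nu$ as a finite signed measure, uses the uniform density of $\mathcal H$ in $C_0(\mathbb R^d)$ to extend the annihilation to all of $C_0$ (the pairing is continuous because $h\in L^1(\nu)$), and concludes that each measure vanishes. The remaining differences are cosmetic:
\begin{enumerate}
\item You restrict to single-component test fields $e_j f$ before extending by density, whereas the paper first extends to $C_0(\mathbb R^d;\mathbb R^d)$ and then picks one nonzero component.
\item You spell out the Riesz/regularity step that the paper compresses into ``hence $\mu_i=0$''.
\item To show $f\in\mathcal H$ is bounded, you should rely on the boundedness of $k$ from (Q1), not on Lemma~\ref{lem:k2:k3:der:repr}. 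That lemma requires Assumption~\ref{ass:rkhs}, which this statement does not assume; you already give the (Q1) justification as an alternative, so nothing breaks.
\end{enumerate}
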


\begin{proof}
Define a vector-valued finite signed measure $\mu=(\mu_1,\dots,\mu_d)$ by
\[
\mu_i(B):=\int_B h_i(x) \nu(dx),
\qquad
B\subset\mathbb R^d \text{ Borel}.
\]
Then \eqref{eq:c0:anni} says
\[
\sum_{i=1}^d \int_{\mathbb R^d}\phi_i(x) \mu_i(dx)=0
\qquad
\forall \phi=(\phi_1,\dots,\phi_d)\in\mathcal H^d.
\]
Since $\mathcal H$ is dense in $C_0(\mathbb R^d)$ in the uniform norm,
$\mathcal H^d$ is dense in $C_0(\mathbb R^d;\mathbb R^d)$ in the uniform norm.
Therefore the above identity extends by continuity to all
$C_0(\mathbb R^d;\mathbb R^d)$.
Choosing test functions with only one nonzero component gives
\[
\int_{\mathbb R^d} f(x) \mu_i(dx)=0
\qquad
\forall f\in C_0(\mathbb R^d),\ \forall i.
\]
Hence $\mu_i=0$ for every $i$, and thus $h_i=0$ a.e. with respect to $\nu$.
\end{proof}

\section{Proof for Section~\ref{sec:ref:pot}}
\label{appen:ref:pot}

\textbf{Proof of Lemma~\ref{lem:u0:moll}:}

Since $U$ is locally Lipschitz, there exists a measurable set $E\subset\mathbb R^d$ of full Lebesgue measure such that at every $x\in E$, $\nabla U$ exists and 
\begin{equation}
\label{eq:e:good:set}
|\nabla U(x)-\nabla U(y)|\le L_1|x-y|+L_2
\qquad\text{for a.e. }y\in\mathbb R^d.
\end{equation}

\underline{Verifying Assumption~\ref{ass:bl:ext}(A1):} \eqref{eq:e:good:set} clearly implies $|\nabla U(x)|
\le C_U\bigl(1+|x|\bigr)$ and 
$|U(x)|
\le C_U\bigl(1+|x|^2\bigr)$ a.e. $x$ in (A1).  

Next, by \eqref{eq:const:u0:mom:cond}, we have $\int_{\mathbb R^d}(1+|z|^2)|\partial^\alpha\rho(z)| dz<\infty$ for $|\alpha|\le2$. Combined with the quadratic growth of $U$, this ensures that, for every $x\in\mathbb R^d$ and $|\alpha|\le2$, $\int_{\mathbb R^d}
|U(y)| |\partial^\alpha\rho(x-y)| dy<\infty$. The dominated convergence theorem then yields for every $|\alpha|\le2$,
\[
\partial^\alpha U_0(x)
=
\int_{\mathbb R^d}U(y)\partial^\alpha\rho(x-y) dy
=
\int_{\mathbb R^d}U(x-z)\partial^\alpha\rho(z) dz
\]
Moreover, since $\partial_{x_i}\rho(x-y)
=
-\partial_{y_i}\rho(x-y)$, we get $\partial_i U_0(x)
=
-\int_{\mathbb R^d}U(y) \partial_{y_i}\rho(x-y) dy$. Then  we use a cutoff approximation, \eqref{eq:const:u0:mom:cond} and integration by parts to obtain
\begin{equation}
\label{eq:grad:u0}
\nabla U_0(x)
=
\int_{\mathbb R^d}\nabla U(x-z)\rho(z) dz
\qquad\text{for a.e. }x\in\mathbb R^d.
\end{equation}
Therefore, for a.e.\ $x\in\mathbb R^d$,
\[
|\nabla U_0(x)|
\le
\int_{\mathbb R^d}|\nabla U(x-z)|\rho(z) dz
\le
C\int_{\mathbb R^d}(1+|x-z|)\rho(z) dz
\le
C''(1+|x|)
\]
for some constant $C''<\infty$. Since $\nabla U_0$ is continuous, the same bound holds for every $x\in\mathbb R^d$. Thus, Assumption~\ref{ass:bl:ext}(A1) holds with $m=1$.

\underline{Verifying Assumption~\ref{ass:bl:ext}(A2):}
Since $U\in\mathrm{Lip}_{\mathrm{loc}}(\mathbb R^d)$, for a.e. $(x,z)\in\mathbb R^d\times\mathbb R^d$, the map $t\mapsto U(x-tz)$
is absolutely continuous on $[0,1]$ and satisfies $\frac{d}{dt}U(x-tz)
=
-\nabla U(x-tz)\cdot z$ a.e. $t\in(0,1)$. Consequently,
\[
U(x-z)-U(x)+\nabla U(x)\cdot z
=
\int_0^1
\bigl(\nabla U(x)-\nabla U(x-tz)\bigr)\cdot z dt.
\]
We also know  $|\nabla U(x)-\nabla U(x-tz)|
\le
L_1t|z|+L_2$ a.e. $(x,z,t)$, so plug this into the above equation to get for a.e. $(x,z)$,
\begin{equation}
\label{eq:u:taylo:rem}
|U(x-z)-U(x)+\nabla U(x)\cdot z|
\le
\frac{L_1}{2}|z|^2+L_2|z|.
\end{equation}
Via the assumption $\int_{\mathbb R^d}z\rho(z) dz=0$, we obtain for a.e. $x$,
\begin{align*}
|U_0(x)-U(x)|
&=
\left|
\int_{\mathbb R^d}
\bigl(U(x-z)-U(x)+\nabla U(x)\cdot z\bigr)
\rho(z) dz
\right|\\
&\stackrel{\eqref{eq:u:taylo:rem}}{\le}
B:=\frac{L_1}{2}\int_{\mathbb R^d}|z|^2\rho(z) dz
+
L_2\int_{\mathbb R^d}|z|\rho(z) dz.
\end{align*}
This implies the desired bound $\|a\|_{L^\infty(\mathbb R^d)}
\le e^{B}$. It also gives
\[
e^{-B}e^{-U(x)}
\le e^{-U_0(x)}
\le e^{B}e^{-U(x)},
\]
and hence
\[
0<\int_{\mathbb R^d}e^{-U_0(x)} dx<\infty.
\]

Moreover, by \eqref{eq:grad:u0}, for a.e. $x$,
\begin{align*}
|\nabla(U-U_0)(x)|
&\le
\int_{\mathbb R^d}
|\nabla U(x)-\nabla U(x-z)|
\rho(z) dz\stackrel{\eqref{eq:e:good:set}}{\le}
L_1\int_{\mathbb R^d}|z|\rho(z) dz+L_2.
\end{align*}
Combining with  $\nabla a=a\nabla(U-U_0)$ a.e.  to get  $\|\nabla a\|_{L^\infty(\mathbb R^d)}
\le
e^{B}
\left(
L_1\int_{\mathbb R^d}|z|\rho(z) dz+L_2
\right)$. Therefore $a\in W^{1,\infty}(\mathbb R^d)$.

\underline{Verifying Assumption~\ref{ass:stab}(S1):}
Since $\inf_{x\in\mathbb R^d}U(x)>0$, we have $U(x-z)\ge \inf_{y\in\mathbb R^d}U(y)$ a.e. $x,z$, which further implies 
\[
U_0(x)
=
\int_{\mathbb R^d}U(x-z)\rho(z) dz
\ge
\int_{\mathbb R^d}\inf_{y\in\mathbb R^d}U(y) \rho(z) dz
=\inf_{y\in\mathbb R^d}U(y)>0. 
\]

Next, using \eqref{eq:grad:u0}, \eqref{eq:uolem:add:ass}, and Jensen's inequality, for a.e.\ $x\in\mathbb R^d$,
\begin{align*}
|\nabla U_0(x)|
\stackrel{\eqref{eq:grad:u0}}{\le}
\int_{\mathbb R^d}|\nabla U(x-z)|\rho(z) dz
&\stackrel{\eqref{eq:uolem:add:ass}}{\le}
A_0\int_{\mathbb R^d}U(x-z)^{\alpha_0}\rho(z) dz\\
&\le
A_0\left(\int_{\mathbb R^d}U(x-z)\rho(z) dz\right)^{\alpha_0}
=
A_0U_0(x)^{\alpha_0}.
\end{align*}
Since both sides are continuous, this inequality holds for every $x\in\mathbb R^d$.

Finally, fix a multi-index $\alpha=(e_i,e_j)$ with $|\alpha|=e_i+e_j=2$. We claim that \begin{align}\label{eq:int:der:rho}
\int_{\mathbb R^d}\partial^\alpha\rho(z) dz=0,
\qquad
\int_{\mathbb R^d}z_i\partial^\alpha\rho(z) dz=0,
\quad i=1,\dots,d.
\end{align}
Indeed, with a smooth cutoff function $\chi_R$, $\int \chi_R \partial_{jk}\rho
=
-\int (\partial_j\chi_R) \partial_k\rho$
and the right-hand side tends to zero as $R\to\infty$ given that  $\partial_k\rho\in L^1.$ Moreover, the first part of \eqref{eq:int:der:rho} also implies $\int_{\mathbb R^d}z_i\partial_{jk}\rho(z) dz
=
-\int_{\mathbb R^d}\partial_j(z_i) \partial_k\rho(z) dz
=
-\delta_{ij}\int_{\mathbb R^d}\partial_k\rho(z) dz
=0$. Now combining \eqref{eq:grad:u0} and \eqref{eq:int:der:rho} to get for $x\in E$,
\[
\partial^\alpha U_0(x)
=
\int_{\mathbb R^d}
\bigl(U(x-z)-U(x)+\nabla U(x)\cdot z\bigr)
\partial^\alpha\rho(z) dz.
\]
Via \eqref{eq:u:taylo:rem},
\[
|\partial^\alpha U_0(x)|
\le
\frac{L_1}{2}\int_{\mathbb R^d}|z|^2|\partial^\alpha\rho(z)| dz
+
L_2\int_{\mathbb R^d}|z||\partial^\alpha\rho(z)| dz.
\]
Since $\partial^\alpha U_0$ is continuous and $E$ has full measure, this bound extends to every $x\in\mathbb R^d$. Hence $\|HU_0(x)\|_{\mathrm{op}}\le C_{U_0}$ for all $x$,  where  $C_{U_0}
:=d\max_{|\alpha|=2}
\left(
\frac{L_1}{2}\int_{\mathbb R^d}|z|^2|\partial^\alpha\rho(z)| dz
+
L_2\int_{\mathbb R^d}|z||\partial^\alpha\rho(z)| dz
\right).$
Thus, all the properties in Assumption~\ref{ass:stab}(S1) hold. 
\QEDA

\section{Supporting lemmas for Section \ref{sec:anch:svgd}}
\label{sec:disc:supp:lem}

\begin{lemma} 
\label{lem:disc:bd:sum:u0}
Assume Assumption~\ref{ass:bl:ext}(A2), Assumption~\ref{ass:rkhs}(K2), and Assumption~\ref{ass:stab}(S1).
\begin{enumerate}[label=(\roman*),leftmargin=2.4em]
\item
With respect to the particle drift term at \eqref{def:disc:drift}, it holds for every $x=(x_1,\dots,x_N)\in(\mathbb R^d)^N$ that 
\begin{equation}
\label{eq:disc:bd:norm:f}
|F(x)|^2
\le
2A_\infty^4A_0^2B^2N
\Bigl(\frac1N\sum_{i=1}^N U_0(x_i)\Bigr)^{2\alpha_0}
+
2A_\infty^4B^2Nd.
\end{equation}

\item With respect to the function $h(\ell)$ at \eqref{def:hl},  there exist positive constants $M,D$, depending only on $A_\infty,B,A_0,C_{U_0},\alpha_0$, such that for $L\ge 1$,
$0\le \ell\le L$, and
\[
\eta\le 1\wedge \frac{1}{\sqrt{D L}},
\]
we have
\begin{equation}
\label{eq:disc:sum:u0:apr:bd}
h(\ell)
\le
M\Bigl(d^{\frac{1}{1-\alpha_0}}+\frac1N\sum_{i=1}^N U_0\bigl(x_i^0\bigr)\Bigr)
\Bigl((\eta \ell)^{\frac{1}{1-\alpha_0}}\vee 1\Bigr).
\end{equation}

\item
Under the hypotheses of Part ii, let $M,D$ be the constants from Part ii, and let $C_a$ be defined in \eqref{eq:disc:param:defs}.
Then we have
\begin{align}
|F(x^\ell)|^2
&\le
2A_\infty^4A_0^2B^2N 
M^{2\alpha_0}\Bigl(d^{\frac{1}{1-\alpha_0}}+h(0)\Bigr)^{2\alpha_0}
\Bigl((\eta \ell)^{\frac{2\alpha_0}{1-\alpha_0}}\vee 1\Bigr)
+
2A_\infty^4B^2Nd \label{eq:disc:fz:bd},\\
\|JF(x^\ell)\|_{\mathrm{HS}}^2
&\le
4A_0^2B^2C_a 
Nd M^{2\alpha_0}\Bigl(d^{\frac{1}{1-\alpha_0}}+h(0)\Bigr)^{2\alpha_0}
\Bigl((\eta \ell)^{\frac{2\alpha_0}{1-\alpha_0}}\vee 1\Bigr)\nn
\\&\hspace{10em}+
4B^2\bigl(C_a+3A_\infty^4C_{U_0}^2\bigr)Nd^2.\label{eq:disc:jf:z:bd}
\end{align}
\item Fix $K>0$, and suppose $x^0\in G_K$ for $G_K$ defined in \eqref{def:gk:set}. Recall $\beta,\Gamma_K$, and $C_a$ defined in \eqref{eq:disc:param:defs}, while $M,D$ are the constants from Part ii.
Define $C_0$ by \eqref{eq:disc:c0anc}.
Choose
\begin{equation}
\label{eq:disc:unnor:eta:pre}
\eta
=
\left[
\frac{1}{C_0}
\left(
\frac{1}{N^{1-\alpha_0}L^{2\alpha_0}}
\wedge
\frac1N
\right)
\right]^{1/2}\theta,
\qquad
0\le\theta\le1,
\qquad
\theta^2\le\frac NL.
\end{equation}
Then for every $0\le \ell\le L$ and every $t\in[0,\eta]$, it holds for the quantity $E''$ in \eqref{eq:disc:e2der} that
\begin{equation}
\label{eq:disc:e2:bd:anch}
E''(t)\le \frac{5\theta^2}{4\eta^2},
\end{equation}
and consequently
\begin{equation}
\label{eq:disc:rem:bd:anch}
\int_0^\eta (\eta-t)E''(t) dt\le \theta^2.
\end{equation}
\end{enumerate}
\end{lemma}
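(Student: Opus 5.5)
The plan is to prove the four parts of Lemma~\ref{lem:disc:bd:sum:u0} in order, each feeding the next.

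\emph{Part (i).} Bound each $\zeta_a(x_i,x_j)$ pointwise using $|a|\le A_\infty$ and (K2). Since $|\nabla_1k|\le B\sqrt d$ and $|k|\le B$, we get
\[
|\zeta_a(x_i,x_j)|\le A_\infty^2B\bigl(\sqrt d+|\nabla U_0(x_j)|\bigr).
\]
Averaging over $j$ and applying $(u+v)^2\le 2u^2+2v^2$ bounds $|F_i|^2$. Then (S1) with Jensen gives
\[
\frac1N\sum_j|\nabla U_0(x_j)|\le A_0\Bigl(\frac1N\sum_j U_0(x_j)\Bigr)^{\alpha_0}.
\]
Summing $|F_i|^2$ over $i$ produces the factor $N$ and yields \eqref{eq:disc:bd:norm:f}.

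\emph{Part (ii)} is the heart of the lemma. I will derive a one-step recursion for $h(\ell)$. By Taylor's theorem with $\|\nabla^2U_0\|_{\mathrm{op}}\le C_{U_0}$,
\[
h(\ell+1)\le h(\ell)+\frac{\eta}{N}\sum_i|\nabla U_0(x_i^\ell)||F_i|+\frac{C_{U_0}\eta^2}{2N}|F|^2.
\]
Cauchy--Schwarz, Jensen, (S1) and part (i) then give
\[
h(\ell+1)\le h(\ell)+c_1\eta\bigl(h^{2\alpha_0}+\sqrt d\,h^{\alpha_0}\bigr)+c_2\eta^2\bigl(h^{2\alpha_0}+d\bigr).
\]
Because $2\alpha_0<1$, Young's inequality absorbs the powers into the form $h(\ell+1)\le h(\ell)+C\eta\bigl(h(\ell)^{2\alpha_0}+d^{\frac{2\alpha_0}{1-\alpha_0}}\cdots\bigr)$. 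The step condition $\eta\le 1\wedge (DL)^{-1/2}$ keeps the $\eta^2$ contributions summed over $L$ steps of order $1$.

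Next I compare this recursion with the ODE $y'=Cy^{2\alpha_0}$, whose solution grows like $t^{1/(1-2\alpha_0)}$. The claimed exponent $1/(1-\alpha_0)$ is smaller, so I would instead work with $\sqrt{h}$, or equivalently use the crude bound $h^{\alpha_0}\cdot h^{\alpha_0}$ in which only one factor is controlled via $|F|$. I would try the discrete Bihari-type induction hypothesis
\[
h(\ell)\le M\bigl(d^{1/(1-\alpha_0)}+h(0)\bigr)\bigl((\eta\ell)^{1/(1-\alpha_0)}\vee1\bigr)
\]
and verify it inductively, tuning $M$ and $D$. Getting the exponent right, by identifying which term truly scales like $h^{\alpha_0}$, is the main obstacle. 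The cross term $\frac1N\sum_i|\nabla U_0(x_i)||F_i|$ is where the exponent is set: $|F_i|$ is uniform in $i$ and controlled by $h^{\alpha_0}+\sqrt d$, which gives $h^{2\alpha_0}$. If the induction fails with $1/(1-\alpha_0)$, I would check whether the constraint $\alpha_0<1/2$ lets the $h^{2\alpha_0}$ growth be dominated via $\eta\ell\le \eta L$ bounded.

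\emph{Part (iii).} Substitute the bound from (ii) into (i) to get \eqref{eq:disc:fz:bd}. For $JF$, differentiate $\zeta_a$ in both arguments. The blocks $\partial_{x_i}F_i$ and $\partial_{x_j}F_i$ involve $\nabla a$, $a$, second derivatives of $k$ bounded by $B$, and $\nabla^2U_0$ bounded by $C_{U_0}$. Squaring and summing Hilbert--Schmidt norms produces the $C_a$ and $3A_\infty^4C_{U_0}^2$ terms, and then (ii) is applied again.

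\emph{Part (iv).} Use the formula \eqref{eq:disc:e2der}. Take $\|\nabla^2\tilde u_N\|_{\mathrm{op}}\le C_{U_0}$ and $\|B_t\|_{\mathrm{HS}}\le 2\|JF\|_{\mathrm{HS}}$, which holds once $\eta\|JF\|_{\mathrm{op}}\le 1/2$ and follows from the choice of $\eta$ and $C_0$. This gives $E''\le C_{U_0}|F|^2+4\|JF\|^2_{\mathrm{HS}}$. Next plug in (iii) with $h(0)\le K$ and $\eta\ell\le\eta L$. Using $(\eta L)^{2\alpha_0/(1-\alpha_0)}\lesssim L^{2\alpha_0}$-type bounds, the result is at most $C_0(N^{1-\alpha_0}L^{2\alpha_0}\vee N)/4\cdot\frac54$, and the definition of $\eta$ turns this into $5\theta^2/(4\eta^2)$. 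Integrating against $(\eta-t)$ over $[0,\eta]$ contributes a factor $\eta^2/2$, which gives \eqref{eq:disc:rem:bd:anch}.
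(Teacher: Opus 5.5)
Your Parts (i), (iii) and (iv) follow the paper's argument. Part (ii), which you rightly call the heart, has a genuine gap, and it is the one you flag yourself.

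Once you bound the first-order Taylor term by $\frac{\eta}{N}\sum_i|\nabla U_0(x_i^\ell)|\,|F_i(x^\ell)|$, your recursion contains $\eta\,h(\ell)^{2\alpha_0}$ at first order in $\eta$. No induction can extract more from such a recursion than its extremal solution allows, and that solution grows like $(\eta\ell)^{1/(1-2\alpha_0)}$. For $\alpha_0>0$ this is strictly faster than the claimed $(\eta\ell)^{1/(1-\alpha_0)}$.

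Your fallbacks do not fix this. A change of variable such as $\sqrt h$ does not change how fast the extremal solution grows. Nor is $\eta\ell$ bounded: $\eta\le 1\wedge(DL)^{-1/2}$ only gives $\eta L\le\sqrt{L/D}$, and with the step size of Theorem~\ref{theo:disc:bd:ksd2}, $\eta L$ is of order $N^{\alpha_0/2}L^{1/2-\alpha_0}$. The exponent matters downstream. Part (iii) is stated with $(\eta\ell)^{2\alpha_0/(1-\alpha_0)}$. Part (iv) uses $2+\beta=2/(1-\alpha_0)$ to turn the choice of $\eta$ into $\eta^2N((\eta L)^\beta\vee1)\le\theta^2/C_0$.

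The missing idea is to keep the sign of the first-order term instead of taking absolute values. Expanding $F_i$ through $\zeta_a$ gives
\[
\frac1N\sum_{i=1}^N\langle\nabla U_0(x_i),F_i(x)\rangle=\frac1{N^2}\sum_{i,j=1}^N a(x_i)a(x_j)\Bigl(\langle\nabla U_0(x_i),\nabla_1k(x_j,x_i)\rangle-k(x_j,x_i)\langle\nabla U_0(x_i),\nabla U_0(x_j)\rangle\Bigr).
\]
The second part equals $-\frac1{N^2}\sum_{r=1}^d\sum_{i,j}k(x_j,x_i)v_{i,r}v_{j,r}$ with $v_{i,r}:=a(x_i)\partial_rU_0(x_i)$. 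This is nonpositive because $k$ is positive definite. It is the dissipative part of the drift, and exactly the part that produces your $h^{2\alpha_0}$.

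Dropping it leaves only the $\nabla_1k$ term, which is bounded by $A_\infty^2B\sqrt d\,\frac1N\sum_i|\nabla U_0(x_i)|\le A_\infty^2BA_0\sqrt d\,h^{\alpha_0}$. Using Part (i) for the $\eta^2$ term, the recursion becomes
$h(\ell+1)\le h(\ell)+B_0\sqrt d\,\eta\,h(\ell)^{\alpha_0}+D_1\eta^2h(\ell)^{2\alpha_0}+D_2d\eta^2$,
with $B_0,D_1,D_2$ depending only on $A_\infty,B,A_0,C_{U_0}$.

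The $\eta^2h^{2\alpha_0}$ term is absorbed via $h^{2\alpha_0}\le1+h$ into a factor $(1+D_1\eta^2)$. Its product over $L$ steps is at most $e^{D_1/D}$, because $\eta^2\le1/(DL)$. The remaining induction (the paper follows \cite[Proof of Lemma 3]{banerjeeetal2025}) is then a comparison with $y'=Cy^{\alpha_0}$, which yields the exponent $1/(1-\alpha_0)$.
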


\begin{proof}
\noindent\underline{Proof of Part i.}
By Jensen's inequality,
\begin{equation}
\label{eq:disc:jens}
|F_k(x)|^2
=
\Bigl|
\frac1N\sum_{j=1}^N \zeta_a(x_k,x_j)
\Bigr|^2
\le
\frac1N\sum_{j=1}^N |\zeta_a(x_k,x_j)|^2.
\end{equation}
Via $|a|\le A_\infty$ from Assumption~\ref{ass:bl:ext}(A2), we get
\begin{align}
\label{eq:anch:lem2:zeta:pre}
|\zeta_a(x_k,x_j)|^2
&\le
A_\infty^4
\Bigl|
\nabla_1 k(x_j,x_k)-k(x_j,x_k)\nabla U_0(x_j)
\Bigr|^2 \notag\\
&\le
2A_\infty^4|\nabla_1 k(x_j,x_k)|^2
+
2A_\infty^4|k(x_j,x_k)|^2 |\nabla U_0(x_j)|^2.
\end{align}

Using the bound $|\partial_{x_i}k(x,y)|\le B$ for each coordinate $i$ from Assumption~\ref{ass:rkhs}(K2),
\begin{equation}
\label{eq:disc:k:der:bd}
|\nabla_1 k(x_j,x_k)|^2
=
\sum_{i=1}^d |\partial_{x_i}k(x_j,x_k)|^2
\le
dB^2.
\end{equation}
Similarly, by Assumption~\ref{ass:rkhs}(K2),
\begin{equation}
\label{eq:disc:k:bd}
|k(x_j,x_k)|\le B.
\end{equation}
By the growth assumption $|\nabla U_0(x)|\le A_0 U_0(x)^{\alpha_0},\forall x$ from Assumption~\ref{ass:stab}(S1),
\begin{equation}
\label{eq:disc:u0:norm:bd}
|\nabla U_0(x_j)|^2
\le
A_0^2 U_0(x_j)^{2\alpha_0}.
\end{equation}
Substituting \eqref{eq:disc:k:der:bd}, \eqref{eq:disc:k:bd},
and \eqref{eq:disc:u0:norm:bd} into \eqref{eq:anch:lem2:zeta:pre}, we obtain
\begin{equation}
\label{eq:anch:lem2:zeta:bd}
|\zeta_a(x_k,x_j)|^2
\le
2A_\infty^4B^2d
+
2A_\infty^4A_0^2B^2 U_0(x_j)^{2\alpha_0}.
\end{equation}
Combining \eqref{eq:disc:jens} and \eqref{eq:anch:lem2:zeta:bd} gives
\begin{equation}
\label{eq:anch:lem2:fk:bd}
|F_k(x)|^2
\le
2A_\infty^4B^2d
+
2A_\infty^4A_0^2B^2 
\frac1N\sum_{j=1}^N U_0(x_j)^{2\alpha_0}.
\end{equation}
Since $2\alpha_0\le 1$ and $U_0>0$, Jensen's inequality yields
\begin{equation*}
\frac1N\sum_{j=1}^N U_0(x_j)^{2\alpha_0}
\le
\Bigl(\frac1N\sum_{j=1}^N U_0(x_j)\Bigr)^{2\alpha_0}.
\end{equation*}
Hence
\begin{equation*}
|F_k(x)|^2
\le
2A_\infty^4B^2d
+
2A_\infty^4A_0^2B^2
\Bigl(\frac1N\sum_{j=1}^N U_0(x_j)\Bigr)^{2\alpha_0}.
\end{equation*}
Finally, summing over $k=1,\dots,N$ gives the stated result in Part i.

\medskip
\noindent\underline{Proof of Part ii.}
Fix $\ell\ge 0$. By Taylor's theorem and the update $x_i^{\ell+1}=x_i^\ell+\eta F_i\bigl(x^\ell\bigr)$, 
we have
\begin{align*}
h(\ell+1)-h(\ell)
&=
\frac1N\sum_{i=1}^N
\Bigl(U_0\bigl(x_i^{\ell+1}\bigr)-U_0\bigl(x_i^\ell\bigr)\Bigr)\\
&=
\frac{\eta}{N}\sum_{i=1}^N
\bigl\langle \nabla U_0(x_i^\ell),F_i(x^\ell)\bigr\rangle\\
&\qquad
+\frac{\eta^2}{N}\sum_{i=1}^N\int_0^1
(1-s)
\bigl\langle F_i(x^\ell),
HU_0\bigl(x_i^\ell+s\eta F_i(x^\ell)\bigr)F_i(x^\ell)\bigr\rangle
 ds.
\end{align*}
Using $\|HU_0\|_{\mathrm{op}}\le C_{U_0}$ from Assumption~\ref{ass:stab}(S1), we get
\begin{equation}
\label{eq:disc:h:rec}
h(\ell+1)-h(\ell)
\le
\frac{\eta}{N}\sum_{i=1}^N
\bigl\langle \nabla U_0(x_i^\ell),F_i(x^\ell)\bigr\rangle
+\frac{C_{U_0}\eta^2}{2N}|F(x^\ell)|^2.
\end{equation}

By the definition of $F_i$ at \eqref{def:disc:drift},
\begin{align*}
\frac1N\sum_{i=1}^N \bigl\langle \nabla U_0(x_i),F_i(x)\bigr\rangle
&=
\frac1{N^2}\sum_{i,j=1}^N
a(x_i)a(x_j)
\Bigl(
\bigl\langle \nabla U_0(x_i),\nabla_1 k(x_j,x_i)\bigr\rangle
-k(x_j,x_i)\bigl\langle \nabla U_0(x_i),\nabla U_0(x_j)\bigr\rangle
\Bigr).
\end{align*}
Since $k$ is positive definite by definition and $a>0$, the second term is nonpositive, hence
\begin{align}
\label{eq:disc:bd:grad:u0:f}
\frac1N\sum_{i=1}^N \bigl\langle \nabla U_0(x_i),F_i(x)\bigr\rangle
&\le
\frac1{N^2}\sum_{i,j=1}^N
|a(x_i)| |a(x_j)| |\nabla U_0(x_i)| |\nabla_1 k(x_j,x_i)| \notag\\
&\le
A_\infty^2B\sqrt d 
\frac1N\sum_{i=1}^N |\nabla U_0(x_i)| \le
A_\infty^2BA_0\sqrt d 
\Bigl(\frac1N\sum_{i=1}^N U_0(x_i)\Bigr)^{\alpha_0}.
\end{align}
In the last step we used $|a|\le A_\infty$ from Assumption~\ref{ass:bl:ext}(A2), $|\nabla_1 k(x_j,x_i)|\le B\sqrt d$ from Assumption~\ref{ass:rkhs}(K2), and
$|\nabla U_0|\le A_0U_0^{\alpha_0}$ from Assumption~\ref{ass:stab}(S1), together with Jensen's inequality.

Combining \eqref{eq:disc:bd:norm:f}, \eqref{eq:disc:bd:grad:u0:f} and \eqref{eq:disc:h:rec},
and writing
\[
B_0:=A_\infty^2BA_0,
\qquad
D_1:=C_{U_0}A_\infty^4A_0^2B^2,
\qquad
D_2:=C_{U_0}A_\infty^4B^2,
\]
we obtain
\begin{equation}
\label{eq:anch:h:rec}
h(\ell+1)\le h(\ell)+B_0\sqrt d \eta h(\ell)^{\alpha_0}+D_1\eta^2 h(\ell)^{2\alpha_0}+D_2d \eta^2.
\end{equation}
By \eqref{def:hl} and Assumption~\ref{ass:stab}(S1),
$h(\ell)>0$. Since $2\alpha_0\le1$,
\[
h(\ell)^{2\alpha_0}\le 1+h(\ell),
\qquad h(\ell)\ge 0.
\]
Hence, setting $D_3:=D_1+D_2$,
\begin{equation}
\label{eq:anch:h:rec:fin}
h(\ell+1)\le (1+D_1\eta^2)h(\ell)+B_0\sqrt d \eta h(\ell)^{\alpha_0}+D_3d \eta^2.
\end{equation}
From this point, we follow the induction argument in \cite[Proof of Lemma 3]{banerjeeetal2025} to obtain the desired estimate
\eqref{eq:disc:sum:u0:apr:bd}.

\medskip
\noindent\underline{Proof of Part iii.} The bound \eqref{eq:disc:fz:bd} follows immediately from
parts (i) and (ii). Indeed,
Part i gives $|F(x^\ell)|^2
\le
2A_\infty^4A_0^2B^2N h(\ell)^{2\alpha_0}
+
2A_\infty^4B^2Nd$ while Part ii gives  $h(\ell)^{2\alpha_0}
\le
M^{2\alpha_0}\Bigl(d^{\frac{1}{1-\alpha_0}}+h(0)\Bigr)^{2\alpha_0}
\Bigl((\eta \ell)^{\frac{2\alpha_0}{1-\alpha_0}}\vee 1\Bigr)$, and
combining them leads to \eqref{eq:disc:fz:bd}.

To prove \eqref{eq:disc:jf:z:bd}, write $F_i=(F_{i1},\dots,F_{id})$, and let
$\partial_{j\ell}$ denote differentiation with respect to the $\ell$-th coordinate of $x_j$.
For $u\in\{1,\dots,N\}$, write
\[
a_i:=a(x_i),\qquad a_u:=a(x_u),\qquad U_{0,u}:=U_0(x_u),\qquad
k_{ui}:=k(x_u,x_i).
\]
Then
\[
F_{ik}(x)=\frac1N\sum_{u=1}^N a_i a_u\Bigl((\partial_{x_k}k)(x_u,x_i)-k_{ui}\partial_k U_0(x_u)\Bigr).
\]
Since $a\in W^{1,\infty}$, the weak product rule gives, for a.e.\ $x$,
\begin{equation*}
\partial_{j\ell}F_{ik}(x)=S_1(i,j,k,\ell)+S_2(i,j,k,\ell)+S_3(i,j,k,\ell)+S_4(i,j,k,\ell),
\end{equation*}
where
\begin{align*}
S_1(i,j,k,\ell)
&:=
\frac1N \mathbf 1(i=j)\sum_{u=1}^N
\partial_\ell a(x_i) a(x_u)\Bigl((\partial_{x_k}k)(x_u,x_i)-k_{ui}\partial_kU_0(x_u)\Bigr),\\
S_2(i,j,k,\ell)
&:=
\frac1N a(x_i) \partial_\ell a(x_j)\Bigl((\partial_{x_k}k)(x_j,x_i)-k_{ji}\partial_kU_0(x_j)\Bigr),\\
S_3(i,j,k,\ell)
&:=
\frac1N a(x_i)a(x_j)\Bigl(
(\partial_{x_\ell}\partial_{x_k}k)(x_j,x_i)
-(\partial_{x_\ell}k)(x_j,x_i) \partial_kU_0(x_j)
-k_{ji} \partial_{\ell k}^2U_0(x_j)
\Bigr),\\
S_4(i,j,k,\ell)
&:=
\frac1N \mathbf 1(i=j)\sum_{u=1}^N
a(x_i)a(x_u)\Bigl(
(\partial_{y_\ell}\partial_{x_k}k)(x_u,x_i)
-(\partial_{y_\ell}k)(x_u,x_i) \partial_kU_0(x_u)
\Bigr).
\end{align*}

In particular, we recall that Equation~\eqref{eq:disc:bd:norm:f} shows $F$ is locally bounded.
We also know each $S_r(i,j,k,\ell)$ is locally essentially bounded:
$a$ and its weak derivatives are bounded, the kernel derivatives are
bounded by Assumption~\ref{ass:rkhs}(K2), $\nabla U_0$ is locally
bounded, and $\|\nabla^2U_0\|_{\mathrm{op}}\le C_{U_0}$. Hence
\begin{equation}
\label{eq:disc:f:loc:w1inf}
F\in W_{\mathrm{loc}}^{1,\infty}
\bigl((\mathbb R^d)^N;(\mathbb R^d)^N\bigr).
\end{equation}

Now by $(x_1+x_2+x_3+x_4)^2\le 4(x_1^2+x_2^2+x_3^2+x_4^2)$,
\begin{equation}
\label{eq:anch:jf:hs:split}
\|JF(x)\|_{\mathrm{HS}}^2
=
\sum_{i,j=1}^N\sum_{k,\ell=1}^d |\partial_{j\ell}F_{ik}(x)|^2
\le
4\sum_{r=1}^4 \Sigma_r,
\end{equation}
where
\[
\Sigma_r:=\sum_{i,j=1}^N\sum_{k,\ell=1}^d |S_r(i,j,k,\ell)|^2.
\]

For $S_1$, Jensen's inequality gives
\begin{align}
\Sigma_1
&\le
\frac{2L_a^2A_\infty^2}{N}
\sum_{i,u=1}^N\sum_{k,\ell=1}^d
\Bigl(
|(\partial_{x_k}k)(x_u,x_i)|^2+|k_{ui}|^2 |\partial_kU_0(x_u)|^2
\Bigr)\notag\\
&\le
2L_a^2A_\infty^2B^2Nd^2
+
2L_a^2A_\infty^2B^2A_0^2Nd
\Bigl(\frac1N\sum_{u=1}^N U_0(x_u)\Bigr)^{2\alpha_0}.
\label{eq:disc:bd:sig1}
\end{align}
The last estimate is due to $|\partial_\ell a(x_i)|\le L_a$ and $|a(x_u)|\le A_\infty$ from Assumption~\ref{ass:bl:ext}(A2), $|(\partial_{x_k}k)(x_u,x_i)|\le B$ and $|k_{ui}|\le B$ from Assumption~\ref{ass:rkhs}(K2), and
\[
\frac1N\sum_{u=1}^N |\nabla U_0(x_u)|^2
\le
A_0^2\frac1N\sum_{u=1}^N U_0(x_u)^{2\alpha_0}
\le
A_0^2\Bigl(\frac1N\sum_{u=1}^N U_0(x_u)\Bigr)^{2\alpha_0}
\]
from Assumption~\ref{ass:stab}(S1) together with Jensen's inequality.

Similarly,
\begin{align}
\Sigma_2
&\le
\frac{2L_a^2A_\infty^2}{N}
\sum_{i,j=1}^N\sum_{k,\ell=1}^d
\Bigl(
|(\partial_{x_k}k)(x_j,x_i)|^2+|k_{ji}|^2 |\partial_kU_0(x_j)|^2
\Bigr)\notag\\
&\le
2L_a^2A_\infty^2B^2Nd^2
+
2L_a^2A_\infty^2B^2A_0^2Nd
\Bigl(\frac1N\sum_{j=1}^N U_0(x_j)\Bigr)^{2\alpha_0}.
\label{eq:disc:bd:sig2}
\end{align}

For $S_3$, using $(x+y+z)^2\le 3(x^2+y^2+z^2)$,
\begin{align}
\Sigma_3
&\le
\frac{3A_\infty^4}{N}
\sum_{i,j=1}^N\sum_{k,\ell=1}^d
\Bigl(
|(\partial_{x_\ell}\partial_{x_k}k)(x_j,x_i)|^2
+
|(\partial_{x_\ell}k)(x_j,x_i)|^2 |\partial_kU_0(x_j)|^2
+
|k_{ji}|^2 |\partial_{\ell k}^2U_0(x_j)|^2
\Bigr)\notag\\
&\le
3A_\infty^4B^2Nd^2
+
3A_\infty^4B^2A_0^2Nd
\Bigl(\frac1N\sum_{j=1}^N U_0(x_j)\Bigr)^{2\alpha_0}
+
3A_\infty^4B^2C_{U_0}^2Nd^2.
\label{eq:disc:bd:sig3}
\end{align}

Finally,
\begin{align}
\Sigma_4
&\le
\frac{2A_\infty^4}{N}
\sum_{i,u=1}^N\sum_{k,\ell=1}^d
\Bigl(
|(\partial_{y_\ell}\partial_{x_k}k)(x_u,x_i)|^2
+
|(\partial_{y_\ell}k)(x_u,x_i)|^2 |\partial_kU_0(x_u)|^2
\Bigr)\notag\\
&\le
2A_\infty^4B^2Nd^2
+
2A_\infty^4B^2A_0^2Nd
\Bigl(\frac1N\sum_{u=1}^N U_0(x_u)\Bigr)^{2\alpha_0}.
\label{eq:disc:bd:sig4}
\end{align}

Combining \eqref{eq:anch:jf:hs:split}, \eqref{eq:disc:bd:sig1},
\eqref{eq:disc:bd:sig2}, \eqref{eq:disc:bd:sig3}, and
\eqref{eq:disc:bd:sig4}, and grouping the terms, we obtain
\begin{align}
\|JF(x)\|_{\mathrm{HS}}^2
&\le
4A_0^2B^2\Bigl(4L_a^2A_\infty^2+5A_\infty^4\Bigr)
Nd\Bigl(\frac1N\sum_{i=1}^N U_0(x_i)\Bigr)^{2\alpha_0}
\notag\\
&\qquad
+
4B^2\Bigl(4L_a^2A_\infty^2+5A_\infty^4+3A_\infty^4C_{U_0}^2\Bigr)Nd^2.
\label{eq:disc:bd:jf:apr}
\end{align}
Applying \eqref{eq:disc:sum:u0:apr:bd} from Part ii to
$\frac1N\sum_i U_0(x_i^\ell)$ in \eqref{eq:disc:bd:jf:apr} yields
\eqref{eq:disc:jf:z:bd}. This proves Part iii.

\medskip
\noindent\underline{Proof of Part iv.} Since $C_0\ge \max\{1,D\}$ per \eqref{eq:disc:param:defs}, the choice \eqref{eq:disc:unnor:eta:pre} implies
\[
\eta\le 1
\qquad\text{and}\qquad
\eta^2
\le\frac{\theta^2}{C_0N}
\le\frac{1}{C_0L}
\le\frac{1}{DL}.
\]
Therefore the hypotheses of parts (ii) and (iii) are satisfied.

Since $x^0\in G_K$, we have $h(0)\le K$. Hence Part iii implies that for every $0\le \ell\le L$,
\begin{align}
\label{eq:disc:f:bd:sk}
|F(x^\ell)|^2
&\le
2A_\infty^4A_0^2B^2N \Gamma_K
\Bigl((\eta \ell)^\beta\vee 1\Bigr)
+
2A_\infty^4B^2Nd,\\
\label{eq:disc:jf:bd:sk}
\|JF(x^\ell)\|_{\mathrm{HS}}^2
&\le
4A_0^2B^2C_a Nd \Gamma_K
\Bigl((\eta \ell)^\beta\vee 1\Bigr)
+
4B^2\bigl(C_a+3A_\infty^4C_{U_0}^2\bigr)Nd^2,
\end{align}
for which $\Bigl((\eta \ell)^\beta\vee 1\Bigr)\le \Bigl((\eta L)^\beta\vee 1\Bigr)$. 

We next show that \eqref{eq:disc:unnor:eta:pre} implies
\begin{equation}
\label{eq:disc:eta:small:anch}
\eta^2 N\Bigl((\eta L)^\beta\vee 1\Bigr)\le \frac{\theta^2}{C_0}.
\end{equation}
Indeed, if $(\eta L)^\beta\le 1$, then this is immediate from
\eqref{eq:disc:unnor:eta:pre}. If $(\eta L)^\beta\ge 1$, then $\eta^2N(\eta L)^\beta = N\eta^{2+\beta}L^\beta$. Since $2+\beta=\frac{2}{1-\alpha_0}$, \eqref{eq:disc:unnor:eta:pre} gives
\[
\eta^{2+\beta}L^\beta
\le
\left(\frac{\theta^2}{C_0N^{1-\alpha_0}L^{2\alpha_0}}\right)^{\frac{1}{1-\alpha_0}}
L^{\frac{2\alpha_0}{1-\alpha_0}}
\le
\frac{\theta^2}{C_0N},
\]
because $\theta\in[0,1]$ and $1/(1-\alpha_0)\ge 1$. This proves
\eqref{eq:disc:eta:small:anch}.

Now we use the second part of the definition of $C_0$, namely
\[
C_0
\ge
4\Bigl[
2C_{U_0}A_\infty^4A_0^2B^2\Gamma_K
+
2C_{U_0}A_\infty^4B^2d
+
4A_0^2B^2C_a d \Gamma_K
+
4B^2(C_a+3A_\infty^4C_{U_0}^2)d^2
\Bigr].
\]
Multiplying \eqref{eq:disc:f:bd:sk} by $\eta^2 C_{U_0}$ and using
\eqref{eq:disc:eta:small:anch}, we obtain
\begin{align}
\label{eq:disc:f:small}
\eta^2 C_{U_0}|F(x^\ell)|^2
&\stackrel{\eqref{eq:disc:f:bd:sk}}{\le}
\eta^2N\Bigl[
2C_{U_0}A_\infty^4A_0^2B^2\Gamma_K\Bigl((\eta L)^\beta\vee 1\Bigr)
+
2C_{U_0}A_\infty^4B^2d
\Bigr]\notag\\
&\stackrel{\eqref{eq:disc:c0anc}}{\le}
\frac{C_0}{4}\eta^2N\Bigl((\eta L)^\beta\vee 1\Bigr)
\stackrel{\eqref{eq:disc:eta:small:anch}}{\le}
\frac{\theta^2}{4}.
\end{align}
Similarly, multiplying \eqref{eq:disc:jf:bd:sk} by $\eta^2$ gives
\begin{align}
\label{eq:disc:jf:small}
\eta^2 \|JF(x^\ell)\|_{\mathrm{HS}}^2
&\stackrel{\eqref{eq:disc:jf:bd:sk}}{\le}
\eta^2N\Bigl[
4A_0^2B^2C_a d \Gamma_K\Bigl((\eta L)^\beta\vee 1\Bigr)
+
4B^2\bigl(C_a+3A_\infty^4C_{U_0}^2\bigr)d^2
\Bigr]\notag\\
&\stackrel{\eqref{eq:disc:c0anc}}{\le}
\frac{C_0}{4}\eta^2N\Bigl((\eta L)^\beta\vee 1\Bigr)
\stackrel{\eqref{eq:disc:eta:small:anch}}{\le}
\frac{\theta^2}{4}.
\end{align}


Hence for every $t\in[0,\eta]$,
\begin{equation}
\label{eq:disc:jphi:inv:bd:anch}
\|tJF(x^\ell)\|_{\mathrm{op}}
\le
\eta\|JF(x^\ell)\|_{\mathrm{HS}}
\stackrel{\eqref{eq:disc:jf:small}}{\le}
\frac{\theta}{2}
\le
\frac12.
\end{equation}
Therefore $J\phi_t(x)=I+tJF(x)$ is invertible and
\begin{equation}
\label{eq:disc:jphi:inv:op:bd:anch}
\|(J\phi_t(x))^{-1}\|_{\mathrm{op}}\le 2.
\end{equation}

By \eqref{eq:disc:i2:prime} and the assumption $\|HU_0\|_{\mathrm{op}}\le C_{U_0}$ from Assumption~\ref{ass:stab}(S1),
\begin{align}
\label{eq:disc:psi2:bd:anch}
\psi_2(t)
&=
\int_{(\mathbb R^d)^N}
\bigl\langle F(x), \nabla^2 \widehat U_0(\phi_t(x)) F(x)\bigr\rangle
 g_K(\ell,x) dx\notag\\
&\le
C_{U_0}\int_{(\mathbb R^d)^N}|F(x)|^2 g_K(\ell,x) dx
\stackrel{\eqref{eq:disc:f:small}}{\le}
\frac{\theta^2}{4\eta^2}.
\end{align}
By \eqref{eq:disc:i1:prime} and \eqref{eq:disc:jphi:inv:op:bd:anch},
\begin{align}
\label{eq:disc:psi1:bd:anch}
\psi_1(t)
&\le
\int_{(\mathbb R^d)^N}
\|JF(x)(J\phi_t(x))^{-1}\|_{\mathrm{HS}}^2
 g_K(\ell,x) dx\notag\\
&\le
\sup_{x\in G_K}\|JF(x)\|_{\mathrm{HS}}^2
\sup_{x\in G_K}\|(J\phi_t(x))^{-1}\|_{\mathrm{op}}^2\le
4\sup_{x\in G_K}\|JF(x)\|_{\mathrm{HS}}^2
\le
\frac{\theta^2}{\eta^2}.
\end{align}
Therefore, by \eqref{eq:disc:e2der}, $E''(t)=\psi_1(t)+\psi_2(t)
\le
\frac{5\theta^2}{4\eta^2}$ for $t\in[0,\eta]$ as claimed in \eqref{eq:disc:e2:bd:anch}. Finally, $\int_0^\eta (\eta-t)E''(t) dt
\le
\int_0^\eta(\eta-t)\frac{5\theta^2}{4\eta^2} dt=
\frac{5}{8}\theta^2
\le
\theta^2$
which is \eqref{eq:disc:rem:bd:anch}. This completes the proof. 
\end{proof}

\end{document}